\newcommand*\samethanks[1][\value{footnote}]{\footnotemark[#1]}
\title{Near-Optimal Representation Learning for Linear Bandits and Linear RL}
\author{\name Jiachen Hu\thanks{Equal contribution} \email NickH@pku.edu.cn \\
       \addr Key Laboratory of Machine Perception, MOE, School of EECS, Peking University
       \AND
       \name Xiaoyu Chen\samethanks \email cxy30@pku.edu.cn\\
       \addr Key Laboratory of Machine Perception, MOE, School of EECS, Peking University
       \AND
       \name Chi Jin \email chij@princeton.edu \\
       \addr Department of Electrical and Computer Engineering, Princeton University
       \AND
       \name Lihong Li \email llh@amazon.com \\
       \addr Amazon
       \AND
       \name Liwei Wang \email wanglw@cis.pku.edu.cn \\
       \addr Key Laboratory of Machine Perception, MOE, School of EECS, Peking University\\Center for Data Science, Peking University, Beijing Institute of Big Data Research}
\begin{document}

\maketitle








\begin{abstract}
This paper studies representation learning for multi-task linear bandits and multi-task episodic RL with linear value function approximation. We first consider the setting where we play $M$ linear bandits with dimension $d$ concurrently, and these bandits share a common $k$-dimensional linear representation so that $k\ll d$ and $k \ll M$. We propose a sample-efficient algorithm, MTLR-OFUL, which leverages the shared representation to achieve $\tilde{O}(M\sqrt{dkT} + d\sqrt{kMT} )$ regret, with $T$ being the number of total steps. Our regret significantly improves upon the baseline $\tilde{O}(Md\sqrt{T})$ achieved by solving each task independently. We further develop a lower bound that shows our regret is near-optimal when $d > M$. Furthermore, we extend the algorithm and analysis to multi-task episodic RL with linear value function approximation under low inherent Bellman error \citep{zanette2020learning}. To the best of our knowledge, this is the first theoretical result that characterizes the benefits of multi-task representation learning for exploration in RL with function approximation.
\end{abstract}

\section{Introduction}
\label{sec: introduction}

Multi-task representation learning is the problem of learning a common low-dimensional representation among multiple related tasks~\citep{caruana1997multitask}. This problem has become increasingly important 
in many applications such as natural language processing~\citep{ando2005framework,liu2019multi}, computer vision~\citep{li2014joint}, drug discovery~\citep{ramsundar2015massively}, and reinforcement learning~\citep{wilson2007multi,teh2017distral,d2019sharing}. In these cases, common information can be extracted from related tasks to improve data efficiency and accelerate learning.

While representation learning has achieved tremendous success in a variety of applications~\citep{bengio2013representation}, its theoretical understanding is still limited.  A widely accepted assumption in the literature is the existence of a common representation shared by different tasks.
For example, 
\citet{maurer2016benefit} proposed a general method to learn data representation in
multi-task supervised learning and learning-to-learn setting. \citet{du2020few} studied few-shot learning via representation learning with assumptions on a common representation among source and target tasks. \citet{tripuraneni2020provable} focused on the problem of multi-task linear regression with low-rank representation, 
and proposed algorithms with sharp statistical rates.
Inspired by the theoretical results in supervised learning, we take a step further to investigate provable benefits of representation learning for sequential decision making problems. 
First, we study the multi-task low-rank linear bandits problem, where $M$ tasks of $d$-dimensional (infinite-arm) linear bandits are concurrently learned for $T$ steps. The expected reward of arm $\bm{x}_i \in \dbR^d$ for task $i$ is  $\bm{\theta}_i^\top \bm{x}_i$, as determined by an unknown linear parameter $\bm{\theta}_i$.
To take advantage of the multi-task representation learning framework, we assume that $\bm{\theta}_i$'s lie in an unknown $k$-dimensional subspace of $\dbR^d$, where $k$ is much smaller compared to $d$ and $M$ \citep{yang2020provable}. 
The dependence among tasks makes it possible to achieve a regret bound better than solving each task independently.
Specifically, if the tasks are solved independently with standard algorithms such as OFUL~\citep{abbasi2011improved}, the total regret is $\tilde{O}(Md\sqrt{T})$.\footnote{$\tilde{O}$ hides the logarithmic factors.} By leveraging the common representation among tasks, we can achieve a better regret $\tilde{O}(M\sqrt{dkT} + d\sqrt{MkT})$. 
Our algorithm is also robust to the linear representation assumption when the model is misspecified. If the $k$-dimensional subspace approximates the rewards with error at most $\zeta$, our algorithm can still achieve regret $\tilde{O}(M\sqrt{dkT}+d\sqrt{kMT} + MT\sqrt{d}\zeta)$. 
Moreover, we prove a regret lower bound indicating that the regret of our algorithm is not improvable except for logarithmic factors in the regime $d > M$.

Compared with multi-task linear bandits, multi-task reinforcement learning is a more popular research topic with a long line of works in both theoretical side and empirical side \citep{taylor2009transfer,parisotto2015actor,liu2016decoding,teh2017distral,hessel2019multi,d2019sharing, arora2020provable}. We extend our algorithm for linear bandits to the multi-task episodic reinforcement learning with linear value function approximation under low inherent Bellman error \citep{zanette2020learning}. Assuming a low-rank linear representation across all the tasks, we propose a sample-efficient algorithm with regret $\tilde{O}(HM \sqrt{dkT} + Hd\sqrt{kMT} + HMT\sqrt{d} \caI)$ 
, where $k$ is the dimension of the 
low-rank representation, $d$ is the ambient dimension of state-action features, $M$ is the number of tasks, $H$ is the horizon, $T$ is the number of episodes, and $\caI$ denotes the inherent Bellman error. The regret significantly improves upon the baseline regret $\tilde{O}(HMd \sqrt{T} + HMT\sqrt{d} \caI)$ achieved by running ELEANOR algorithm \citep{zanette2020learning} for each task independently.
We also prove a regret lower bound $\Omega(Mk\sqrt{HT} + d\sqrt{HkMT} + HMT\sqrt{d} \caI)$. To the best of our knowledge, this is the first provably sample-efficient algorithm for exploration in multi-task linear RL.

\section{Preliminaries}
\label{sec: preliminaries}

\subsection{Multi-Task Linear Bandit}
\label{preliminaries:linear_bandits}
We study the problem of representation learning for linear bandits in which there are multiple tasks sharing common  low-dimensional features. Let $d$ be the ambient dimension and $k$ be the representation dimension. We play $M$ tasks concurrently for $T$ steps each. Each task $i \in [M]$ is associated with an unknown vector $\boldsymbol{\theta}_i \in \mathbb{R}^d$. In each step $t \in [T]$, the player chooses one action $\boldsymbol{x}_{t,i} \in \mathcal{A}_{t,i}$ for each task $i \in [M]$, and receives a batch of rewards $\{y_{t, i}\}_{i=1}^M$ afterwards, where $\mathcal{A}_{t,i}$ is the feasible action set (can even be chosen adversarially) for task $i$ at step $t$. 
The rewards received are determined by $y_{t,i} = \boldsymbol{\theta}_i^{\top}\boldsymbol{x}_{t,i} + \eta_{t,i}$, where the $\eta_{t,i}$ is the random noise. 

We use the total regret for $M$ tasks in $T$ steps to measure the performance of our algorithm, which is defined in the following way:
$$\operatorname{Reg}(T) \defeq \sum_{t=1}^{T}\sum_{i=1}^{M} \left(\left\langle \boldsymbol{x}_{t,i}^{\star}, \boldsymbol{\theta}_{i}\right\rangle-\left\langle \boldsymbol{x}_{t, i}, \boldsymbol{\theta}_{i}\right\rangle\right),$$
where $\boldsymbol{x}_{t,i}^{\star} = \argmax_{\boldsymbol{x} \in \mathcal{A}_{t,i}}\left\langle \boldsymbol{x}, \boldsymbol{\theta}_{i}\right\rangle$.

The main assumption is the existence of a common linear feature extractor.

\begin{assumption}
\label{assumptions:low_rank_bandits}
There exists a linear feature extractor $\boldsymbol{B} \in \mathbb{R}^{d\times k}$ and a set of $k$-dimensional coefficients $\{\boldsymbol{w}_i\}_{i=1}^{M}$ such that $\{\boldsymbol{\theta}_i\}_{i=1}^M$ satisfies $\boldsymbol{\theta}_i = \boldsymbol{B}\boldsymbol{w}_i$.
\end{assumption}

Define filtration $F_t$ to be the $\sigma$-algebra induced by $\sigma(\{\boldsymbol{x}_{1,i}\}_{i=1}^{M},\cdots, \{\boldsymbol{x}_{t+1,i}\}_{i=1}^{M},\{\eta_{1,i}\}_{i=1}^{M},\cdots,\{\eta_{t,i}\}_{i=1}^{M})$, then we have the following assumption.

\begin{assumption}
\label{assumptions:linear_bandits_regularity}
Following the standard regularity assumptions in linear bandits~\citep{abbasi2011improved,lattimore2020bandit}, we assume 
\begin{itemize}
    \item $\|\boldsymbol{\theta}_i\|_2 \leq 1, \forall i \in [M]$
    \item $\|\boldsymbol{x}\|_2 \leq 1, \forall \boldsymbol{x} \in \mathcal{A}_{t,i}, t \in [T], i \in [M]$
    \item $\eta_{t, i}$ is conditionally zero-mean $1$-sub-Gaussian random variable with regards to $F_{t-1}$.

\end{itemize}
\end{assumption}

For notation convenience, we use $\boldsymbol{X}_{t,i} = [\boldsymbol{x}_{1,i}, \boldsymbol{x}_{2,i}, \cdots, \boldsymbol{x}_{t,i} ]$ and $\boldsymbol{y}_{t,i} = [y_{1,i}, \cdots, y_{t,i}]^{\top}$ to denote the arms and the corresponding rewards collected for task $i \in [M]$ in the first $t$ steps, and we also use $\boldsymbol{\eta}_{t,i} = [\eta_{1,i},\eta_{2,i}, \cdots, \eta_{t,i}]^{\top}$ to denote the corresponding noise. We define $\boldsymbol{\Theta} \defeq [\boldsymbol{\theta}_1, \boldsymbol{\theta_2}, \cdots, \boldsymbol{\theta_M}]$ and $\boldsymbol{W} \defeq [\boldsymbol{w}_1, \boldsymbol{w}_2, \cdots, \boldsymbol{w}_M]$. For any positive definite matrix $\bm{A} \in \dbR^{d \times d}$, the Mahalanobis norm with regards to $\bm{A}$ is denoted by $\|\bm{x}\|_{\bm{A}} = \sqrt{\bm{x}^\top \bm{A} \bm{x}}$.



\subsection{Multi-Task Linear RL}

\label{preliminaries:linear_rl}
 We also study how this low-rank structure benefits the exploration problem with approximate linear value functions in multi-task episodic reinforcement learning. For reference convenience, we abbreviate our setting as multi-task LSVI setting, which is a natural extension of LSVI condition in the single-task setting ~\citep{zanette2020learning}.

Consider an undiscounted episodic MDP $\caM = (\caS, \caA, p, r, H)$ with state space $\caS$, action space $\caA$, and fixed horizon $H$. For any $h \in [H]$, any state $s_h \in \caS$ and action $a_h \in \caA$, the agent receives a reward $R_h(s_h, a_h)$ with mean $r_h(s_h, a_h)$, and transits to the next state $s_{h+1}$ according to the transition kernel $p_h\left(\cdot \mid s_h, a_h\right)$. The action value function for each state-action pair at step $h$ for some deterministic policy $\pi$ is defined as 
    $Q_h^\pi(s_h, a_h) \defeq r_h(s_h, a_h) + \dbE\left[\sum_{t=h+1}^H R_t(s_t, \pi_t(s_t)) \right]$
, and the state value function is defined as $V_h^\pi(s_h) = Q_h^\pi(s_h, \pi_h(s_h))$

Note that there always exists an optimal deterministic policy (under some regularity conditions) $\pi^*$ for which $V_h^{\pi^*}(s) = \max_\pi V_h^\pi(s) $ and $Q_h^{\pi^*}(s, a) = \max_\pi Q_h^{\pi}(s, a) $ for each $h \in [H]$. We denote $V_h^{\pi^*}$ and $Q_h^{\pi^*}$ by $V_h^*$ and $Q_h^*$ for short.

It's also convenient to define the Bellman optimality operator $\caT_h$ as 
$\caT_h(Q_{h+1})(s, a) \defeq r_h(s, a) + \dbE_{s' \sim p_h (\cdot \mid s, a)} \max_{a'} Q_{h+1}(s', a')$.



In the framework of single-task approximate linear value functions (see Section \ref{sec: linear_rl} for more discussions), we assume a feature map $\boldsymbol{\phi}: \caS \times \caA \to \dbR^{d}$ that maps each state-action pair to a $d$-dimensional vector. In case that $\caS$ is too large or continuous (e.g. in robotics), this feature map helps to reduce the problem scale from $|\caS| \times |\caA|$ to $d$. The value functions are the linear combinations of those feature maps, so we can define the function space at step $h \in [H]$ to be
$\caQ_h^\prime = \left\{Q_h(\boldsymbol{\theta}_h) \mid \boldsymbol{\theta}_h \in \Theta^\prime_h \right\}$ and $ \caV^\prime_h = \left\{V_h(\boldsymbol{\theta}_h) \mid \boldsymbol{\theta}_h \in \Theta^\prime_h \right\}$,
%
where $Q_h(\boldsymbol{\theta}_h)(s, a) \defeq \boldsymbol{\phi}(s, a)^\top \boldsymbol{\theta}_h$, and $V_h(\theta_h)(s) \defeq \max_a \boldsymbol{\phi}(s, a)^\top \boldsymbol{\theta}_h$. 

In order to find the optimal value function using value iteration with $\caQ_h$, we require that it is approximately close under $\caT_h$, as measured by the inherent Bellman error (or IBE for short). The IBE \citep{zanette2020learning} at step $h$ is defined as 
\begin{align}
\label{definitions:inherent_bellman_error}
\caI_h \defeq \!\!\! \sup_{Q_{h+1} \in \caQ_{h+1}} \inf_{Q_h \in \caQ_h} \sup_{s \in \caS, a \in \caA} \left|\left(Q_h - \caT_h(Q_{h+1})\right)(s, a)\right|.
\end{align}

In multi-task reinforcement learning, we have $M$ MDPs $\caM^1, \caM^2, ..., \caM^M$ (we use superscript $i$ to denote task $i$). Assume they share the same state space and action space, but have different rewards and transitions. 

To take advantage of the multi-task LSVI setting and low-rank representation learning, we define a joint function space for all the tasks as $\Theta
_h \defeq \{\left(\boldsymbol{B}_h\boldsymbol{w}^1_h, \boldsymbol{B}_h\boldsymbol{w}^2_h, \cdots, \boldsymbol{B}_h\boldsymbol{w}_h^M\right):  
\boldsymbol{B}_h \in \caO^{d \times k} , \boldsymbol{w}^i_h \in \caB^{k}, \boldsymbol{B}_h\boldsymbol{w}_h^i
\in \Theta_h^{i\prime}\}$,
%
%
where $\caO^{d \times k}$ is the collection of all orthonormal matrices in $\dbR^{d \times k}$. 


The induced function space is defined as 
\begin{align}
\caQ_h \defeq \{\left(Q_h^1\left(\boldsymbol{\theta}_h^1\right), Q_h^2\left(\boldsymbol{\theta}_h^2\right), \cdots, Q_h^M\left(\boldsymbol{\theta}_h^M\right)\right)
\mid \left(\boldsymbol{\theta}_h^1, \boldsymbol{\theta}_h^2, \cdots, \boldsymbol{\theta}_h^M\right) \in \Theta_h \}& \\
\caV_h \defeq \{\left(V_h^1\left(\boldsymbol{\theta}_h^1\right),V_h^2\left(\boldsymbol{\theta}_h^2\right), \cdots, V_h^M\left(\boldsymbol{\theta}_h^M\right)\right)
\mid \left(\boldsymbol{\theta}_h^1, \boldsymbol{\theta}_h^2, \cdots, \boldsymbol{\theta}_h^M\right) \in \Theta_h \} &  
\end{align}

The low-rank IBE at step $h$ for multi-task LSVI setting is a generalization of IBE (Eqn~\ref{definitions:inherent_bellman_error}) for the single-task setting, which is defined accordingly as
\begin{align}
\label{definitions:inherent_bellman_error_low_rank}
\caI_h^{\text{mul}} \defeq \sup_{\left\{Q_{h+1}^i\right\}_{i=1}^M \in \caQ_{h+1}} &\inf_{\left\{Q_{h}^i\right\}_{i=1}^M \in \caQ_{h}} \sup_{s \in \caS, a \in \caA, i \in [M]} \left|\left(Q_h^i - \caT_h^i(Q_{h+1}^i)\right)(s, a)\right|
\end{align}


\begin{assumption}
\label{assumptions:low_rank_small_ibe}
$\caI \defeq \sup_h \caI^{\text{mul}}_h$ is small with regards to the joint function space $\caQ_h$ for all $h$. 
\end{assumption}


When $\caI = 0$, Assumption \ref{assumptions:low_rank_small_ibe} can be regarded as a natural extension of Assumption \ref{assumptions:low_rank_bandits} in episodic RL. This is because there exists $\{\bar{\bm{\theta}}_h^{i*}\}_{i=1}^M \in \Theta_h$ such that $Q^{i*}_h = Q_h^i(\bar{\bm{\theta}}_h^{i*})$ for all $i \in [M]$ and $h \in [H]$ in the case $\caI = 0$. According to the definition of $\Theta_h$ we know that $\{\bar{\bm{\theta}}_h^{i*}\}_{i=1}^M$ also admit a low-rank property as Assumption \ref{assumptions:low_rank_bandits} indicates. When $\caI > 0$, then Assumption \ref{assumptions:low_rank_small_ibe} is an extension of misspecified multi-task linear bandits (discussed in Section \ref{sec: misspecifed linear bandits}) in episodic RL.

Define the filtration $\caF_{h, t}$ to be the $\sigma$-field induced by all the random variables up to step $h$ in episode $t$ (not include the rewards at step $h$ in episode $t$), then we have the following assumptions.

\begin{assumption}
\label{assumptions:linear_rl_regularity}
Following the parameter scale in \citep{zanette2020learning}, we assume
\begin{itemize} 
    \item $\left\|\boldsymbol{\phi}(s, a)\right\|_2 \leq 1, \forall (s, a) \in \caS \times \caA, h \in [H]$
    \item  $0 \leq Q^{\pi}_h(s,a) \leq 1, \forall (s, a) \in \caS \times \caA, h \in [H], \forall \pi$.
    \item There exists constant $D$ that for any $h \in [H]$ and any $\left\{\boldsymbol{\theta}_{h}^i\right\}_{i=1}^M \in \Theta_{h}$, it holds that $\|\boldsymbol{\theta}_{h}^i\|_2 \leq D, \forall i \in [M]$.
    \item For any fixed $\left\{Q_{h+1}^i\right\}_{i=1}^M \in \caQ_{h+1}$, the random noise $z_h^i(s, a) \defeq R_h^i(s, a) + \max_a Q_{h+1}^i\left(s', a\right) - \caT_h^i\left(Q_{h+1}^i\right)(s, a)$ is bounded in $[-1, 1]$ a.s., and is independent conditioned on $\caF_{h, t}$  for any $s \in \caS, a \in \caA, h \in [H], i \in [M]$, where the randomness is from reward $R$ and $s' \sim p_h\left(\cdot \mid s, a\right)$.
\end{itemize}
\end{assumption}


The first condition is a standard regularization condition for linear features. The second condition is on the scale of the problem. This scale of the exploration problem that the value function is bounded in $[0, 1]$ has also been studied in both tabular and linear setting \citep{zhang2020reinforcement, wang2020long,zanette2020learning}. The last two conditions are compatible with the scale of the problem. It's sufficient to assume the constant norm of $\boldsymbol{\theta}_{h}^i$ since the optimal value function is of the same scale. The last condition is standard in linear bandits \citep{abbasi2011improved, lattimore2020bandit} and RL~\citep{zanette2020learning}, and is automatically satisfied if $D = 1$.

The total regret of $M$ tasks in $T$ episodes is defined as
\begin{align}
\text{Reg}(T) \defeq \sum_{t=1}^T \sum_{i=1}^M \left(V_1^{i*} - V_1^{\pi_{t}^i} \right)\left(s_{1t}^i\right)
\end{align}
where $\pi_t^i$ is the policy used for task $i$ in episode $t$, and $s_{ht}^{i}$ denotes the state encountered at step $h$ in episode $t$ for task $i$. We assume $ M \geq 5, T \geq 5 $ throughout this paper.
\section{Related Work}
\label{sec: related work}

\paragraph{Multi-task Supervised Learning} The idea of multi-task representation learning at least dates back to \citet{caruana1997multitask,thrun1998learning,baxter2000model}. Empirically, representation learning has shown its great power in various domains. We refer readers to \citet{bengio2013representation} for a detailed review about empirical results. From the theoretical perspective, \citet{baxter2000model} performed the first theoretical analysis and gave sample complexity bounds using covering number. \citet{maurer2016benefit} considered the setting where all tasks are sampled from a certain distribution, and analysed the benefit of representation learning for both reducing the sample complexity of the target task. Following their results, \citet{du2020few} and \citet{tripuraneni2020provable} replaced the i.i.d assumption with a deterministic assumption on the data distribution and task diversity, and proposed efficient algorithms that can fully utilize all source data with better sample complexity. These results mainly focus on the statistical rate for multi-task supervised learning, and cannot tackle the exploration problem in bandits and RL.

\paragraph{Multi-task Bandit Learning} For multi-task linear bandits, the most related work is a recent paper by \citet{yang2020provable}. For linear bandits with infinite-action set, they firstly proposed an explore-then-exploit algorithm with regret $\tilde{O}(Mk\sqrt{T} + d^{1.5}k\sqrt{MT})$, which outperforms the naive approach with $\tilde{O}(Md\sqrt{T})$ regret in the regime where $M = \Omega(dk^2)$. Though their results are insightful, 
they require the action set for all tasks and all steps to be the same well-conditioned $d$-dimensional ellipsoids which cover all directions nicely with constant radius. Besides, they assume that the task parameters are diverse enough with $\boldsymbol{W}\boldsymbol{W}^{\top}$ well-conditioned, and the norm of $\boldsymbol{w}_i$ is lower bounded by a constant. These assumptions make the application of the theory rather restrictive to only a subset of linear bandit instances with benign structures. In contrast, our theory is more general since we do not assume the same and well-conditioned action set for different tasks and time steps, nor assume the benign properties of $\boldsymbol{w}_i$'s.

\paragraph{Multi-task RL} For multi-task reinforcement learning,  there is a long line of works from the empirical perspective \citep{taylor2009transfer,parisotto2015actor,liu2016decoding,teh2017distral,hessel2019multi}. From the theoretical perspective,
\citet{brunskill13mtrl} analyzed the sample complexity of multi-task RL in the tabular setting.
\citet{d2019sharing} showed that representation learning can improve the rate of approximate value iteration algorithm. \citet{arora2020provable} proved that representation learning can reduce the sample complexity of imitation learning. 

\paragraph{Bandits with Low Rank Structure} 
Low-rank representations have also been explored in single-task settings. \citet{jun2019bilinear} studied bilinear bandits with low rank representation. The mean reward in their setting is defined as the bilinear multiplication $\boldsymbol{x}^{\top} \boldsymbol{\Theta} \boldsymbol{y}$, where $\boldsymbol{x}$ and $\boldsymbol{y}$ are two actions selected at each step, and $\boldsymbol{\Theta}$ is an unknown parameter matrix with low rank. Their setting is further generalized by \citet{lu2020low}.  Furthermore, sparse linear bandits can be regarded as a simplified setting, where $\boldsymbol{B}$ is a binary matrix indicating the subset of relevant features in context $\boldsymbol{x}$~\citep{abbasi2012online,carpentier2012bandit,lattimore2015linear,hao2020high}.

\paragraph{Exploration in Bandits and RL} Our 
regret analysis is also related to exploration in single-task linear bandits and linear RL. Linear bandits have been extensively studied in recent years~\citep{auer2002using, dani2008stochastic,rusmevichientong2010linearly, abbasi2011improved, chu2011contextual,li2019nearly,li2019tight}. Our algorithm is most relevant to the seminal work of 
\citet{abbasi2011improved}, who applied self-normalized techniques to obtain near-optimal regret upper bounds. 
For single-task linear RL, recent years have witnessed a tremendous of works under different function approximation settings, including linear MDPs~\citep{yang2019sample, jin2020provably}, linear mixture MDPs~\citep{ayoub2020model,zhou2020nearly},  linear RL with low inherent Bellman error~\citep{zanette2020learning,zanette2020provably}, and MDPs with low Bellman-rank~\citep{jiang2017contextual}. Our multi-task setting is a natural extension of linear RL with low inherent Bellman error setting, which covers linear MDP setting as a special case~\citep{zanette2020learning}.
\section{Main Results for Linear Bandits}
\label{sec: linear_bandits}

In this section, we present our main results for multi-task linear bandits. 

\subsection{Construction of Confidence Sets}
\label{sec: construction of confidence sets}
A natural and successful method to design efficient algorithms for sequential decision making problem is the \textit{optimism in the face of uncertainty principle}. When applied to single-task linear bandits, the basic idea is to maintain a confidence set $\mathcal{C}_t$ for the parameter $\bm{\theta}$ based on history observations for each step $t \in [T]$. The algorithm chooses an optimistic estimation $\tilde{\boldsymbol{\theta}}_t = \operatorname{argmax}_{\boldsymbol{\theta} \in \mathcal{C}_{t}}\left(\max _{\boldsymbol{x} \in \mathcal{A}_{t}}\langle \boldsymbol{x}, \boldsymbol{\theta}\rangle\right)$ and then selects action $\boldsymbol{x}_t = \argmax_{\boldsymbol{x}_t \in \mathcal{A}_t} \langle \boldsymbol{x}, \tilde{\boldsymbol{\theta}}_t\rangle$, which maximizes the reward according to the estimation $\tilde{\boldsymbol{\theta}}_t$. In other words, the algorithm chooses the pair
$
\left(\boldsymbol{x}_{t}, \tilde{\boldsymbol{\theta}}_{t}\right)=\underset{(\boldsymbol{x}, \boldsymbol{\theta}) \in \mathcal{A}_{t} \times \mathcal{C}_{t}}{\operatorname{argmax}}\langle \boldsymbol{x}, \boldsymbol{\theta}\rangle
$.

For multi-task linear bandits, the main difference is that we need to tackle $M$ highly correlated tasks concurrently. To obtain tighter confidence bound, we maintain the confidence set $\mathcal{C}_t$ for $\boldsymbol{B}$ and $\{\boldsymbol{w}_i\}_{i=1}^{M}$, then choose the optimistic estimation $\tilde{\boldsymbol{\Theta}}_t$ for all tasks concurrently. To be more specific, the algorithm chooses an optimistic estimate $\tilde{\boldsymbol{\Theta}}_t = \argmax_{\boldsymbol{\Theta} \in \mathcal{C}_t} (\max_{\{x_i \in \mathcal{A}_{t,i}\}_{i=1}^{M}} \sum_{i=1}^{M}\left\langle \boldsymbol{x}_i, \boldsymbol{\theta}_{i}\right\rangle)$, and then selects action $\boldsymbol{x}_{t,i} = \argmax_{x_i \in \mathcal{A}_{t,i}} \left\langle \boldsymbol{x}_i, \tilde{\boldsymbol{\theta}}_{t,i}\right\rangle$ for each task $i \in [M]$. 

The main technical contribution is the construction of a tighter confidence set $\mathcal{C}_t$ for the estimation of $\boldsymbol{\Theta}$. At each step $t \in [T]$, we solve the following least-square problem based on the samples collected so far and obtain the minimizer $\hat{\boldsymbol{B}}_t$ and $\hat{\boldsymbol{W}}_t$:
\begin{align}
\label{eqn: optimization problem for linear bandits}
  \underset{\boldsymbol{B} \in \mathbb{R}^{d \times k}, {\boldsymbol{w}}_{1 .. M} \in \mathbb{R}^{k \times M}} 
{\arg \min } &\sum_{i=1}^{M} \left\|\boldsymbol{y}_{t-1,i}-\boldsymbol{X}_{t-1,i}^{\top} \boldsymbol{B} \boldsymbol{w}_{i}\right\|^{2}_2 \\
\mathrm{s.t.} \quad & \left\|\boldsymbol{B}\boldsymbol{w}_i\right\|_2 \leq 1, \forall i \in [M].
\end{align}

We maintain a high probability confidence set $\mathcal{C}_{t}$ for the unknown parameters $\boldsymbol{B}$ and $\{\boldsymbol{w}_i\}_{i=1}^{M}$. We calculate $\mathcal{C}_{t}$ in the following way:
\begin{align}
\label{eqn: confidence set for linear bandits}
\caC_t \defeq & \bigg\{\boldsymbol{\Theta} = \boldsymbol{B}\boldsymbol{W}:  \sum_{i=1}^{M}\left\|\hat{\boldsymbol{B}}_t \hat{\boldsymbol{w}}_{t,i}-\boldsymbol{B}\boldsymbol{w}_{i}\right\|^{2}_{\tilde{\boldsymbol{V}}_{t-1,i}(\lambda)} \leq L, \notag \\
&\phantom{=\;\;} \boldsymbol{B} \in \dbR^{d \times k}, \boldsymbol{w}_i \in \dbR^{k},  \left\|\boldsymbol{B}\boldsymbol{w}_i\right\|_2 \leq 1, \forall i \in [M]
\bigg\},
\end{align}
where 
$L = \tilde{O}(Mk + kd)$ (see Appendix \ref{sec: proof of lemma, confidence set for linear bandits} for the exact value)
and $\tilde{\boldsymbol{V}}_{t-1,i}(\lambda) = \boldsymbol{X}_{t-1,i}\boldsymbol{X}_{t-1,i}^{\top} + \lambda \boldsymbol{I}_d$. $\lambda$ is a hyperparameter used to ensure that $\tilde{\boldsymbol{V}}_{t-1,i}(\lambda)$ is always invertable, which can be set to $1$.  We can guarantee that $\boldsymbol{\Theta} \in \mathcal{C}_t$ for all $t \in [T]$ with high probability by the following lemma.

\begin{lemma}
\label{lemma: confidence set for linear bandits}
With probability at least $1-\delta$, for any step $t \in [T]$, suppose $\hat{\boldsymbol{\Theta}}_t = \hat{\boldsymbol{B}}_t \hat{\boldsymbol{W}}_t$ is the optimal solution of the least-square regression (Eqn~\ref{eqn: optimization problem for linear bandits}), the true parameter $\boldsymbol{\Theta} = \boldsymbol{B} \boldsymbol{W}$ is always contained in the confidence set $\mathcal{C}_t$, i.e.
\begin{align}
    \sum_{i=1}^{M}\left\|\hat{\boldsymbol{B}}_t \hat{\boldsymbol{w}}_{t,i}-\boldsymbol{B}\boldsymbol{w}_{i}\right\|^{2}_{\tilde{\boldsymbol{V}}_{t-1,i}(\lambda)} \leq L,
\end{align}
where $\tilde{\boldsymbol{V}}_{t-1,i}(\lambda) = \boldsymbol{X}_{t-1,i}\boldsymbol{X}_{t-1,i}^{\top} + \lambda \boldsymbol{I}_d$.
\end{lemma}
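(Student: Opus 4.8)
The plan is to combine optimality of the least-squares solution (Eqn~\ref{eqn: optimization problem for linear bandits}) with a self-normalized martingale argument, the essential twist being that the estimation error is confined to an at-most-$2k$-dimensional subspace, so the relevant concentration scales with $k$ rather than $d$ on a per-task basis. For the basic inequality: since the true parameters $\{\boldsymbol{\theta}_i\}$ are feasible for Eqn~\ref{eqn: optimization problem for linear bandits} (they satisfy $\|\boldsymbol{\theta}_i\|_2\le1$ by Assumption~\ref{assumptions:linear_bandits_regularity}), optimality of $\hat{\boldsymbol{\Theta}}_t$ gives $\sum_i\|\boldsymbol{y}_{t-1,i}-\boldsymbol{X}_{t-1,i}^\top\hat{\boldsymbol{\theta}}_{t,i}\|_2^2\le\sum_i\|\boldsymbol{y}_{t-1,i}-\boldsymbol{X}_{t-1,i}^\top\boldsymbol{\theta}_i\|_2^2$. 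Substituting $\boldsymbol{y}_{t-1,i}=\boldsymbol{X}_{t-1,i}^\top\boldsymbol{\theta}_i+\boldsymbol{\eta}_{t-1,i}$ and expanding, I obtain $S_t\le 2\sum_i\langle\boldsymbol{X}_{t-1,i}\boldsymbol{\eta}_{t-1,i},\hat{\boldsymbol{\theta}}_{t,i}-\boldsymbol{\theta}_i\rangle$, where $S_t\defeq\sum_i\|\boldsymbol{X}_{t-1,i}^\top(\hat{\boldsymbol{\theta}}_{t,i}-\boldsymbol{\theta}_i)\|_2^2$. Since the quantity to bound is $A_t\defeq\sum_i\|\hat{\boldsymbol{\theta}}_{t,i}-\boldsymbol{\theta}_i\|^2_{\tilde{\boldsymbol{V}}_{t-1,i}(\lambda)}=S_t+\lambda\sum_i\|\hat{\boldsymbol{\theta}}_{t,i}-\boldsymbol{\theta}_i\|_2^2$ and $\|\hat{\boldsymbol{\theta}}_{t,i}\|_2,\|\boldsymbol{\theta}_i\|_2\le1$, we get $A_t\le S_t+4\lambda M$, so it suffices to bound $S_t$.

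Second, the low-rank reduction. Each error $\hat{\boldsymbol{\theta}}_{t,i}-\boldsymbol{\theta}_i=\hat{\boldsymbol{B}}_t\hat{\boldsymbol{w}}_{t,i}-\boldsymbol{B}\boldsymbol{w}_i$ lies in the column span of $[\hat{\boldsymbol{B}}_t,\boldsymbol{B}]$, of dimension at most $2k$; let $\boldsymbol{U}_t\in\caO^{d\times 2k}$ be an orthonormal basis of this span, so $\hat{\boldsymbol{\theta}}_{t,i}-\boldsymbol{\theta}_i=\boldsymbol{U}_t\boldsymbol{u}_{t,i}$. Two applications of Cauchy–Schwarz — first inside the $2k$-dimensional space under the metric $\boldsymbol{U}_t^\top\tilde{\boldsymbol{V}}_{t-1,i}(\lambda)\boldsymbol{U}_t$, then across tasks, using $\|\boldsymbol{u}_{t,i}\|_{\boldsymbol{U}_t^\top\tilde{\boldsymbol{V}}_{t-1,i}(\lambda)\boldsymbol{U}_t}=\|\hat{\boldsymbol{\theta}}_{t,i}-\boldsymbol{\theta}_i\|_{\tilde{\boldsymbol{V}}_{t-1,i}(\lambda)}$ — bound the right-hand side by $2\sqrt{\Phi_t(\boldsymbol{U}_t)}\sqrt{A_t}$, where $\Phi_t(\boldsymbol{U})\defeq\sum_i\|\boldsymbol{U}^\top\boldsymbol{X}_{t-1,i}\boldsymbol{\eta}_{t-1,i}\|^2_{(\boldsymbol{U}^\top\tilde{\boldsymbol{V}}_{t-1,i}(\lambda)\boldsymbol{U})^{-1}}$. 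Combined with $A_t\le S_t+4\lambda M$, this yields a scalar quadratic inequality $S_t\le 2\sqrt{\Phi_t(\boldsymbol{U}_t)}\sqrt{S_t+4\lambda M}$, whose solution gives $A_t=O(\Phi_t(\boldsymbol{U}_t)+\lambda M)$.

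Third, the crux: bounding $\Phi_t(\boldsymbol{U}_t)$ by $\tilde{O}(Mk+kd)$ uniformly in $t$ despite $\boldsymbol{U}_t$ being data-dependent. For a fixed orthonormal $\boldsymbol{U}$, each $\boldsymbol{U}^\top\boldsymbol{X}_{t-1,i}\boldsymbol{\eta}_{t-1,i}$ is a $2k$-dimensional self-normalized martingale, and since the $M$ task-noise sequences are (conditionally) independent, a joint method-of-mixtures / self-normalized bound in the spirit of \citet{abbasi2011improved} controls the aggregate $\Phi_t(\boldsymbol{U})$, simultaneously over all $t$, by $\tilde{O}(Mk+\log(1/\delta_0))$ — one determinant-ratio term of size $O(k\log T)$ per task, plus a \emph{single} concentration term $\log(1/\delta_0)$. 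To remove the data dependence I take an $\epsilon$-net $\mathcal{N}$ over orthonormal $d\times 2k$ matrices, with $\log|\mathcal{N}|=O(kd\log(1/\epsilon))$, union bound with $\delta_0=\delta/|\mathcal{N}|$, and control the discretization error by relating $\Phi_t(\boldsymbol{U}_t)$ to $\Phi_t(\boldsymbol{U}')$ at the nearest net point $\boldsymbol{U}'$ (taking $\epsilon$ polynomially small keeps $\log(1/\epsilon)=\tilde{O}(1)$). Because the net enters the single concentration term, it contributes an additive $\tilde{O}(kd)$ rather than $\tilde{O}(Mkd)$; hence $\Phi_t(\boldsymbol{U}_t)=\tilde{O}(Mk+kd)$ for all $t$ with probability $1-\delta$, and setting $\lambda=1$ gives $A_t\le L=\tilde{O}(Mk+kd)$.

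The main obstacle is exactly this third step. Turning ``the error lives in a $2k$-dimensional subspace'' into a rigorous uniform bound requires (i) arranging the union bounds so that the Grassmannian net costs an additive $O(kd)$ rather than $O(Mkd)$, which forces the aggregate $\Phi_t(\boldsymbol{U})$ to be concentrated as a single multi-task martingale (relying on independence across tasks) instead of $M$ separate ones, and (ii) controlling the net's discretization error uniformly in $t$ through the time-varying, task-dependent weighting $\tilde{\boldsymbol{V}}_{t-1,i}(\lambda)$, which appears both in the projected noise and in the inverse metric.
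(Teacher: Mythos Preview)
Your proposal is correct and follows essentially the same approach as the paper: optimality of the constrained least-squares gives the basic inequality, the low-rank structure confines the error to a $2k$-dimensional subspace spanned by an orthonormal $\boldsymbol{U}_t$, a joint method-of-mixtures self-normalized bound (the paper's Lemma~\ref{lemma: self-normalized bound}) controls $\Phi_t(\bar{\boldsymbol{U}})$ for fixed $\bar{\boldsymbol{U}}$ by $\tilde{O}(Mk+\log(1/\delta_1))$, and an $\epsilon$-net over $d\times 2k$ orthonormal matrices with $\log|\mathcal{E}|=O(kd\log(1/\epsilon))$ absorbs the data-dependence of $\boldsymbol{U}_t$ at an additive $\tilde{O}(kd)$ cost. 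Your identification of the two obstacles---arranging the union bound so the net contributes $O(kd)$ not $O(Mkd)$, and controlling the discretization error through the time- and task-varying weighting---matches precisely the places where the paper does the detailed work (Eqn~\ref{eqn: hTheta - Theta decomposition 1-0}--\ref{eqn: hTheta - Theta decomposition part 3}).
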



If we solve each tasks independently with standard single-task algorithms such as OFUL~\citep{abbasi2011improved}, it is not hard to realize that we can only obtain a confidence set with $\sum_{i=1}^{M}\|\hat{\boldsymbol{B}}_t \hat{\boldsymbol{w}}_{t,i}-\boldsymbol{B}\boldsymbol{w}_{i}\|^{2}_{\tilde{\boldsymbol{V}}_{t-1,i}(\lambda)} \leq L_1 = \tilde{O}(Md)$. Our confidence bound is much sharper compared with this naive bound, which explains the improvement in our final regret. Compared with \citet{yang2020provable}, we are not able to estimate $\boldsymbol{B}$ and $\boldsymbol{W}$ directly like their methods due to the more relaxed bandit setting. In our setting, the empirical design matrix $\tilde{\boldsymbol{V}}_{t-1,i}(\lambda)$ can be quite ill-conditioned if the action set at each step is chosen adversarially. Thus, we have to establish a tighter confidence set to improve the regret bound.

 We only sketch the main idea of the proof for Lemma~\ref{lemma: confidence set for linear bandits} and defer the detailed explanation to Appendix~\ref{sec: proof of lemma, confidence set for linear bandits}. Considering the non-trivial case where $d> 2k$, our main observation is that both $\boldsymbol{B}\boldsymbol{W}$ and $\hat{\boldsymbol{B}}_t \hat{\boldsymbol{W}}_t$ are low-rank matrix with rank upper bounded by $k$, which indicates that $\operatorname{rank}\left(\hat{\boldsymbol{B}}_t \hat{\boldsymbol{W}}_t-\boldsymbol{B}\boldsymbol{W}\right) \leq 2k$. Therefore, we can write $\hat{\boldsymbol{B}}_t \hat{\boldsymbol{W}}_t-\boldsymbol{B}\boldsymbol{W} = \boldsymbol{U}_t \boldsymbol{R}_t = [\boldsymbol{U}_t\boldsymbol{r}_{t,1}, \boldsymbol{U}_t\boldsymbol{r}_{t,2}, \cdots, \boldsymbol{U}_t\boldsymbol{r}_{t,M}]$, where $\boldsymbol{U}_t \in \mathbb{R}^{d \times 2k}$ is an orthonormal matrix and $\boldsymbol{R}_t \in \mathbb{R}^{2k\times M}$. Thus we have $$\boldsymbol{X}_{t-1,i}^{\top} \left(\hat{\boldsymbol{B}}_t \hat{\boldsymbol{w}}_{t,i} - \boldsymbol{B} \boldsymbol{w}_{i}\right) = \left(\boldsymbol{U}_t^{\top}\boldsymbol{X}_{t-1,i}\right)^{\top} \boldsymbol{R}_t.$$

This observation indicates that we can project the history actions $\boldsymbol{X}_{t-1,i}$ to a $2k$-dimensional space with $\boldsymbol{U}_t$, and take  $\boldsymbol{U}_t^{\top}\boldsymbol{X}_{t-1,i}$ as the $2k$-dimensional actions we have selected in the first $t-1$ steps. Following this idea, we connect the approximation error $\sum_{i=1}^{M}\left\|\hat{\boldsymbol{B}}_t \hat{\boldsymbol{w}}_{t,i}-\boldsymbol{B}\boldsymbol{w}_{i}\right\|^{2}_{\tilde{\boldsymbol{V}}_{t-1,i}(\lambda)}$ to the term $\sum_{i=1}^{M} \left\|\boldsymbol{\eta}_{t-1,i}^{\top} \left(\boldsymbol{U}_t^{\top}\boldsymbol{X}_{t-1,i}\right)^{\top} \right\|^2_{\boldsymbol{V}^{-1}_{t-1,i}(\lambda)}$, where $\boldsymbol{V}_{t-1,i}(\lambda) \stackrel{\text { def }}{=} \left( \boldsymbol{U}^{\top}_t \boldsymbol{X}_{t-1,i}\right)\left(\boldsymbol{U}_t^{\top}\boldsymbol{X}_{t-1,i} \right)^{\top}+\lambda \boldsymbol{I} $. We bound this term for the fixed $\boldsymbol{U}_t$ with the technique of self-normalized bound for vector-valued martingales~\citep{abbasi2011improved}, and then apply the $\epsilon$-net trick to cover all possible $\boldsymbol{U}_t$. This leads to an upper bound for $\sum_{i=1}^{M} \left\|\boldsymbol{\eta}_{t-1,i}^{\top} \boldsymbol{X}_{t-1,i}^{\top} \boldsymbol{U}_t\right\|^2_{\boldsymbol{V}^{-1}_{t-1,i}(\lambda)}$, and consequently helps to obtain the upper bound in Lemma~\ref{lemma: confidence set for linear bandits}.

\subsection{Algorithm and Regret}
\label{sec:alg and regret for linear bandits}

\begin{algorithm}[tbh]
\caption{Multi-Task Low-Rank OFUL}
\label{alg: multi-task OFUL}
  \begin{algorithmic}[1]
    \For { step $t = 1,2,\cdots, T$}
        \State Calculate the confidence interval $\mathcal{C}_t$ by Eqn~\ref{eqn: confidence set for linear bandits}
        \State $\tilde{\boldsymbol{\Theta}}_t, \bm{x}_{t,i} = \argmax_{ \boldsymbol{\Theta} \in \mathcal{C}_t, \bm{x}_i \in \mathcal{A}_{t,i}} \sum_{i=1}^{M}\left\langle \boldsymbol{x}_i, \boldsymbol{\theta}_{i}\right\rangle$
        \For{task $i = 1, 2,\cdots, M$}
            \State Play $\bm{x}_{t,i}$ for task $i$, and obtain the reward $y_{t,i}$
        \EndFor
    \EndFor
  \end{algorithmic}
\end{algorithm}

We describe our Multi-Task Low-Rank OFUL algorithm in Algorithm~\ref{alg: multi-task OFUL}. The following theorem states a bound on the regret of the algorithm.

\begin{theorem}
\label{thm: main theory for linear bandits}
Suppose Assumption~\ref{assumptions:low_rank_bandits} holds.  Then, with probability at least $1- \delta$, the regret of Algorithm~\ref{alg: multi-task OFUL} is bounded by
\begin{align}
\mathrm{Reg}(T) =\tilde{O}\left(M\sqrt{dkT} + d\sqrt{kMT} \right)
\end{align}
\end{theorem}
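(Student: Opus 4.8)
The plan is to follow the standard optimism-in-the-face-of-uncertainty regret decomposition, but with the sharper confidence set from Lemma~\ref{lemma: confidence set for linear bandits} doing the heavy lifting. First I would condition on the high-probability event that $\boldsymbol{\Theta} \in \caC_t$ for all $t \in [T]$, which holds with probability at least $1-\delta$ by Lemma~\ref{lemma: confidence set for linear bandits}. On this event, optimism gives the usual per-step bound: since $(\bm{x}_{t,i}, \tilde{\bm{\Theta}}_t)$ jointly maximizes $\sum_i \langle \bm{x}_i, \bm{\theta}_i\rangle$ over $\caA_{t,i} \times \caC_t$ and $\bm{\Theta} \in \caC_t$, we have $\sum_i \langle \bm{x}_{t,i}^\star, \bm{\theta}_i \rangle \le \sum_i \langle \bm{x}_{t,i}, \tilde{\bm{\theta}}_{t,i}\rangle$. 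Therefore the instantaneous regret at step $t$ is bounded by $\sum_{i=1}^M \langle \bm{x}_{t,i}, \tilde{\bm{\theta}}_{t,i} - \bm{\theta}_i \rangle$.

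Next I would control each term $\langle \bm{x}_{t,i}, \tilde{\bm{\theta}}_{t,i} - \bm{\theta}_i\rangle$ by Cauchy--Schwarz in the Mahalanobis norm induced by $\tilde{\bm{V}}_{t-1,i}(\lambda)$, writing it as $\|\bm{x}_{t,i}\|_{\tilde{\bm{V}}_{t-1,i}^{-1}} \cdot \|\tilde{\bm{\theta}}_{t,i} - \bm{\theta}_i\|_{\tilde{\bm{V}}_{t-1,i}}$. Both $\tilde{\bm{\Theta}}_t$ and the truth $\bm{\Theta}$ lie in $\caC_t$, so by the triangle inequality and the defining constraint of the confidence set, $\sum_{i=1}^M \|\tilde{\bm{\theta}}_{t,i} - \bm{\theta}_i\|^2_{\tilde{\bm{V}}_{t-1,i}} \le 4L = \tilde{O}(Mk + kd)$. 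Summing the instantaneous regret over $t$ and applying Cauchy--Schwarz across both $t$ and $i$, the total regret is bounded by
\begin{align*}
\mathrm{Reg}(T) \le \sqrt{\sum_{t=1}^T \sum_{i=1}^M \|\bm{x}_{t,i}\|^2_{\tilde{\bm{V}}_{t-1,i}^{-1}}} \cdot \sqrt{\sum_{t=1}^T \sum_{i=1}^M \|\tilde{\bm{\theta}}_{t,i} - \bm{\theta}_i\|^2_{\tilde{\bm{V}}_{t-1,i}}}.
\end{align*}
The second factor is $\sqrt{T \cdot 4L} = \tilde{O}(\sqrt{T(Mk+kd)})$.

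For the first factor I would invoke the elliptical potential (log-determinant) lemma from \citet{abbasi2011improved}: for each task $i$, $\sum_{t=1}^T \|\bm{x}_{t,i}\|^2_{\tilde{\bm{V}}_{t-1,i}^{-1}} = \tilde{O}(d)$ since the actions live in $\dbR^d$. Summing over the $M$ tasks gives $\tilde{O}(Md)$, so the first factor is $\tilde{O}(\sqrt{Md})$. Multiplying the two factors yields $\tilde{O}(\sqrt{Md} \cdot \sqrt{T(Mk+kd)}) = \tilde{O}(\sqrt{M^2 dkT} + \sqrt{M d^2 kT}) = \tilde{O}(M\sqrt{dkT} + d\sqrt{kMT})$, which is exactly the claimed bound after distributing the square root over the sum $Mk + kd$.

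The main subtlety I anticipate is the handling of the regularization term $\lambda$ and making the two Cauchy--Schwarz steps compatible. Specifically, the confidence-set bound is stated in the norm $\tilde{\bm{V}}_{t-1,i}(\lambda)$ which includes the $\lambda \bm{I}_d$ ridge term, whereas the elliptical potential bound naturally accounts for the same regularization; I must ensure the $\lambda$ contributions ($4M\lambda$-type terms visible in the sketched proof of Lemma~\ref{lemma: confidence set for linear bandits}) are absorbed into the $\tilde{O}$ and that setting $\lambda = 1$ with $\|\bm{\theta}_i\|_2, \|\bm{x}\|_2 \le 1$ keeps every term logarithmic. A secondary point is verifying that $L = \tilde{O}(Mk + kd)$ correctly separates into the two additive regret pieces; the $Mk$ part combines with $\sqrt{Md}$ to give the $M\sqrt{dkT}$ term while the $kd$ part gives the $d\sqrt{kMT}$ term, so no cross-contamination occurs and the final bound follows directly.
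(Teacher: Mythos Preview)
Your proposal is correct and follows essentially the same argument as the paper: condition on the event of Lemma~\ref{lemma: confidence set for linear bandits}, use optimism to reduce to $\sum_{t,i}\langle \bm{x}_{t,i},\tilde{\bm{\theta}}_{t,i}-\bm{\theta}_i\rangle$, apply Cauchy--Schwarz in the $\tilde{\bm{V}}_{t-1,i}(\lambda)$ norm, bound the parameter-error factor by $\tilde{O}(\sqrt{T(Mk+kd)})$ via the confidence-set radius, and bound the action factor by $\tilde{O}(\sqrt{Md})$ via the elliptical potential lemma of \citet{abbasi2011improved}. The only cosmetic difference is that the paper routes through the center $\hat{\bm{\theta}}_{t,i}$ explicitly (two terms each bounded by $\sqrt{L}$), whereas you collapse them with the triangle inequality to a single $\sqrt{4L}$; both yield the same $\tilde{O}$ bound with $\lambda=1$.
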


We defer the proof of Theorem~\ref{thm: main theory for linear bandits} to Appendix~\ref{sec: proof of main theorem for linear bandits}. The first term in the regret has linear dependence on $M$. This term characterizes the regret caused by learning the parameters $\boldsymbol{w}_i$ for each task. The second term has square root dependence on the number of total samples $MT$, which indicates the cost to learn the common representation with samples from $M$ tasks. By dividing the total regret by the number of tasks $M$, we know that the average regret for each task is $\tilde{O}(\sqrt{dkT} + d\sqrt{kT/M})$. Note that if we solve $M$ tasks with algorithms such as OFUL~\citep{abbasi2011improved} independently, the regret per task can be $\tilde{O}(d\sqrt{T})$. 
Our bound saves a factor of $\sqrt{d/k}$ compared with the naive method by leveraging the common representation features. We also show that when $d > M$ our regret bound is near optimal (see Theorem \ref{thm: lower bound for linear bandits}).

\subsection{Misspecified Multi-Task Linear Bandits}
\label{sec: misspecifed linear bandits}
For multi-task linear bandit problem, it is relatively unrealistic to assume a common feature extractor that can fit the reward functions of $M$ tasks exactly. A more natural situation is that the underlying reward functions are not exactly linear, but have some misspecifications. There are also relevant discussions on single-task linear bandits in recent works \citep{lattimore2020learning,zanette2020learning}. We first present a definition for the approximately linear bandit learning in multi-task setting.

\begin{assumption}
\label{assumption: misspecified linear bandits}
There exists a linear feature extractor $\boldsymbol{B} \in \mathbb{R}^{d\times k}$ and a set of linear coefficients $\{\boldsymbol{w}_i\}_{i=1}^{M}$ such that the expectation reward $\mathbb{E}[y_i|\boldsymbol{x}_i]$ for any action $\boldsymbol{x}_i \in \mathbb{R}^d$ satisfies  $\left|\mathbb{E}[y_i|\boldsymbol{x}_i] - \left\langle\boldsymbol{x}_i,\boldsymbol{B}\boldsymbol{w}_i\right\rangle\right| \leq \zeta$.
\end{assumption}

In general, an algorithm designed for a linear model could break down entirely if the underlying model is not linear. However, we find that our algorithm is in fact robust to small model misspecification if we set $L = \tilde{O}(Mk+kd+MT\zeta^2)$ (see Appendix~\ref{sec: proof for misspecified linear bandits} for the exact value). The following regret bound holds under Assumption~\ref{assumption: misspecified linear bandits} if we slightly modify the hyperparameter $L$ in the definition of confidence region $\mathcal{C}_t$.

\begin{theorem}
\label{thm: regret for misspecified linear bandits}
Under Assumption~\ref{assumption: misspecified linear bandits}, with probability at least $1- \delta$, the regret of Algorithm~\ref{alg: multi-task OFUL} is bounded by
\begin{align}
\mathrm{Reg}(T) = \tilde{O}\left(M\sqrt{dkT} + d\sqrt{kMT} + MT \sqrt{d }\zeta\right)
\end{align}
\end{theorem}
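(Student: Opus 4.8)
The plan is to reduce the analysis to the well-specified case (Theorem~\ref{thm: main theory for linear bandits}) and to track how the model misspecification of Assumption~\ref{assumption: misspecified linear bandits} enters in exactly two places: the construction of the confidence set, and the regret decomposition. Throughout, let $\boldsymbol{\theta}_i = \boldsymbol{B}\boldsymbol{w}_i$ denote the \emph{best linear surrogate} guaranteed by the assumption, and let $\boldsymbol{\varepsilon}_{t-1,i} \in \mathbb{R}^{t-1}$ be the vector whose $n$-th entry is $\mathbb{E}[y_{n,i}\mid \boldsymbol{x}_{n,i}] - \langle \boldsymbol{x}_{n,i}, \boldsymbol{\theta}_i\rangle$, so that $\|\boldsymbol{\varepsilon}_{t-1,i}\|_\infty \le \zeta$ and hence $\sum_{i=1}^M \|\boldsymbol{\varepsilon}_{t-1,i}\|_2^2 \le M T \zeta^2$. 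The realized reward then decomposes as $\boldsymbol{y}_{t-1,i} = \boldsymbol{X}_{t-1,i}^\top \boldsymbol{\theta}_i + \boldsymbol{\varepsilon}_{t-1,i} + \boldsymbol{\eta}_{t-1,i}$, which is the only structural change from the exactly-linear model.

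First I would re-derive the confidence-set guarantee of Lemma~\ref{lemma: confidence set for linear bandits} with the enlarged radius $L = \tilde{O}(Mk + kd + MT\zeta^2)$. The argument follows the sketch of Lemma~\ref{lemma: confidence set for linear bandits} verbatim up to the optimality inequality for the least-squares solution $\hat{\boldsymbol{B}}_t, \hat{\boldsymbol{W}}_t$; the difference $\hat{\boldsymbol{B}}_t\hat{\boldsymbol{W}}_t - \boldsymbol{B}\boldsymbol{W}$ still has rank at most $2k$ and factors as $\boldsymbol{U}_t \boldsymbol{R}_t$. Plugging the misspecified reward model into the optimality inequality $\sum_i \|\boldsymbol{y}_{t-1,i} - \boldsymbol{X}_{t-1,i}^\top \hat{\boldsymbol{B}}_t\hat{\boldsymbol{w}}_{t,i}\|_2^2 \le \sum_i \|\boldsymbol{y}_{t-1,i} - \boldsymbol{X}_{t-1,i}^\top \boldsymbol{B}\boldsymbol{w}_i\|_2^2$ now produces, in addition to the noise cross-term, a misspecification cross-term. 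Writing $\boldsymbol{d}_{t,i} \defeq \hat{\boldsymbol{B}}_t\hat{\boldsymbol{w}}_{t,i} - \boldsymbol{B}\boldsymbol{w}_i$, this extra term is $2\sum_i \boldsymbol{\varepsilon}_{t-1,i}^\top \boldsymbol{X}_{t-1,i}^\top \boldsymbol{d}_{t,i}$, which by Cauchy--Schwarz (first within each task, on vectors in $\mathbb{R}^{t-1}$, then across tasks) is at most $2\sqrt{\sum_i \|\boldsymbol{\varepsilon}_{t-1,i}\|_2^2}\cdot \sqrt{\sum_i \|\boldsymbol{X}_{t-1,i}^\top \boldsymbol{d}_{t,i}\|_2^2} \le 2\sqrt{MT}\,\zeta\cdot \sqrt{\sum_i \|\boldsymbol{d}_{t,i}\|_{\tilde{\boldsymbol{V}}_{t-1,i}(\lambda)}^2}$, where the last step uses $\boldsymbol{X}_{t-1,i}\boldsymbol{X}_{t-1,i}^\top \preceq \tilde{\boldsymbol{V}}_{t-1,i}(\lambda)$.

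I expect this confidence-set step to be the main obstacle, since it is where the two perturbations interact. Collecting the noise cross-term (controlled by the self-normalized martingale bound of \citet{abbasi2011improved} together with an $\epsilon$-net over $\boldsymbol{U}_t$, exactly as in the proof of Lemma~\ref{lemma: confidence set for linear bandits}) and the new misspecification cross-term yields an inequality of the form $A \le 2C\sqrt{A} + 2\sqrt{MT}\,\zeta\,\sqrt{A} + 4M\lambda$, where $A \defeq \sum_i \|\boldsymbol{d}_{t,i}\|_{\tilde{\boldsymbol{V}}_{t-1,i}(\lambda)}^2$ and $C^2 = \tilde{O}(Mk+kd)$ is the self-normalized/$\epsilon$-net bound. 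Solving this quadratic in $\sqrt{A}$ and squaring gives $A \le \tilde{O}(Mk + kd + MT\zeta^2) = L$, which establishes $\boldsymbol{\Theta} = \boldsymbol{B}\boldsymbol{W} \in \mathcal{C}_t$ with high probability under the enlarged radius.

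Finally I would run the regret argument of Theorem~\ref{thm: main theory for linear bandits} against the linear surrogate. Because the true expected rewards differ from $\langle\cdot,\boldsymbol{\theta}_i\rangle$ by at most $\zeta$ at both the optimal and the chosen arm, the regret splits as $\mathrm{Reg}(T) \le \sum_{t,i}\big(\langle \boldsymbol{x}^\star_{t,i},\boldsymbol{\theta}_i\rangle - \langle \boldsymbol{x}_{t,i},\boldsymbol{\theta}_i\rangle\big) + 2MT\zeta$. Optimism remains valid for the surrogate sum because $\boldsymbol{\Theta}\in\mathcal{C}_t$, so it is controlled exactly as before: since both $\tilde{\boldsymbol{\Theta}}_t$ and $\boldsymbol{\Theta}$ lie in $\mathcal{C}_t$ we get $\sum_i \|\tilde{\boldsymbol{\theta}}_{t,i}-\boldsymbol{\theta}_i\|_{\tilde{\boldsymbol{V}}_{t-1,i}(\lambda)}^2 \le 4L$, and a Cauchy--Schwarz step followed by the elliptical-potential lemma yields a surrogate regret of $\tilde{O}\big(\sqrt{L\cdot MTd}\big)$. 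Substituting $L = \tilde{O}(Mk+kd+MT\zeta^2)$ gives $\tilde{O}(M\sqrt{dkT} + d\sqrt{kMT} + MT\sqrt{d}\,\zeta)$; since $\sqrt{d}\ge 1$ the residual $2MT\zeta$ is dominated by the $MT\sqrt{d}\,\zeta$ term, which produces the claimed bound.
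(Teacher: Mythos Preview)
Your proposal is correct and follows essentially the same route as the paper's proof: both add a misspecification cross-term to the optimality inequality, bound it by $2\sqrt{MT}\,\zeta\sqrt{\sum_i\|\boldsymbol{d}_{t,i}\|_{\tilde{\boldsymbol{V}}_{t-1,i}(\lambda)}^2}$, solve the resulting quadratic to enlarge $L$ by $\tilde O(MT\zeta^2)$, and then rerun the regret decomposition of Theorem~\ref{thm: main theory for linear bandits} with an additive $2MT\zeta$. The only cosmetic difference is that the paper bounds the misspecification cross-term via a Mahalanobis-norm Cauchy--Schwarz together with the Projection Bound of \citet{zanette2020learning}, whereas you use a direct Euclidean Cauchy--Schwarz on the concatenated residual vector; both yield the same $2\sqrt{MT}\,\zeta$ coefficient.
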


Theorem~\ref{thm: regret for misspecified linear bandits} is proved in Appendix~\ref{sec: proof for misspecified linear bandits}. Compared with Theorem~\ref{thm: main theory for linear bandits}, there is an additional term $\tilde{O}(MT \sqrt{d }\zeta)$ in the regret of Theorem~\ref{thm: regret for misspecified linear bandits}. This additional term is inevitably linear in $MT$ due to the intrinsic bias introduced by linear function approximation. Note that our algorithm can still enjoy good theoretical guarantees when $\zeta$ is sufficiently small.

\subsection{Lower Bound}
\label{sec: lower_bound for linear bandits}

In this subsection, we propose the regret lower bound for multi-task linear bandit problem under Assumption~\ref{assumption: misspecified linear bandits}.

\begin{theorem}
\label{thm: lower bound for linear bandits}
For any $k,M,d,T \in \mathbb{Z}^{+}$ with $k \leq d \leq T$ and $k \leq M$, and any learning algorithm $\mathcal{A}$, there exist a multi-task linear bandit instance that satisfies Assumption~\ref{assumption: misspecified linear bandits}, such that the regret of Algorithm~$\mathcal{A}$ is lower bounded by
$$\operatorname{Reg}(T) \geq \Omega\left(Mk\sqrt{T} + d\sqrt{kMT} + MT \sqrt{d }\zeta\right).$$
\end{theorem}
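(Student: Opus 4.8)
The plan is to establish the three additive terms separately, each on its own hard instance via the classical KL-divergence minimax argument for linear bandits \citep{dani2008stochastic,lattimore2020bandit}, and then to combine them through the elementary fact that $\max\{a,b,c\}\ge\tfrac13(a+b+c)$. Concretely, for a fixed problem size let $R_1=Mk\sqrt{T}$, $R_2=d\sqrt{kMT}$, $R_3=MT\sqrt{d}\,\zeta$, and let $I_1,I_2,I_3$ be the instances built below. Since each $I_j$ forces \emph{every} algorithm $\mathcal{A}$ to incur $\Omega(R_j)$ regret, the single instance $I_{j^\star}$ with $j^\star=\arg\max_j R_j$ already forces regret $\Omega(R_{j^\star})=\Omega(\max_j R_j)=\Omega(R_1+R_2+R_3)$, which is the claimed bound; note $j^\star$ depends only on $(d,k,M,T,\zeta)$, which the theorem permits. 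This max-over-instances route is essentially forced, because the representation $\boldsymbol{B}$ cannot be simultaneously easy to learn (as $R_1$ requires) and hard to learn (as $R_2$ requires) inside one instance.

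For $R_1=Mk\sqrt{T}$ I would take $I_1$ to use the simple fixed representation that embeds $\mathbb{R}^k$ into the first $k$ coordinates of $\mathbb{R}^d$, with mutually independent hard coefficients $\boldsymbol{w}_i$ (the standard hypercube/sphere construction) for each task. Even a learner told $\boldsymbol{B}$ then faces $M$ \emph{independent} $k$-dimensional linear bandits over $T$ rounds, and the single-task minimax lower bound yields $\Omega(k\sqrt{T})$ per task, summing to $\Omega(Mk\sqrt{T})$. Since revealing $\boldsymbol{B}$ can only help, the same bound holds for the actual problem in which $\boldsymbol{B}$ is unknown. For $R_3=MT\sqrt{d}\,\zeta$ I would invoke the single-task lower bound for linear bandits with a $\zeta$-misspecified feature map \citep{lattimore2020learning}, which hides an essentially undetectable reward bonus on a large family of nearly indistinguishable actions and forces $\Omega(T\sqrt{d}\,\zeta)$ regret per task; applying it independently to the $M$ tasks contributes $\Omega(MT\sqrt{d}\,\zeta)$.

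For the representation term $R_2=d\sqrt{kMT}$ I would partition the $M$ tasks into $k$ groups of size $M/k$, assign each group $j$ a distinct unknown direction $\boldsymbol{b}_j\in\mathbb{R}^d$ drawn from a hard ($d$-dimensional) prior, and make all tasks in group $j$ identical copies with $\boldsymbol{\theta}_i=\boldsymbol{b}_j$ (so $\boldsymbol{w}_i$ is the $j$-th coordinate vector, the columns of $\boldsymbol{B}$ are the $\boldsymbol{b}_j$, and $\operatorname{rank}\boldsymbol{\Theta}\le k$, satisfying Assumption~\ref{assumption: misspecified linear bandits} with $\zeta=0$). Within group $j$ the only relevant unknown is the shared $d$-dimensional $\boldsymbol{b}_j$, informed by all $N=MT/k$ pulls of that group. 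The key reduction is that any algorithm's play on the group induces a (possibly restricted) strategy for a single pooled $d$-dimensional linear bandit run for $N$ rounds, whose minimax regret is $\Omega(d\sqrt{N})=\Omega\!\big(d\sqrt{MT/k}\big)$; placing an independent hard prior on each $\boldsymbol{b}_j$ makes the expected total regret the sum of the $k$ group regrets, so some instance achieves $k\cdot\Omega(d\sqrt{MT/k})=\Omega(d\sqrt{kMT})$.

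The main obstacle is this representation term. The delicate points are (i) formalizing the pooling reduction, namely that the genuine per-step \emph{parallel} protocol is a restriction of, hence no easier than, a learner allowed to process the $N$ group-pulls sequentially, so the pooled minimax lower bound transfers downward to the actual algorithm; (ii) carrying out the direct-sum step over the $k$ groups under a product prior so that a single instance is simultaneously hard for all groups; and (iii) checking that each pooled subproblem with $N=MT/k$ rounds stays in the regime where $\Omega(d\sqrt{N})$ is the correct rate, which is where the stated constraints $k\le d\le T$ and $k\le M$ are used. A final routine check is that the hard directions and coefficients can be scaled so that $\|\boldsymbol{\theta}_i\|_2\le 1$ and $\|\boldsymbol{x}\|_2\le 1$ throughout, keeping every $I_j$ a valid instance.
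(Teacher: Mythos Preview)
Your proposal is essentially the paper's own proof. The paper obtains $Mk\sqrt{T}+d\sqrt{kMT}$ by invoking Theorem~4 of \citet{yang2020provable}, whose construction is exactly your $I_1$ (fixed embedding of $\mathbb{R}^k$, independent hard $\boldsymbol{w}_i$) and your $I_2$ (partition the $M$ tasks into $k$ groups of identical copies and pool each group into a single $d$-dimensional bandit with $MT/k$ rounds); it then obtains $MT\sqrt{d}\zeta$ from the single-task misspecified lower bound (Proposition~6 of \citet{zanette2020learning}), combining the pieces just as you do via the max.

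One point to tighten in your $I_3$: you write ``applying it independently to the $M$ tasks,'' but if the $M$ hard misspecified instances carry $M$ distinct best-linear-fit parameters $\boldsymbol{\theta}_i$, Assumption~\ref{assumption: misspecified linear bandits} (rank $\le k$) can fail when $M>k$. The paper handles this by reusing your own grouping trick from $I_2$: split the $M$ tasks into $k$ groups, use the \emph{same} hard misspecified instance within each group so that only $k$ distinct $\boldsymbol{\theta}_i$ appear, reduce each group to a single-task misspecified bandit with $MT/k$ rounds yielding $\Omega(\sqrt{d}\,\zeta\, MT/k)$, and sum over the $k$ groups. With that adjustment your argument matches the paper's.
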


We defer the proof of Theorem~\ref{thm: lower bound for linear bandits} to Appendix~\ref{sec: proof of lower bound for linear bandits}. 
By setting $\zeta = 0$, Theorem~\ref{thm: lower bound for linear bandits} can be converted to the lower bound for multi-task linear bandit problem under Assumption~\ref{assumptions:low_rank_bandits}, which is $\Omega(Mk\sqrt{T} + d\sqrt{kMT} )$. These lower bounds match the upper bounds in Theorem~\ref{thm: main theory for linear bandits} and Theorem~\ref{thm: regret for misspecified linear bandits} in the regime where $d > M$ respectively. There is still a gap of $\sqrt{d/k}$ in the first part of the regret. For the upper bounds, the main difficulty to obtain $\tilde{O}(Mk\sqrt{T})$ regret in the first part comes from the estimation of $\boldsymbol{B}$. Since the action sets are not fixed and can be ill-conditioned, we cannot follow the explore-then-exploit framework and estimate $\boldsymbol{B}$ at the beginning. Besides, explore-then-exploit algorithms always suffer $\tilde{O}(T^{2/3})$ regret in the general linear bandits setting without further assumptions. Without estimating $\boldsymbol{B}$ beforehand with enough accuracy, the exploration in original $d$-dimensional space can be redundant since we cannot identify actions that have the similar $k$-dimensional representations before pulling them. We conjecture that our upper bound is tight and leave the gap as future work. 
\section{Main Results for Linear RL}
\label{sec: linear_rl}

We now show the main results for the multi-task episodic reinforcement learning under the assumption of low inherent Bellman error (i.e. the multi-task LSVI setting). 


\subsection{Multi-task LSVI Framework}
\label{linear_rl:MT-LSVI_framework}

In the exploration problems in RL where linear value function approximation is employed \citep{yang2019sample, jin2020provably,yang2020reinforcement}, LSVI-based algorithms are usually very effective when the linear value function space are \textit{close} under Bellman operator. For example, it is shown that a LSVI-based algorithm with additional bonus can solve the exploration challenge effectively in low-rank MDP \citep{jin2020provably}, where the function space $\caQ_h, \caQ_{h+1}$ are totally close under Bellman operator (i.e. any function $Q_{h+1}$ in $\caQ_{h+1}$ composed with Bellman operator $\caT_{h} \caQ_{h+1}$ belongs to $\caQ_{h}$). For the release of such strong assumptions, the inherent Bellman error for a MDP (Definition~\ref{definitions:inherent_bellman_error}) was proposed to measure how close is the function space under Bellman operator \citep{zanette2020learning}. We extend the definition of IBE to the multi-task LSVI setting (Definition~\ref{definitions:inherent_bellman_error_low_rank}), and show that our refined confidence set for the least square estimator can be applied to the low-rank multi-task LSVI setting, and gives an optimism-based algorithm with sharper regret bound compared to naively do exploration in each task independently.

\subsection{Algorithm}
\label{linear_rl:algorithm}

The MTLR-LSVI (Algorithm \ref{algorithm:linear_rl}) 
follows the LSVI-based \citep{jin2020provably, zanette2020learning} algorithms to build our (optimistic) estimator for the optimal value functions. To understand how this works for multi-task LSVI setting, we first take a glance at how LSVI-based algorithms work in single-task LSVI setting.

In traditional value iteration algorithms, we perform an approximate Bellman backup in episode $t$ for each step $h \in [H]$ on the estimator $Q_{h+1,t-1}$ constructed at the end of episode $t-1$, and find the best approximator for $\caT_h\left(Q_{h+1,t-1}\right)$ in function space $\caQ_h$. Since we assume linear function spaces, we can take the least-square solution of the empirical Bellman backup on $Q_{h+1,t-1}$ as the best approximator. 

In the multi-task framework, given an estimator $Q_{h+1}\left(\boldsymbol{\theta}_{h+1}^i\right)$ for each $i \in [M]$, to apply such least-square value iteration to our low-rank multi-task LSVI setting, we use the solution to the following constrained optimization problem
\begin{align}
\label{formula:linear_rl_least_square}
& \sum_{i=1}^M \sum_{j=1}^{t-1} \left(\left(\boldsymbol{\phi}_{hj}^i\right)^{\top} \boldsymbol{\theta}_h^i - R_{hj}^i -V_{h+1}^i\left(\boldsymbol{\theta}_{h+1}^i\right)\left(s_{h+1, j}^i\right)\right)^{2} \\
\text{s.t.} & \quad \boldsymbol{\theta}_h^1, \boldsymbol{\theta}_h^2, ..., \boldsymbol{\theta}_h^M \text{~lies in a $k$-dimensional subspace}
\end{align}

to approximate the Bellman update in the $t$-th episode, where $\boldsymbol{\phi}_{hj}^i = \boldsymbol{\phi}_h(s^i_{h j}, a^i_{hj})$ is the feature observed at step $h$ in episode $j$ for task $i$, and similarly $R_{hj}^i = R_h(s_{h j}^i, a_{h j}^i)$.

To guarantee the optimistic property of our estimator, we follow the global optimization procedure of \citet{zanette2020learning} which solves the following optimization problem in the $t$-th episode

\begin{definition}[Global Optimization Procedure]
\label{formula:linear_rl_global_optimization}

\begin{align}
\max_{\bar{\boldsymbol{\xi}}_h^i, \hat{\boldsymbol{\theta}}_h^i, \bar{\boldsymbol{\theta}}_h^i} & \sum_{i = 1}^M \max_{a^i} \left(\boldsymbol{\phi}(s_1^i, a^i)\right)^\top \bar{\boldsymbol{\theta}}_1^i \\
\text{s.t.} \quad & \left(\hat{\boldsymbol{\theta}}_h^1, ..., \hat{\boldsymbol{\theta}}_h^M\right) = \hat{\boldsymbol{B}}_h \begin{bmatrix} \hat{\boldsymbol{w}}_h^1 & \hat{\boldsymbol{w}}_h^2 & \cdots & \hat{\boldsymbol{w}}_h^M
\end{bmatrix} \notag \\
 & \qquad \qquad = \argmin_{\left\|\boldsymbol{B}_h \boldsymbol{w}_h^i\right\|_2 \leq D} \sum_{i=1}^M \sum_{j=1}^{t-1}  L(\boldsymbol{B}_h,\boldsymbol{w}_h^i)\\
& \bar{\boldsymbol{\theta}}_{h}^i =\hat{\boldsymbol{\theta}}_{h}^i + \bar{\boldsymbol{\xi}}_{h}^i; \quad \sum_{i = 1}^M \left\|\bar{\boldsymbol{\xi}}_{h}^i\right\|_{\tilde{\bm{V}}_{h t}^i(\lambda)}^2 \leq \alpha_{h t} \\
&\left(\bar{\boldsymbol{\theta}}_{h}^1, \bar{\boldsymbol{\theta}}_h^2, \cdots, \bar{\boldsymbol{\theta}}_h^M\right) \in \Theta_{h}
\end{align}
\end{definition}

where the empirical least-square loss $L(\boldsymbol{B}_h,\boldsymbol{w}_h^i)\defeq  ((\boldsymbol{\phi}_{hj}^i)^\top \boldsymbol{B}_h \boldsymbol{w}_h^i - R_{hj}^i - V_{h+1}^i(\bar{\boldsymbol{\theta}}_{h+1}^i)(s_{h+1, j}^i))^{2}$
, and
$
\tilde{\bm{V}}^i_{ht}(\lambda) \defeq \sum_{j=1}^{t-1} (\boldsymbol{\phi}_{hj}^i) (\boldsymbol{\phi}_{hj}^i)^\top + \lambda \bm{I}
$
is the regularized empirical linear design matrix for task $i$ in episode $t$.

\begin{algorithm}[!t]
\caption{Multi-Task Low-Rank LSVI}
\label{algorithm:linear_rl}
 \begin{algorithmic}[1]
    \State Input: low-rank parameter $k$, failure probability $\delta$, regularization $\lambda=1$, inherent Bellman error $\caI$
    \State Initialize $\tilde{\bm{V}}_{h1}=\lambda \boldsymbol{I}$ for $h \in [H]$
    \For{ episode $t = 1,2,\cdots $}
        \State Compute $\alpha_{ht}$ for $h \in [H]$. (see Lemma \ref{lemma:linear_rl_least_square_error})
        \State Solve the global optimization problem \ref{formula:linear_rl_global_optimization}
        \State Compute $\pi_{ht}^i(s) = \argmax_a \boldsymbol{\phi}(s, a)^\top \bar{\boldsymbol{\theta}}_{ht}^i$
        \State Execute $\pi^i_{ht}$ for task $i$ at step $h$
        \State Collect $\left\{s_{ht}^i, a_{ht}^i, r\left(s_{ht}^i, a_{ht}^i\right)\right\}$ for episode $t$.
    \EndFor
  \end{algorithmic}
\end{algorithm}


We have three types of variables in this global optimization problem, $\bar{\boldsymbol{\xi}}_h^i, \hat{\boldsymbol{\theta}}_h^i$, and $\bar{\boldsymbol{\theta}}_h^i$. Here $\bar{\boldsymbol{\theta}}_{h}^i$ denotes the estimator for $Q^{i *}_h$. We solve for the low-rank least-square solution of the approximate value iteration and denote the solution by $\hat{\boldsymbol{\theta}}_h^i$. Instead of adding the bonus term directly on $Q_{h}^i (\hat{\boldsymbol{\theta}}_h^i)$ to obtain an optimistic estimate of $Q^{i*}_h$ as in the tabular setting \citep{azar2017minimax, jin2018q} and linear MDP setting \citep{jin2020provably}, we use global variables $\bm{\bar{\xi}}_h^i$ to quantify the confidence bonus. This is because we cannot preserve the linear property of our estimator if we add the bonus directly, resulting in an exponential propagation of error. However, by using $\bar{\boldsymbol{\xi}}_h^i$ we can construct a linear estimator $Q_{h}^i \left(\bar{\boldsymbol{\theta}}_h^i\right)$ and obtain much smaller regret. A drawback of this global optimization technique is that we can only obtain an optimistic estimator at step 1, since values in different states and steps are possibly negatively correlated.


\subsection{Regret Bound}
\label{linear_rl:regret_bound}
\begin{theorem}
\label{theorem:linear_rl_regret_bound}
Under Assumption \ref{assumptions:low_rank_small_ibe} and \ref{assumptions:linear_rl_regularity}, with probability $1 - \delta$ the regret after $T$ episodes is bounded by
\begin{align}
\operatorname{Reg}(T) = \tilde{O} \left(H M \sqrt{dkT} + Hd\sqrt{kMT} + HMT\sqrt{d} \caI\right)
\end{align}
\end{theorem}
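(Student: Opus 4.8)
The plan is to adapt the ELEANOR-style analysis of \citet{zanette2020learning} to the multi-task low-rank setting, replacing the single-task confidence radius with a tighter \emph{joint} radius obtained by the low-rank projection argument already used in Lemma~\ref{lemma: confidence set for linear bandits}. The argument rests on three pillars: a joint confidence guarantee for the constrained least-square estimator, an optimism property for the global optimization procedure, and a telescoping regret decomposition summed via the elliptical potential lemma. I would first establish the confidence radius $\alpha_{ht}$ of Lemma~\ref{lemma:linear_rl_least_square_error}. Fixing a step $h$ and conditioning on the next-step estimate $V_{h+1}^i(\bar{\boldsymbol{\theta}}_{h+1}^i)$, the regression target has conditional mean equal to $\caT_h^i(Q_{h+1}^i)$ up to the inherent Bellman error $\caI$ (by Eqn~\ref{definitions:inherent_bellman_error_low_rank}), and the residual $z_h^i$ is bounded by Assumption~\ref{assumptions:linear_rl_regularity}. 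I would then mirror the proof of Lemma~\ref{lemma: confidence set for linear bandits}: since both the estimator $(\hat{\boldsymbol{\theta}}_h^1,\dots,\hat{\boldsymbol{\theta}}_h^M)$ and the target best-approximator lie in the low-rank space $\Theta_h$, their difference has rank at most $2k$ and factors as $\boldsymbol{U}_h\boldsymbol{R}_h$ with orthonormal $\boldsymbol{U}_h \in \mathbb{R}^{d\times 2k}$; projecting the features to $\boldsymbol{U}_h^\top\boldsymbol{\phi}_{hj}^i$, bounding the resulting self-normalized martingale term \citep{abbasi2011improved}, and covering all $\boldsymbol{U}_h$ by a Frobenius-norm $\epsilon$-net yields $\alpha_{ht} = \tilde{O}(Mk + kd + MT\caI^2)$, strictly sharper in the leading terms than the $\tilde{O}(Md + MT\caI^2)$ of treating tasks independently.

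Next I would prove optimism at the first step. The key point is that the true optimal value functions, after projection into the low-rank space and inflation by a feasible bonus $\bar{\boldsymbol{\xi}}_h^i$, form a point that (i) satisfies all constraints of Definition~\ref{formula:linear_rl_global_optimization} on the high-probability confidence event, and (ii) produces at step~$1$ a value no smaller than $\sum_i V_1^{i*}(s_{1t}^i)$ up to a cumulative bias of order $H\caI$. Since the global procedure \emph{maximizes} the step-$1$ value over all feasible points, its solution $\bar{\boldsymbol{\theta}}_{1t}^i$ inherits this optimism up to the same bias. I would establish this by backward induction on $h$, using the low-rank IBE to control the one-step approximation error and the budget $\sum_i\|\bar{\boldsymbol{\xi}}_h^i\|^2_{\tilde{\boldsymbol{V}}_{ht}^i(\lambda)} \le \alpha_{ht}$ to absorb the estimation error into the bonus.

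Finally I would decompose the regret. Optimism gives $(V_1^{i*} - V_1^{\pi_t^i})(s_{1t}^i) \le (V_1^i(\bar{\boldsymbol{\theta}}_{1t}^i) - V_1^{\pi_t^i})(s_{1t}^i)$, and telescoping the estimator $Q_h^i(\bar{\boldsymbol{\theta}}_{ht}^i)$ over $h$ along the trajectory of $\pi_t^i$ expresses the right-hand side as a sum over $h$ of the per-step bonus $|(\boldsymbol{\phi}_{ht}^i)^\top\bar{\boldsymbol{\xi}}_{ht}^i| \le \|\boldsymbol{\phi}_{ht}^i\|_{(\tilde{\boldsymbol{V}}_{ht}^i)^{-1}}\|\bar{\boldsymbol{\xi}}_{ht}^i\|_{\tilde{\boldsymbol{V}}_{ht}^i}$ plus a direct IBE term of order $\caI$. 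Summing over $t,i$ and applying Cauchy--Schwarz across tasks against the budget $\alpha_{ht}$, together with the elliptical potential bound $\sum_t\|\boldsymbol{\phi}_{ht}^i\|^2_{(\tilde{\boldsymbol{V}}_{ht}^i)^{-1}} = \tilde{O}(d)$ per task (so $\tilde{O}(Md)$ jointly), converts the bonus sum into $\tilde{O}(\sqrt{Md}\cdot\sqrt{T\alpha})$; plugging in $\alpha = \tilde{O}(Mk+kd+MT\caI^2)$ and the horizon factor $H$ from summing over steps gives $\tilde{O}(HM\sqrt{dkT} + Hd\sqrt{kMT} + HMT\sqrt{d}\caI)$. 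The direct IBE terms contribute only $\tilde{O}(HMT\caI)$, which is dominated by $\tilde{O}(HMT\sqrt{d}\caI)$.

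The hard part will be the uniform concentration underlying $\alpha_{ht}$. Unlike the bandit case, the regression targets depend on the previously computed, data-dependent value estimates, so the $\epsilon$-net over the $2k$-dimensional projection matrices $\boldsymbol{U}_h$ must be combined with a covering of the value-function class $\caV_{h+1}$, whose covering number I would control via the low-rank parametrization of $\Theta_{h+1}$. The delicate point is verifying that this combined covering number stays small enough that the $\sqrt{kd}$ (rather than $\sqrt{Md}$) scaling survives the union bound, while simultaneously tracking how $\caI$ propagates into both the confidence radius and the optimism bias without inflating the leading $\sqrt{T}$ terms.
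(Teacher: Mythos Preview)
Your proposal is correct and follows the paper's route exactly: the joint confidence radius via low-rank projection plus a double $\epsilon$-net (over $\boldsymbol{U}_h$ and, crucially, over $\Theta_{h+1}$ to handle the data-dependent regression targets---this is the paper's Lemma~\ref{lemma:linear_rl_failure_event_ht}), feasibility-based optimism for the global procedure (Lemma~\ref{lemma:linear_rl_optimism}), and the ELEANOR-style telescoping closed with Cauchy--Schwarz and the elliptical potential lemma. The only imprecision is that the per-step Bellman error is $\boldsymbol{\phi}^\top(\bar{\boldsymbol{\theta}}_h^i - \dot{\boldsymbol{\theta}}_h^i) + O(\caI)$, which involves \emph{both} the bonus $\bar{\boldsymbol{\xi}}_h^i = \bar{\boldsymbol{\theta}}_h^i - \hat{\boldsymbol{\theta}}_h^i$ and the least-squares error $\hat{\boldsymbol{\theta}}_h^i - \dot{\boldsymbol{\theta}}_h^i$ (cf.\ Lemma~\ref{lemma:linear_rl_bellman_error}), not just the former; but since each piece sits inside the same $\sqrt{\alpha_{ht}}$ budget this only costs a factor of $2$.
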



Compared to naively executing single-task linear RL algorithms (e.g. the ELEANOR algorithm) on each task without information-sharing, which incurs regret $\tilde{O} (HM d\sqrt{T} + HMT\sqrt{d} \caI)$, our regret bound is smaller by a factor of approximately $\sqrt{d/k}$ in our setting where $k \ll d$ and $k \ll M$.

We give a brief explanation on how we improve the regret bound and defer the full analysis to appendix \ref{sec: omiited proof in linea_rl}. We start with the decomposition of the regret. Let $\bar{Q}_{ht}^i$($\bar{V}_{ht}^i$) be the solution of the problem in definition \ref{formula:linear_rl_global_optimization} in episode $t$, then
\begin{align}
& \text{Reg}(T) = \sum_{t=1}^T \sum_{i=1}^M \left(V_1^{i*} - \bar{V}_{1t}^i +  \bar{V}_{1t}^i - V_1^{\pi_{t}^i} \right)\left(s_{1t}^i\right) \\
\label{formula:linear_rl_sketch_optimism}
& \leq HMT\caI \qquad \text{(by Lemma \ref{lemma:linear_rl_optimism})}\\
\label{formula:linear_rl_sketch_bellman_error}
& + \sum_{t=1}^T \sum_{h=1}^H \sum_{i=1}^M \left( \left|\bar{Q}_{ht}^i (s, a) - \caT_h^i \bar{Q}_{h+1,t}^i (s, a)\right| + \zeta_{ht}^i\right).
\end{align}

In (\ref{formula:linear_rl_sketch_optimism}) we use the optimistic property of $\bar{V}_{1t}^i$. In  (\ref{formula:linear_rl_sketch_bellman_error}),   $\zeta_{ht}^i$ is a martingale difference (defined in section \ref{appendix_linear_rl:regret_bound}) with regards to $\caF_{h,t}$, and the dominate term (the first term) is the Bellman error of $\bar{Q}_{ht}^i$.


For any $\{Q_{h+1}^i\}_{i=1}^M \in \caQ_{h+1}$, we can find a group of vectors $\{\dot{\boldsymbol{\theta}}_{h}^i(Q_{h+1}^i) \}_{i=1}^M \in \Theta_h$ that satisfy $\Delta_h^i \left(Q_{h+1}^i\right)(s, a) \defeq \caT_h^i \left(Q_{h+1}^i\right)(s, a) - \boldsymbol{\phi}(s, a)^\top \dot{\boldsymbol{\theta}}_{h}^i\left(Q_{h+1}^i\right)$ and the approximation error $\left\|\Delta_h^i \left(Q_{h+1}^i\right)\right\|_{\infty} \leq \caI$ is small for each $i \in [M]$. By definition, $\dot{\boldsymbol{\theta}}_{h}^i\left(Q_{h+1}^i\right)$ is actually the best approximator of $\caT_h^i \left(Q_{h+1}^i\right)$ in the function class $\caQ_h$. Since our algorithm is based on least-square value iteration, a key step is to bound the error of estimating $\dot{\boldsymbol{\theta}}_{h}^i(\bar{Q}_{h+1,t}^i)$ ($\dot{\boldsymbol{\theta}}_{h}^i$ for short). In the global optimization procedure, we use $\hat{\boldsymbol{\theta}}_h^i$ to approximate the empirical Bellman backup. In Lemma \ref{lemma:linear_rl_least_square_error} we show 
\begin{align}
\label{formula:linear_rl_sketch_least_square_error}
\sum_{i=1}^{M}\left\|\hat{\boldsymbol{\theta}}_h^i - \dot{\boldsymbol{\theta}}_h^i\right\|^{2}_{\tilde{\boldsymbol{V}}_{ht}^i(\lambda)} = \tilde{O}\left(Mk + kd + MT\caI^2\right)
\end{align}

This is the key step leading to improved regret bound. If we solve each task independently without information sharing, we can only bound the least square error in (\ref{formula:linear_rl_sketch_least_square_error}) as $\tilde{O}(Md+MT\caI^2)$. Our bound is much more sharper since $k \ll d$ and $k \ll M$.

Using the least square error in (\ref{formula:linear_rl_sketch_least_square_error}), we can show that the dominate term in (\ref{formula:linear_rl_sketch_bellman_error}) is bounded by (see Lemma \ref{lemma:linear_rl_bellman_error} and section \ref{appendix_linear_rl:regret_bound})
\begin{align}
& \sum_{i=1}^M \left|\bar{Q}_{ht}^i (s, a) - \caT_h^i \bar{Q}_{h+1,t}^i (s, a)\right| \leq M\caI +  \tilde{O}\left(\sqrt{Mk + kd + MT\caI^2}\right) \cdot \sqrt{\sum_{i=1}^M \left\|
\boldsymbol{\phi}(s_{ht}^i, a_{ht}^i)\right\|_{\tilde{\boldsymbol{V}}_{ht}^i(\lambda)^{-1}}^2} 
\end{align}

\citet[Lemma 11]{abbasi2011improved} states that $\sum_{t=1}^T \left\|
\boldsymbol{\phi}(s_{ht}^i, a_{ht}^i)\right\|_{\tilde{\boldsymbol{V}}_{ht}^i(\lambda)^{-1}}^2 = \tilde{O}(d)$ for any $h$ and $i$, so we can finally bound the regret as
\begin{align*}
\text{Reg}(T) & = \tilde{O}\left(HMT\caI + H\sqrt{Mk + kd + MT\caI^2} \cdot \sqrt{MTd}\right) \notag \\
& = \tilde{O}\left(H M \sqrt{dkT} + Hd\sqrt{kMT} + HMT\sqrt{d} \caI\right)
\end{align*}
where the first equality is by Cauchy-Schwarz.

\subsection{Lower Bound}
\label{linear_rl:lower_bound}

This subsection presents the lower bound for multi-task reinforcement learning with low inherent Bellman error. Our lower bound is derived from the lower bound in the single-task setting. As a byproduct, we also derive a lower bound for misspecified linear RL in the single-task setting. We defer the proof of Theorem~\ref{theorem:linear_rl_lower_bound} to Appendix~\ref{sec: proof of the lower bound for rl}.

\begin{theorem}
\label{theorem:linear_rl_lower_bound}
For our construction in appendix \ref{sec: proof of the lower bound for rl}, the expected regret of any algorithm where $d,k,H \geq 10, |\mathcal{A}| \geq 3,M \geq k,  T= \Omega(d^2H),\mathcal{I} \leq 1 / 4H$ is


$$
\Omega \left(Mk\sqrt{HT} + d\sqrt{HkMT} +  HMT \sqrt{d} \mathcal{I}\right)
$$

\end{theorem}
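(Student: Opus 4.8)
The plan is to establish each of the three summands as a separate lower bound and then combine them through the elementary inequality $\max\{a,b,c\}\ge \tfrac13(a+b+c)$, so it suffices to exhibit, for each term, a family of instances on which any algorithm pays regret at least that term. All three constructions lift a single-task or bandit lower bound to the episodic RL setting, and the lifting is where the extra factors of $\sqrt{H}$ (for the first two terms) and $H$ (for the last) enter. The key device is the $[0,1]$-normalization of the value functions from Assumption~\ref{assumptions:linear_rl_regularity}: placing rewards of scale $1/H$ at each of the $H$ stages turns one horizon-$H$ episode into the RL analogue of a batch of correlated bandit pulls, and the self-normalized fluctuation over these stages is what produces the $\sqrt{H}$ gains relative to the bandit bounds of Theorem~\ref{thm: lower bound for linear bandits}; conversely, a per-stage bias of size $\mathcal{I}$ accumulates linearly to $H\mathcal{I}$ in the value, which is the source of the horizon factor in the misspecification term.

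For the first term $Mk\sqrt{HT}$ I would fix the representation $\boldsymbol{B}$ to a known coordinate projection so that the $M$ tasks decouple into $M$ statistically independent $k$-dimensional episodic linear RL instances, each a hard instance forcing $\Omega(k\sqrt{HT})$ regret in the normalized setting (the RL analogue of the $k\sqrt{T}$ bandit bound). Since the tasks share no randomness and regret is additive across tasks, the total regret is at least $\Omega(Mk\sqrt{HT})$. For the second term $d\sqrt{HkMT}$, the genuine representation-learning cost, I would instead make all $M$ tasks share a single unknown rank-$k$ subspace $\boldsymbol{B}$ and reduce directly to the multi-task linear bandit construction underlying Theorem~\ref{thm: lower bound for linear bandits}: embedding that instance into the $H$ stages (again with $1/H$-scaled rewards) lifts its $d\sqrt{kMT}$ bound to $d\sqrt{HkMT}$. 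The information-theoretic core is that distinguishing among a packing of $d\times k$ orthonormal representations requires resolving $\Theta(dk)$ unknown directions, while the pooled sample budget of order $MTH$ across all tasks and stages buys accuracy only $\sqrt{dk/(MTH)}$ per direction, which translates into the stated per-episode regret after summing.

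For the misspecification term $HMT\sqrt{d}\,\mathcal{I}$ I would adapt the $MT\sqrt{d}\zeta$ term of Theorem~\ref{thm: lower bound for linear bandits}: construct instances that are linearly indistinguishable up to a bias of size $\mathcal{I}$ at every stage but whose optimal actions differ, so that the best linear predictor in $\caQ_h$ necessarily incurs inherent Bellman error at most $\mathcal{I}$ (verifying the constraint $\mathcal{I}\le 1/(4H)$, which keeps the accumulated bias below $1/4$ and the values in $[0,1]$), while any policy pays $\Omega(\sqrt{d}\,\mathcal{I})$ per stage. Summing over the $H$ stages, $M$ tasks, and $T$ episodes gives $\Omega(HMT\sqrt{d}\,\mathcal{I})$, which is irreducible precisely because the bias cannot be detected from data. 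Taking the maximum of the three bounds and applying the inequality above yields the claimed $\Omega(Mk\sqrt{HT}+d\sqrt{HkMT}+HMT\sqrt{d}\,\mathcal{I})$, with the hypothesis $T=\Omega(d^2H)$ ensuring that each embedded bandit instance is in the regime where its minimax bound is valid.

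The main obstacle is the second term. Unlike the first term, the $M$ tasks cannot be decoupled — the whole point is that they share the unknown representation — so the lower bound must be information-theoretic (a Fano/Assouad argument over a packing of subspaces) rather than a sum of independent single-task bounds, and one must simultaneously (i) tie the $M$ tasks' rewards to a common $\boldsymbol{B}$ so the sample budget is genuinely pooled, (ii) carry the horizon bookkeeping through the $1/H$-scaled rewards to obtain exactly $\sqrt{H}$ and not $H$ or $1$, and (iii) keep every instance inside the multi-task LSVI function class with the prescribed inherent Bellman error. Ensuring that the third term's misspecification construction is simultaneously compatible with these constraints — so that all three bounds hold for one family of instances satisfying Assumptions~\ref{assumptions:low_rank_small_ibe} and~\ref{assumptions:linear_rl_regularity} — is the remaining delicate point.
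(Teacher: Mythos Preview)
Your three-term decomposition and the combination via $\max\{a,b,c\}\ge\tfrac13(a+b+c)$ match the paper's structure, and your treatment of the first term is essentially the same (the paper phrases it as a pigeonhole argument: if total regret $\le CMk\sqrt{HT}$ then some task has regret $\le Ck\sqrt{HT}$, contradicting a single-task $k$-dimensional RL lower bound).

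Where you diverge is on the second and third terms. You identify the second term as the main obstacle and propose a Fano/Assouad argument over a packing of rank-$k$ subspaces, explicitly arguing that ``the $M$ tasks cannot be decoupled.'' The paper shows this is not so: it avoids any multi-task information-theoretic argument by a simple grouping trick. Partition the $M$ tasks into $k$ groups of size $M/k$, make all tasks within a group share the \emph{same} parameter vector $\boldsymbol{\theta}_h^i$ (so $\boldsymbol{w}_h^i=\boldsymbol{e}_j$ for group $j$), and note that solving a group is no easier than a single $d$-dimensional task with $MT/k$ episodes. Applying a single-task RL lower bound $\Omega(d\sqrt{HT'}+\sqrt{d}\,H\mathcal{I}T')$ with $T'=MT/k$ and summing over the $k$ groups gives $\Omega(d\sqrt{HkMT}+\sqrt{d}\,HMT\mathcal{I})$ directly. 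So both the representation-learning term and the misspecification term fall out of the same reduction, and no subspace packing is needed.

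What the paper does invest effort in is the single-task ingredient itself: it constructs a chain MDP $x_1\to x_2\to\cdots$ with an absorbing rewarding state, where the transition probability out of the chain at each step is $\delta+\boldsymbol{\mu}_h^\top\boldsymbol{a}+\zeta_h(\boldsymbol{a})$ with $\delta=1/H$, and shows (via a value-decomposition lemma) that the episodic regret dominates a constant fraction of $H/2$ independent misspecified linear bandit regrets. The $\sqrt{H}$ factor arises because each embedded bandit has variance scale $\delta=1/H$ (giving per-step regret $d\sqrt{T\delta}$), and there are $\Theta(H)$ of them; it is not a $1/H$-reward-at-each-stage mechanism as you describe. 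Your route could likely be pushed through, but it is strictly harder than the paper's, and your stated ``main obstacle'' dissolves once you see the grouping reduction.
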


Careful readers may find that there is a gap of $\sqrt{H}$ in the first two terms between the upper bound and the lower bound. This gap is because the confidence set used in the algorithm is intrinsically ``Hoeffding-type''. Using a ``Bernstein-type'' confidence set can potentially improve the upper bound by a factor of $\sqrt{H}$. This ``Bernstein'' technique has been well exploited in many previous results for single-task RL~\citep{azar2017minimax,jin2018q, zhou2020nearly}. Since 
our focus is mainly on the benefits of multi-task representation learning, we don't apply this technique for the clarity of the analysis. If we ignore this gap in the dependence on $H$, our upper bound matches this lower bound in the regime where $d \geq M$.
\section{Conclusion}
\label{sec: conclusion}

In this paper, we study provably sample-efficient representation learning for multi-task linear bandits and linear RL. For linear bandits, we propose an algorithm called MTLR-OFUL, which obtains near-optimal regret in the regime where $d \geq M$. We then extend our algorithms to multi-task RL setting, and propose a sample-efficient algorithm, MTLR-LSVI.

There are two directions for future investigation. First, our algorithms are statistically sample-efficient, but a computationally efficient implementation is still unknown, although we conjecture our MTLR-OFUL algorithm is computationally efficient. How to design both computationally and statistically efficient algorithms in our multi-task setting is an interesting problem for future research. Second, there remains a gap of $\sqrt{d/k}$ between regret upper and lower bounds (in the first term).
We conjecture that our lower bound is not minimax optimal and hope to address this problem in the future work. 

\bibliography{ref.bib}

\clearpage
\newpage

\clearpage
\setcounter{section}{0}
\appendix
\renewcommand{\appendixname}{Appendix~\Alph{section}}
\section*{Appendices}
\addcontentsline{toc}{section}{Appendices}
\renewcommand{\thesubsection}{\Alph{subsection}}

\subsection{Omitted Proof in Section~\ref{sec: linear_bandits}}
\label{sec: omitted proof for linear bandits}

\subsubsection{Proof of Lemma~\ref{lemma: confidence set for linear bandits}}
\label{sec: proof of lemma, confidence set for linear bandits}

\begin{proof}
By the optimality of $\hat{\boldsymbol{B}}_t$ and $\hat{\boldsymbol{W}}_t = [\hat{\boldsymbol{w}}_{t,1}, \cdots, \hat{\boldsymbol{w}}_{t,M}]$, we know that $\sum_{i=1}^{M} \left\|\boldsymbol{y}_{t-1,i}-\boldsymbol{X}_{t-1,i}^{\top} \hat{\boldsymbol{B}}_t \hat{\boldsymbol{w}}_{t,i}\right\|^{2}_2 \leq \sum_{i=1}^{M} \left\|\boldsymbol{y}_{t-1,i}-\boldsymbol{X}_{t-1,i}^{\top} \boldsymbol{B} \boldsymbol{w}_{i}\right\|^{2}_2$. Since $\boldsymbol{y}_{t-1,i} = \boldsymbol{X}_{t-1,i}^{\top} \boldsymbol{B} \boldsymbol{w}_{i} + \boldsymbol{\eta}_{t-1,i}$, we have
\begin{align}
    \label{eqn: optimality of hTheta}
    \sum_{i=1}^{M} \left\|\boldsymbol{X}_{t-1,i}^{\top} \left(\hat{\boldsymbol{B}}_t \hat{\boldsymbol{w}}_{t,i} - \boldsymbol{B} \boldsymbol{w}_{i}\right)\right\|^2_2 \leq 2\sum_{i=1}^{M} \boldsymbol{\eta}_{t-1,i}^{\top} \boldsymbol{X}_{t-1,i}^{\top} \left(\hat{\boldsymbol{B}}_t \hat{\boldsymbol{w}}_{t,i} - \boldsymbol{B} \boldsymbol{w}_{i}\right).
\end{align}

We firstly analyse the non-trivial setting where $d \geq 2k$. Note that both $\boldsymbol{\Theta} = \boldsymbol{B}\boldsymbol{W}$ and $\hat{\boldsymbol{\Theta}}_t = \hat{\boldsymbol{B}}_t \hat{\boldsymbol{W}}_t$ are low-rank matrix with rank upper bounded by $k$, which indicates that $\operatorname{rank}\left(\hat{\boldsymbol{\Theta}}_t-\boldsymbol{\Theta}\right) \leq 2k$. In that case, we can write $\hat{\boldsymbol{\Theta}}_t-\boldsymbol{\Theta} = \boldsymbol{U}_t \boldsymbol{R}_t = [\boldsymbol{U}_t\boldsymbol{r}_{t,1}, \boldsymbol{U}_t\boldsymbol{r}_{t,2}, \cdots, \boldsymbol{U}_t\boldsymbol{r}_{t,M}]$, where $\boldsymbol{U}_t \in \mathbb{R}^{d \times 2k}$ is an orthonormal matrix with $\|\boldsymbol{U}_t\|_F = \sqrt{2k}$, and $\boldsymbol{R}_t \in \mathbb{R}^{2k\times M}$ satisfies $\|\boldsymbol{r}_{t,i}\|_2 \leq \sqrt{k}$. In other words, we can write $\hat{\boldsymbol{B}}_t \hat{\boldsymbol{w}}_{t,i} - \boldsymbol{B} \boldsymbol{w}_{i} = \boldsymbol{U}_t \boldsymbol{r}_{t,i}$ for certain $\boldsymbol{U}_t$ and $\boldsymbol{r}_{t,i}$.

Define $\boldsymbol{V}_{t-1,i}(\lambda) \stackrel{\text { def }}{=} \left( \boldsymbol{U}^{\top}_t \boldsymbol{X}_{t-1,i}\right)\left(\boldsymbol{U}_t^{\top}\boldsymbol{X}_{t-1,i} \right)^{\top}+\lambda \boldsymbol{I} $.  We have:
\begin{align}
    \lefteqn{\sum_{i=1}^{M}\left\|\hat{\boldsymbol{B}}_t \hat{\boldsymbol{w}}_{t,i}-\boldsymbol{B}\boldsymbol{w}_{i}\right\|^{2}_{\tilde{\boldsymbol{V}}_{t-1,i}(\lambda)}} \label{eqn: hTheta - Theta decomposition 0} \\
    \label{eqn: hTheta - Theta decomposition 1}
     = &  \sum_{i=1}^{M} \left\|\boldsymbol{X}_{t-1,i}^{\top} \left(\hat{\boldsymbol{B}}_t \hat{\boldsymbol{w}}_{t,i} - \boldsymbol{B} \boldsymbol{w}_{i}\right)\right\|^2_2 + \sum_{i=1}^{M} \lambda \left\|\hat{\boldsymbol{B}}_t \hat{\boldsymbol{w}}_{t,i}-\boldsymbol{B}\boldsymbol{w}_{i}\right\|^{2}_2 \\
     \label{eqn: hTheta - Theta decomposition 2}
     \leq & 2\sum_{i=1}^{M} \boldsymbol{\eta}_{t-1,i}^{\top} \boldsymbol{X}_{t-1,i}^{\top} \left(\hat{\boldsymbol{B}}_t \hat{\boldsymbol{w}}_{t,i} - \boldsymbol{B} \boldsymbol{w}_{i}\right) + 4M \lambda \\
     \label{eqn: hTheta - Theta decomposition 3}
     = & 2\sum_{i=1}^{M} \boldsymbol{\eta}_{t-1,i}^{\top} \boldsymbol{X}_{t-1,i}^{\top} \boldsymbol{U}_t \boldsymbol{r}_{t,i} + 4M \lambda \\
     \label{eqn: hTheta - Theta decomposition 4}
     \leq &  2\sum_{i=1}^{M} \left\|\boldsymbol{\eta}_{t-1,i}^{\top} \boldsymbol{X}_{t-1,i}^{\top} \boldsymbol{U}_t\right\|_{\boldsymbol{V}^{-1}_{t-1,i}(\lambda)} \left\|\boldsymbol{r}_{t,i}\right\|_{\boldsymbol{V}_{t-1,i}(\lambda)} + 4M\lambda \\
     \label{eqn: hTheta - Theta decomposition 5}
     \leq & 2\sqrt{\sum_{i=1}^{M} \left\|\boldsymbol{\eta}_{t-1,i}^{\top} \boldsymbol{X}_{t-1,i}^{\top} \boldsymbol{U}_t\right\|^2_{\boldsymbol{V}^{-1}_{t-1,i}(\lambda)}} \sqrt{\sum_{i=1}^{M}\left\|\boldsymbol{r}_{t,i}\right\|^2_{\boldsymbol{V}_{t-1,i}(\lambda)}} + 4M\lambda \\
     \label{eqn: hTheta - Theta decomposition 6}
     = & 2\sqrt{\sum_{i=1}^{M} \left\|\boldsymbol{\eta}_{t-1,i}^{\top} \boldsymbol{X}_{t-1,i}^{\top} \boldsymbol{U}_t\right\|^2_{\boldsymbol{V}^{-1}_{t-1,i}(\lambda)}} \sqrt{\sum_{i=1}^{M}\left\|\hat{\boldsymbol{B}}_t \hat{\boldsymbol{w}}_{t,i}-\boldsymbol{B}\boldsymbol{w}_{i}\right\|^{2}_{\tilde{\boldsymbol{V}}_{t-1,i}(\lambda)}} + 4M\lambda
\end{align}

Eqn~\ref{eqn: hTheta - Theta decomposition 2} is due to Eqn~\ref{eqn: optimality of hTheta}, $\left\|\hat{\boldsymbol{B}}_t \hat{\boldsymbol{w}}_{t,i}\right\| \leq 1$ and $\left\|\boldsymbol{B} \boldsymbol{w}_{i}\right\| \leq 1$. Eqn~\ref{eqn: hTheta - Theta decomposition 5} is due to Cauchy-Schwarz inequality. Eqn~\ref{eqn: hTheta - Theta decomposition 6} is from 
$$\sum_{i=1}^{M}\left\|\hat{\boldsymbol{B}}_t \hat{\boldsymbol{w}}_{t,i}-\boldsymbol{B}\boldsymbol{w}_{i}\right\|^{2}_{\tilde{\boldsymbol{V}}_{t-1,i}(\lambda)} = \sum_{i = 1}^M \left\|\boldsymbol{U}_{t}\boldsymbol{r}_{t,i}\right\|^2_{\tilde{\boldsymbol{V}}_{t-1,i}(\lambda)} = \sum_{i = 1}^M \left\|\boldsymbol{r}_{t,i}\right\|^2_{\boldsymbol{U}_t^\top \tilde{\boldsymbol{V}}_{t-1, i}(\lambda)  \boldsymbol{U}_t} = \sum_{i = 1}^M \left\|\boldsymbol{r}_{t,i}\right\|^2_{\boldsymbol{V}_{t-1, i}(\lambda)}.$$

The main problem is how to bound $\left\|\boldsymbol{\eta}_{t-1,i}^{\top} \boldsymbol{X}_{t-1,i}^{\top} \boldsymbol{U}_t\right\|_{\boldsymbol{V}^{-1}_{t-1,i}(\lambda)} = \left\|\sum_{n=1}^{t-1}\eta_{n,i}  \boldsymbol{U}^{\top}_t x_{n,i}\right\|_{\boldsymbol{V}^{-1}_{t-1,i}(\lambda)}$. Note that for a fixed $\boldsymbol{U}_t = \bar{\boldsymbol{U}}$, we can regard $\bar{\boldsymbol{U}}^{\top} \boldsymbol{x}_{n,i} \in \mathbb{R}^{k}$ as the corresponding ``action'' chosen in step $t$. 
With this observation, if $\boldsymbol{U}_t$ is fixed, we can bound this term following the arguments of the self-normalized bound for vector-valued martingales~\citep{abbasi2011improved}.

\begin{lemma}
\label{lemma: self-normalized bound}
For a fixed $\bar{\boldsymbol{U}}$, define $\bar{\boldsymbol{V}}_{t,i}(\lambda) \stackrel{\text { def }}{=} \left(\bar{\boldsymbol{U}}^{\top}\boldsymbol{X}_{t,i}  \right)\left(\bar{\boldsymbol{U}}^{\top}\boldsymbol{X}_{t,i} \right)^{\top}+\lambda \boldsymbol{I} $, then any $\delta > 0$, with probability at least $1-\delta$, for all $t \geq 0$,
\begin{align}
    & \sum_{i=1}^{M}\left\|\bar{\boldsymbol{U}}^{\top} \boldsymbol{X}_{t,i}\boldsymbol{\eta}_{t,i}\right\|^2_{\bar{\boldsymbol{V}}_{t,i}^{-1}} \\
    \leq &2  \log\left(\frac{\prod_{i=1}^{M}\left(\operatorname{det}(\bar{\boldsymbol{V}}_{t,i})^{1/2}\operatorname{det}(\lambda\boldsymbol{I})^{-1/2}\right)}{\delta}\right).
\end{align}
\end{lemma}

We defer the proof of Lemma~\ref{lemma: self-normalized bound} to Appendix~\ref{sec: proof of self-normalized bound}. We set $\lambda=1$. By Lemma~\ref{lemma: self-normalized bound}, we know that for a fixed $\bar{\boldsymbol{U}}$, with probability at least $1-\delta_1$,
\begin{align}
    \label{eqn: self-normalized bound for barU}
    \sum_{i=1}^{M}\left\|\sum_{n=1}^{t-1}\eta_{n,i}  \bar{\boldsymbol{U}}^{\top} x_{n,i}\right\|^2_{\bar{\boldsymbol{V}}^{-1}_{t,i}(\lambda)} \leq 2 \log\left(\frac{\prod_{i=1}^{M}\operatorname{det}(\bar{\boldsymbol{V}}_{t,i}(\lambda))^{1/2} \operatorname{det}(\lambda\boldsymbol{I})^{-1/2}}{\delta_1}\right) \leq 2Mk + 2 \log (1/\delta_1).
\end{align}

The above analysis shows that we can bound $\left\|\boldsymbol{\eta}_{t-1,i}^{\top} \boldsymbol{X}_{t-1,i}^{\top} \boldsymbol{U}_t\right\|_{\boldsymbol{V}^{-1}_{t-1,i}(\lambda)}$ if $\boldsymbol{U}_{t}$ is fixed as $\bar{\boldsymbol{U}}$. Following this idea, we  prove the lemma by the construction of $\epsilon$-net over all possible $\boldsymbol{U}_t$. To apply the trick of $\epsilon$-net, we need to slightly modify the derivation of Eqn~\ref{eqn: hTheta - Theta decomposition 0}. For a fixed matrix $\bar{\boldsymbol{U}} \in \mathbb{R}^{d \times 2k}$, we have
\begin{align}
    \label{eqn: hTheta - Theta decomposition 1-0}
    &\sum_{i=1}^{M}\left\|\hat{\boldsymbol{B}}_t \hat{\boldsymbol{w}}_{t,i}-\boldsymbol{B}\boldsymbol{w}_{i}\right\|^{2}_{\tilde{\boldsymbol{V}}_{t-1,i}(\lambda)} \\
    \label{eqn: hTheta - Theta decomposition 1-1}
     \leq & 2\sum_{i=1}^{M} \boldsymbol{\eta}_{t-1,i}^{\top} \boldsymbol{X}_{t-1,i}^{\top} \boldsymbol{U}_t \boldsymbol{r}_{t,i} + 4M \lambda \\
     \label{eqn: hTheta - Theta decomposition 1-2}
     = &  2\sum_{i=1}^{M} \boldsymbol{\eta}_{t-1,i}^{\top} \boldsymbol{X}_{t-1,i}^{\top} \bar{\boldsymbol{U}} \boldsymbol{r}_{t,i} + 2\sum_{i=1}^{M} \boldsymbol{\eta}_{t-1,i}^{\top} \boldsymbol{X}_{t-1,i}^{\top} \left(\boldsymbol{U}_t-\bar{\boldsymbol{U}} \right)\boldsymbol{r}_{t,i} + 4M \lambda \\ 
     \label{eqn: hTheta - Theta decomposition 1-3}
     \leq & 2\sum_{i=1}^{M} \left\|\boldsymbol{\eta}_{t-1,i}^{\top} \boldsymbol{X}_{t-1,i}^{\top} \bar{\boldsymbol{U}}\right\|_{\bar{\boldsymbol{V}}^{-1}_{t-1,i}(\lambda)} \left\|\boldsymbol{r}_{t,i}\right\|_{\bar{\boldsymbol{V}}_{t-1,i}(\lambda)} + 2\sum_{i=1}^{M} \boldsymbol{\eta}_{t-1,i}^{\top} \boldsymbol{X}_{t-1,i}^{\top} \left(\boldsymbol{U}_t-\bar{\boldsymbol{U}} \right)\boldsymbol{r}_{t,i} + 4M \lambda \\
     \label{eqn: hTheta - Theta decomposition 1-4}
     = & 2\sum_{i=1}^{M} \left\|\boldsymbol{\eta}_{t-1,i}^{\top} \boldsymbol{X}_{t-1,i}^{\top} \bar{\boldsymbol{U}}\right\|_{\bar{\boldsymbol{V}}^{-1}_{t-1,i}(\lambda)} \left\|\boldsymbol{r}_{t,i}\right\|_{\boldsymbol{V}_{t-1,i}(\lambda)} + 2\sum_{i=1}^{M} \boldsymbol{\eta}_{t-1,i}^{\top} \boldsymbol{X}_{t-1,i}^{\top} \left(\boldsymbol{U}_t-\bar{\boldsymbol{U}} \right)\boldsymbol{r}_{t,i}  \\
     & + 2\sum_{i=1}^{M} \left\|\boldsymbol{\eta}_{t-1,i}^{\top} \boldsymbol{X}_{t-1,i}^{\top} \bar{\boldsymbol{U}}\right\|_{\bar{\boldsymbol{V}}^{-1}_{t-1,i}(\lambda)} \left(\left\|\boldsymbol{r}_{t,i}\right\|_{\bar{\boldsymbol{V}}_{t-1,i}(\lambda)} - \left\|\boldsymbol{r}_{t,i}\right\|_{\boldsymbol{V}_{t-1,i}(\lambda)}\right) + 4M \lambda\\
     \label{eqn: hTheta - Theta decomposition 1-5}
     \leq & 2\sqrt{\sum_{i=1}^{M} \left\|\boldsymbol{\eta}_{t-1,i}^{\top} \boldsymbol{X}_{t-1,i}^{\top} \bar{\boldsymbol{U}}\right\|^2_{\boldsymbol{V}^{-1}_{t-1,i}(\lambda)}} \sqrt{\sum_{i=1}^{M}\left\|\hat{\boldsymbol{B}}_t \hat{\boldsymbol{w}}_{t,i}-\boldsymbol{B}\boldsymbol{w}_{i}\right\|^{2}_{\tilde{\boldsymbol{V}}_{t-1,i}(\lambda)}} + 2\sum_{i=1}^{M} \boldsymbol{\eta}_{t-1,i}^{\top} \boldsymbol{X}_{t-1,i}^{\top} \left(\boldsymbol{U}_t-\bar{\boldsymbol{U}} \right)\boldsymbol{r}_{t,i}  \\
     & + 2\sum_{i=1}^{M} \left\|\boldsymbol{\eta}_{t-1,i}^{\top} \boldsymbol{X}_{t-1,i}^{\top} \bar{\boldsymbol{U}}\right\|_{\bar{\boldsymbol{V}}^{-1}_{t-1,i}(\lambda)} \left(\left\|\boldsymbol{r}_{t,i}\right\|_{\bar{\boldsymbol{V}}_{t-1,i}(\lambda)} - \left\|\boldsymbol{r}_{t,i}\right\|_{\boldsymbol{V}_{t-1,i}(\lambda)}\right) + 4M \lambda
\end{align}

Eqn~\ref{eqn: hTheta - Theta decomposition 1-1}, \ref{eqn: hTheta - Theta decomposition 1-3} and \ref{eqn: hTheta - Theta decomposition 1-5} follow the same idea of Eqn~\ref{eqn: hTheta - Theta decomposition 3}, \ref{eqn: hTheta - Theta decomposition 4} and \ref{eqn: hTheta - Theta decomposition 6}. 

We construct an $\epsilon$-net $\mathcal{E}$ in Frobenius norm over the matrix set $\left\{\boldsymbol{U} \in \mathbb{R}^{d \times 2k} : \|\boldsymbol{U}\|_F \leq k \right\}$. It is not hard to see that $|\mathcal{E}| \leq \left(\frac{6\sqrt{2k}}{\epsilon}\right)^{2kd}$. By the union bound over all possible $\bar{\boldsymbol{U}} \in \mathcal{E}$, we know that with probability $1- |\mathcal{E}|\delta_1$, Eqn~\ref{eqn: self-normalized bound for barU} holds for any $\bar{\boldsymbol{U}} \in \mathcal{E}$. For each $\boldsymbol{U}_t$, we choose an $\bar{\boldsymbol{U}} \in \mathcal{E}$ with $\left\|\boldsymbol{U}_t - \bar{\boldsymbol{U}}\right\|_F \leq \epsilon$, and we have
\begin{align}
    \label{eqn: hTheta - Theta decomposition part 1}
     2\sqrt{\sum_{i=1}^{M} \left\|\boldsymbol{\eta}_{t-1,i}^{\top} \boldsymbol{X}_{t-1,i}^{\top} \bar{\boldsymbol{U}}\right\|^2_{\boldsymbol{V}^{-1}_{t-1,i}(\lambda)}} \leq 2 \sqrt{2Mk + 2 \log(1/\delta_1)}
\end{align}

Since $\left\|\boldsymbol{U}_t - \bar{\boldsymbol{U}}\right\|_F \leq \epsilon$, we have 
\begin{align}
    \label{eqn: hTheta - Theta decomposition part 2}
    2\sum_{i=1}^{M} \left\|\boldsymbol{\eta}_{t-1,i}^{\top} \boldsymbol{X}_{t-1,i}^{\top} \bar{\boldsymbol{U}}\right\|_{\bar{\boldsymbol{V}}^{-1}_{t-1,i}(\lambda)} \left(\left\|\boldsymbol{r}_{t,i}\right\|_{\bar{\boldsymbol{V}}_{t-1,i}(\lambda)} - \left\|\boldsymbol{r}_{t,i}\right\|_{\boldsymbol{V}_{t-1,i}(\lambda)}\right) \leq 2\sqrt{Mk\epsilon (2Mk+2 \log(1/\delta_1))}.
\end{align}

For the term $ 2\sum_{i=1}^{M} \boldsymbol{\eta}_{t-1,i}^{\top} \boldsymbol{X}_{t-1,i}^{\top} \left(\boldsymbol{U}_t-\bar{\boldsymbol{U}} \right)\boldsymbol{r}_{t,i}$, the following inequality holds for any step $t \in [T]$ with probability $1-MT\delta_2$, 
\begin{align}
    \label{eqn: hTheta - Theta decomposition part 3}
    2\sum_{i=1}^{M} \boldsymbol{\eta}_{t-1,i}^{\top} \boldsymbol{X}_{t-1,i}^{\top} \left(\boldsymbol{U}_t-\bar{\boldsymbol{U}} \right)\boldsymbol{r}_{t,i} 
    \leq &2 \sum_{i=1}^{M} \left\|\boldsymbol{\eta}_{t-1,i}\right\|_2 \left\|  \boldsymbol{X}_{t-1,i}^{\top} \left(\boldsymbol{U}_t-\bar{\boldsymbol{U}} \right)\boldsymbol{r}_{t,i}\right\|_2 \\
    \leq & 2 \sum_{i=1}^{M} \left\|\boldsymbol{\eta}_{t-1,i}\right\|_2 \sqrt{kT\epsilon} \\
    \leq & 2 M \sqrt{2\log(2/\delta_2) kT^2 \epsilon}
\end{align}

The last inequality follows from the fact that $|\eta_{n, i}| \leq \sqrt{2 \log(2 / \delta_2)}$ with probability $1 - \delta_2$ for fixed $n, i$, and apply a union bound over $n \in [t - 1], i \in [M]$. Plugging Eqn.~\ref{eqn: hTheta - Theta decomposition part 1}, \ref{eqn: hTheta - Theta decomposition part 2} and  \ref{eqn: hTheta - Theta decomposition part 3} back to Eqn.~\ref{eqn: hTheta - Theta decomposition 1-5}, the following inequality holds for any $t \in [T]$ with probability at least $1- |\mathcal{E}|\delta_1 - MT \delta_2$:
\begin{align}
    &\sum_{i=1}^{M}\left\|\hat{\boldsymbol{B}}_t \hat{\boldsymbol{w}}_{t,i}-\boldsymbol{B}\boldsymbol{w}_{i}\right\|^{2}_{\tilde{\boldsymbol{V}}_{t-1,i}(\lambda)} \\
    \leq & 2\sqrt{Mk + 2\log(1/\delta_1)} \sqrt{\sum_{i=1}^{M}\left\|\hat{\boldsymbol{B}}_t \hat{\boldsymbol{w}}_{t,i}-\boldsymbol{B}\boldsymbol{w}_{i}\right\|^{2}_{\tilde{\boldsymbol{V}}_{t-1,i}(\lambda)}} \\
    \label{formula:linear_rl_cross_reference}
    &+ 2 M \sqrt{2\log(2/\delta_2) kT^2 \epsilon} + 2\sqrt{Mk\epsilon (2Mk+2 \log(1/\delta_1))} + 4M\lambda
\end{align}

By solving the above inequality, we know that 
\begin{align}
    \sum_{i=1}^{M}\left\|\hat{\boldsymbol{B}}_t \hat{\boldsymbol{w}}_{t,i}-\boldsymbol{B}\boldsymbol{w}_{i}\right\|^{2}_{\tilde{\boldsymbol{V}}_{t-1,i}(\lambda)} \leq &32 \left(Mk +  \log(1/\delta_1)\right) + 4 M \sqrt{2\log(2/\delta_2) kT^2 \epsilon} \\
    &+ 4\sqrt{Mk\epsilon (2Mk+2 \log(1/\delta_1))} + 8M\lambda
\end{align}

Setting $\lambda=1$, $\epsilon = \frac{1}{kM^2T^2}$, $\delta_1 = \frac{\delta}{2 \left(\frac{6\sqrt{2k}}{\epsilon}\right)^{2kd}} \leq \frac{\delta}{2 |\mathcal{E}|}$, and $\delta_2 = \frac{\delta}{2MT}$, the following inequality holds with probability $1-\delta$:
\begin{align}
    \sum_{i=1}^{M}\left\|\hat{\boldsymbol{B}}_t \hat{\boldsymbol{w}}_{t,i}-\boldsymbol{B}\boldsymbol{w}_{i}\right\|^{2}_{\tilde{\boldsymbol{V}}_{t-1,i}(\lambda)} & \leq L \defeq 48\left(Mk + 5kd \log(kMT)\right) + 32 \log(4MT) + 76 \log(1 / \delta)
\end{align}

At last we talk about the trivial setting where $k < d < 2k$. In this case, we can write $\hat{\boldsymbol{\Theta}}_t - \boldsymbol{\Theta} = \boldsymbol{R}_t$ where $\boldsymbol{R}_t \in \mathbb{R}^{d \times M}$. The proof then follows the same framework as the case when $d \geq 2k$, except that we don't need to consider $\boldsymbol{U}_t$ and construct $\epsilon$-net over all possible $\boldsymbol{U}_t$. It is not hard to show that $\sum_{i=1}^{M}\left\|\hat{\boldsymbol{B}}_t \hat{\boldsymbol{w}}_{t,i}-\boldsymbol{B}\boldsymbol{w}_{i}\right\|^{2}_{\tilde{\boldsymbol{V}}_{t-1,i}(\lambda)} \leq 24 \left(Md + 2 \log(Tk/\delta)\right)$ in this case, which is also less than $L$ since $d < 2k$. 
\end{proof}

\newpage 

\subsubsection{Proof of Theorem~\ref{thm: main theory for linear bandits}}
\label{sec: proof of main theorem for linear bandits}
With Lemma~\ref{lemma: confidence set for linear bandits}, we are ready to prove Theorem~\ref{thm: main theory for linear bandits}.

\begin{proof}
Let $\tilde{\boldsymbol{V}}_{t, i}(\lambda) = \boldsymbol{X}_{t,i}\boldsymbol{X}_{t,i}^{\top} + \lambda \boldsymbol{I}_d$ for some $\lambda > 0$.
\begin{align}
\mathrm{Reg}(T) & = \sum_{t = 1}^T \sum_{i = 1}^M \left\langle \boldsymbol{\theta}_i, \boldsymbol{x}^*_{t, i} - \boldsymbol{x}_{t, i} \right\rangle \\
& \leq \sum_{t = 1}^T \sum_{i = 1}^M \left\langle \tilde{\boldsymbol{\theta}}_{t, i} - \boldsymbol{\theta}_i, \boldsymbol{x}_{t, i} \right\rangle \\
& = \sum_{t = 1}^T \sum_{i = 1}^M \left\langle \tilde{\boldsymbol{\theta}}_{t, i} - \hat{\boldsymbol{\theta}}_{t , i} + \hat{\boldsymbol{\theta}}_{t , i} - \boldsymbol{\theta}_i, \boldsymbol{x}_{t, i} \right\rangle\\
& \leq \sum_{t = 1}^T \sum_{i = 1}^M \left(\left\| \tilde{\boldsymbol{\theta}}_{t, i} - \hat{\boldsymbol{\theta}}_{t , i}\right\|_{\tilde{\boldsymbol{V}}_{t - 1, i}(\lambda)} + \left\|\hat{\boldsymbol{\theta}}_{t , i} - \boldsymbol{\theta}_i\right\|_{\tilde{\boldsymbol{V}}_{t - 1, i}(\lambda)}\right) \left\|\boldsymbol{x}_{t, i} \right\|_{\tilde{\boldsymbol{V}}_{t - 1, i}(\lambda)^{-1}} \\
& \leq  \left(\sqrt{\sum_{t = 1}^T\sum_{i = 1}^M \left\| \tilde{\boldsymbol{\theta}}_{t, i} - \hat{\boldsymbol{\theta}}_{t , i}\right\|_{\tilde{\boldsymbol{V}}_{t - 1, i}(\lambda)}^2} + \sqrt{\sum_{i = 1}^M \left\|\hat{\boldsymbol{\theta}}_{t , i} - \boldsymbol{\theta}_i\right\|_{\tilde{\boldsymbol{V}}_{t - 1, i}(\lambda)}^2}\right) \cdot \sqrt{\sum_{t = 1}^T\sum_{i = 1}^M \left\|\boldsymbol{x}_{t, i} \right\|_{\tilde{\boldsymbol{V}}_{t - 1, i}(\lambda)^{-1}}^2}\\
& \leq 2\sqrt{T\left(L + 4\lambda M\right)} \cdot \sqrt{\sum_{i = 1}^M \sum_{t = 1}^T \left\|\boldsymbol{x}_{t, i} \right\|_{\tilde{\boldsymbol{V}}_{t - 1, i}(\lambda)^{-1}}^2}
\end{align}

where the first inequality is due to $\sum_{i=1}^{M} \left\langle \boldsymbol{\theta}_i, \boldsymbol{x}^*_{t, i} \right\rangle \leq \left\langle \tilde{\boldsymbol{\theta}}_{t,i}, \boldsymbol{x}_{t, i} \right\rangle$ from the optimistic choice of $\tilde{\boldsymbol{\theta}}_{t,i}$ and $\boldsymbol{x}_{t, i}$. By Lemma 11 of \citet{abbasi2011improved}, as long as $\lambda \geq 1$ we have
\begin{align}
    \label{eqn: linear bandit potential function}
    \sum_{t = 1}^T \left\|\boldsymbol{x}_{t, i} \right\|_{\boldsymbol{\tilde{V}}_{t - 1, i}(\lambda')^{-1}}^2 \leq 2 \log \frac{\det(\boldsymbol{\tilde{V}}_{T, i}(\lambda'))}{\det(\lambda' \boldsymbol{I}_d)} \leq 2 d\log\left(1 + \frac{T}{\lambda d}\right)
\end{align}
$$  $$

Therefore, we can finally bound the regret by choosing $\lambda = 1$
\begin{align}
\mathrm{Reg}(T) & \leq 2\sqrt{T(L + 4M)} \cdot \sqrt{ \sum_{i = 1}^M \sum_{t = 1}^T \left\|\boldsymbol{x}_{t, i} \right\|_{\tilde{\boldsymbol{V}}_{t - 1, i}(\lambda')^{-1}}^2} \\
& \leq 2 \sqrt{T  \left(L + 4M\right)} \cdot \sqrt{Md \log \left(1 + \frac{T}{d}\right)} \\
& = \tilde{O}\left(M\sqrt{dkT} + d\sqrt{kMT} \right).
\end{align}
\end{proof}

\newpage

\subsubsection{Proof of Lemma~\ref{lemma: self-normalized bound}}
\label{sec: proof of self-normalized bound}

The proof of Lemma~\ref{lemma: self-normalized bound} follows the similar idea of Theorem~1 in \citet{abbasi2011improved}. We consider the $\sigma$-algebra $F_t = \sigma\left(\{\boldsymbol{x}_{1,i}\}_{i=1}^{M},\{\boldsymbol{x}_{2,i}\}_{i=1}^{M},\cdots, \{\boldsymbol{x}_{t+1,i}\}_{i=1}^{M}, \{\eta_{1,i}\}_{i=1}^{M}, \{\eta_{2,i}\}_{i=1}^{M}, \cdots, \{\eta_{t,i}\}_{i=1}^{M}\right)$, then $\{\boldsymbol{x}_{t,i}\}_{i=1}^{M}$ is $F_{t-1}$-measurable, and $\{\eta_{t,i}\}_{i=1}^{M}$ is $F_t$-measurable.

Define $\bar{\boldsymbol{x}}_{t,i} = \boldsymbol{U}^{\top} \boldsymbol{x}_{t,i}$ and $\boldsymbol{S}_{t,i} = \sum_{n=1}^{t} \bar{\boldsymbol{U}}^{\top}\boldsymbol{x}_{t,i}\eta_{t,i}$. Let
\begin{align}
    M_t(\boldsymbol{Q}) = \exp\left(\sum_{n=1}^{t} \sum_{i=1}^{M}\left[\eta_{t,i} \left \langle \boldsymbol{q}_i,\bar{\boldsymbol{x}}_{t,i} \right \rangle - \frac{1}{2} \left \langle \boldsymbol{q}_i,\bar{\boldsymbol{x}}_{t,i} \right \rangle^2\right]\right), \quad \boldsymbol{Q} = [\boldsymbol{q}_1,\cdots, \boldsymbol{q}_M] \in \mathbb{R}^{2k\times M}
\end{align}

\begin{lemma}
\label{lemma: supermatingale}
Let $\tau$ be a stopping time w.r.t the filtration $\{F_t\}_{t=0}^{\infty}$. Then $M_t(\boldsymbol{Q})$ is almost surely well-defined and $\mathbb{E}[M_t(\boldsymbol{Q})] \leq 1$.
\end{lemma}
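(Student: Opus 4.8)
The plan is to establish that $\{M_t(\boldsymbol{Q})\}_{t\ge 0}$, with the convention $M_0(\boldsymbol{Q})=1$, is a nonnegative supermartingale adapted to $\{F_t\}$, and then to combine the supermartingale convergence theorem with a Fatou/optional-stopping argument to obtain the bound in its full generality. Fix $\boldsymbol{Q}=[\boldsymbol{q}_1,\dots,\boldsymbol{q}_M]$ and abbreviate $c_{t,i}=\langle \boldsymbol{q}_i,\bar{\boldsymbol{x}}_{t,i}\rangle$ with $\bar{\boldsymbol{x}}_{t,i}=\bar{\boldsymbol{U}}^{\top}\boldsymbol{x}_{t,i}$. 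Since $\boldsymbol{x}_{t,i}$ is $F_{t-1}$-measurable, so is each $c_{t,i}$, and we may factor $M_t(\boldsymbol{Q})=M_{t-1}(\boldsymbol{Q})\cdot D_t$ where $D_t=\exp\!\big(\sum_{i=1}^M[\eta_{t,i}c_{t,i}-\tfrac12 c_{t,i}^2]\big)$.

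The core of the argument is the one-step inequality $\mathbb{E}[D_t\mid F_{t-1}]\le 1$, which immediately yields $\mathbb{E}[M_t(\boldsymbol{Q})\mid F_{t-1}]=M_{t-1}(\boldsymbol{Q})\,\mathbb{E}[D_t\mid F_{t-1}]\le M_{t-1}(\boldsymbol{Q})$. To prove it I would peel off the $M$ tasks one at a time: introduce the finer filtration $G_{t,0}=F_{t-1}$ and $G_{t,i}=\sigma(F_{t-1},\eta_{t,1},\dots,\eta_{t,i})$. Because $c_{t,i}$ is $G_{t,i-1}$-measurable and $\eta_{t,i}$ is conditionally zero-mean $1$-sub-Gaussian (so that $\mathbb{E}[e^{\eta_{t,i}c_{t,i}}\mid G_{t,i-1}]\le e^{c_{t,i}^2/2}$), each factor obeys $\mathbb{E}[\exp(\eta_{t,i}c_{t,i}-\tfrac12 c_{t,i}^2)\mid G_{t,i-1}]\le 1$. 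Applying the tower property successively over $i=M,M-1,\dots,1$ collapses the product and gives $\mathbb{E}[D_t\mid F_{t-1}]\le 1$.

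With the supermartingale property in hand, for a fixed deterministic $t$ the bound $\mathbb{E}[M_t(\boldsymbol{Q})]\le\mathbb{E}[M_0(\boldsymbol{Q})]=1$ is immediate. For the stopping-time version (the reason $\tau$ is introduced), note that $\{M_t(\boldsymbol{Q})\}$ is nonnegative with $\mathbb{E}[M_t(\boldsymbol{Q})]\le 1$, so the martingale convergence theorem guarantees $M_\infty(\boldsymbol{Q})=\lim_{t\to\infty}M_t(\boldsymbol{Q})$ exists almost surely; this makes $M_\tau(\boldsymbol{Q})$ well-defined even on the event $\{\tau=\infty\}$. The stopped process $M_{t\wedge\tau}(\boldsymbol{Q})$ is again a supermartingale, whence $\mathbb{E}[M_{t\wedge\tau}(\boldsymbol{Q})]\le 1$ for every $t$; letting $t\to\infty$ and invoking Fatou's lemma gives $\mathbb{E}[M_\tau(\boldsymbol{Q})]\le\liminf_t\mathbb{E}[M_{t\wedge\tau}(\boldsymbol{Q})]\le 1$.

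I expect the main obstacle to be the one-step product over tasks. In the single-task case this is exactly the classical self-normalized argument of \citet{abbasi2011improved}, but here the $-\tfrac12\sum_i c_{t,i}^2$ correction is tight only if the per-step noises $\{\eta_{t,i}\}_{i=1}^M$ can be treated as conditionally independent given $F_{t-1}$ (equivalently, if $\eta_{t,i}$ remains $1$-sub-Gaussian after additionally conditioning on $\eta_{t,1},\dots,\eta_{t,i-1}$). Making the sequential peeling rigorous under the stated noise assumption is the delicate point; everything after the one-step bound is standard nonnegative-supermartingale machinery.
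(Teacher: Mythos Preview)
Your approach is essentially the same as the paper's: factor $M_t(\boldsymbol{Q})=M_{t-1}(\boldsymbol{Q})\cdot D_t(\boldsymbol{Q})$, establish the one-step bound $\mathbb{E}[D_t(\boldsymbol{Q})\mid F_{t-1}]\le 1$ from sub-Gaussianity, conclude the supermartingale property, and then invoke nonnegative supermartingale convergence together with Fatou's lemma to handle the stopping time (the lemma's ``well-defined'' clause really concerns $M_\tau$, as the paper's proof makes clear). The only difference is that the paper simply asserts $\mathbb{E}[D_t(\boldsymbol{Q})\mid F_{t-1}]\le 1$ from the per-task bounds $\mathbb{E}\!\left[\exp\!\big(\eta_{t,i}c_{t,i}-\tfrac12 c_{t,i}^2\big)\mid F_{t-1}\right]\le 1$ without further comment, whereas you supply a sequential peeling over $i$ and correctly flag that this step implicitly requires the $\eta_{t,i}$ to be conditionally independent (or sequentially sub-Gaussian) across tasks given $F_{t-1}$; your version is more careful on exactly the point the paper glosses over.
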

\begin{proof}
Let $D_t(\boldsymbol{Q}) = \exp \left(\sum_{i=1}^{M}\left[\eta_{t,i} \left \langle \boldsymbol{q}_i,\bar{\boldsymbol{x}}_{t,i} \right \rangle - \frac{1}{2} \left \langle \boldsymbol{q}_i,\bar{\boldsymbol{x}}_{t,i} \right \rangle^2\right]\right)$. By the sub-Gaussianity of $\eta_{t,i}$, we have 
\begin{align}
    \mathbb{E}\left[\exp\left(\left[\eta_{t,i} \left \langle \boldsymbol{q}_i,\bar{\boldsymbol{x}}_{t,i} \right \rangle - \frac{1}{2} \left \langle \boldsymbol{q}_i,\bar{\boldsymbol{x}}_{t,i} \right \rangle^2\right]\right)] \mid F_{t-1} \right] \leq 1.
\end{align}

Then we have $\mathbb{E}\left[D_t(\boldsymbol{Q}) \mid F_{t-1}\right] \leq 1$. Further,
\begin{align}
    \mathbb{E}\left[M_t(\boldsymbol{Q}) \mid F_{t-1}\right] &= \mathbb{E}\left[M_1(\boldsymbol{Q}) \cdots D_{t-1}(\boldsymbol{Q}) D_{t}(\boldsymbol{Q})\mid F_{t-1}\right] \\
    &= D_1(\boldsymbol{Q}) \cdots D_{t-1}(\boldsymbol{Q})\mathbb{E}\left[ D_{t}(\boldsymbol{Q})\mid F_{t-1}\right] \leq M_{t-1}(\boldsymbol{Q})
\end{align}
This shows that $\{M_{t}(\boldsymbol{Q})\}_{t=0}^{\infty}$ is a supermartingale and $\mathbb{E}\left[M_t(\boldsymbol{Q})\right] \leq 1$.

Following the same argument of Lemma~8 in \citet{abbasi2011improved}, we show that $M_{\tau}(\boldsymbol{Q})$ is almost surely well-defined. By the convergence theorem for nonnegative supermartingales, $M_{\infty}(\boldsymbol{Q}) = \lim_{t \rightarrow \infty} M_t(\boldsymbol{Q})$ is almost surely well-defined. Therefore, $M_{\tau}(\boldsymbol{Q})$ is indeed well-defined independently of whether $\tau < \infty$ or not. Let $W_t(\boldsymbol{Q}) = M_{\min\{\tau, t\}}(\boldsymbol{Q})$ be a stopped version of $(M_t(\boldsymbol(Q)))_t$. By Fatou's Lemma, $\mathbb{E}[M_{\tau}(\boldsymbol{Q})] = \mathbf{E}\left[\liminf _{t \rightarrow \infty} W_{t}(\boldsymbol{Q})\right] \leq \liminf _{t \rightarrow \infty} \mathbf{E}\left[W_{t}(\boldsymbol{Q})\right] \leq 1$. This shows that $\mathbb{E}[M_{\tau}(\boldsymbol{Q})] \leq 1$. 
\end{proof}

The next lemma uses the ``method of mixtures'' technique to bound $\sum_{i=1}^{M} \|\boldsymbol{S}_{t,i}\|^2_{\bar{\boldsymbol{V}}_{t,i}^{-1}(\lambda)}$.

\begin{lemma}
\label{lemma: self-normalized bound with stopping time}
Let $\tau$ be a stopping time w.r.t the filtration $\{F_t\}_{t=0}^{\infty}$. Then, for $\delta > 0$, with probability $1-\delta$,
\begin{align}
    \sum_{i=1}^{M} \|\boldsymbol{S}_{\tau,i}\|^2_{\bar{\boldsymbol{V}}_{\tau,i}^{-1}(\lambda)} \leq 2 \log\left(\frac{\prod_{i=1}^{M}\left(\operatorname{det}(\bar{\boldsymbol{V}}_{\tau,i})^{1/2} \operatorname{det}(\lambda \boldsymbol{I})^{-1/2}\right)}{\delta}\right).
\end{align}

\end{lemma}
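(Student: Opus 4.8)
The plan is to use the \emph{method of mixtures} of \citet{abbasi2011improved}, exploiting the fact that the task index $i$ enters the exponent of $M_t(\boldsymbol{Q})$ additively, so that the mixture integral decouples across tasks. Concretely, I would place an independent Gaussian prior $\boldsymbol{q}_i \sim \mathcal{N}(\boldsymbol{0}, \lambda^{-1}\boldsymbol{I}_{2k})$ on each column of $\boldsymbol{Q} = [\boldsymbol{q}_1, \dots, \boldsymbol{q}_M]$, let $h(\boldsymbol{Q})$ denote the resulting product density, and define the mixed process $\bar{M}_t = \int_{\mathbb{R}^{2k \times M}} M_t(\boldsymbol{Q})\, h(\boldsymbol{Q})\, d\boldsymbol{Q}$. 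Since $M_\tau(\boldsymbol{Q}) \ge 0$ and $\mathbb{E}[M_\tau(\boldsymbol{Q})] \le 1$ for every fixed $\boldsymbol{Q}$ by Lemma~\ref{lemma: supermatingale}, Tonelli's theorem (all integrands nonnegative) would give $\mathbb{E}[\bar{M}_\tau] = \int \mathbb{E}[M_\tau(\boldsymbol{Q})]\, h(\boldsymbol{Q})\, d\boldsymbol{Q} \le 1$.

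Next I would evaluate $\bar{M}_t$ in closed form. Using $\sum_{n=1}^t \eta_{n,i}\langle \boldsymbol{q}_i, \bar{\boldsymbol{x}}_{n,i}\rangle = \langle \boldsymbol{q}_i, \boldsymbol{S}_{t,i}\rangle$ and $\sum_{n=1}^t \langle \boldsymbol{q}_i, \bar{\boldsymbol{x}}_{n,i}\rangle^2 = \boldsymbol{q}_i^\top(\bar{\boldsymbol{V}}_{t,i}(\lambda) - \lambda\boldsymbol{I})\boldsymbol{q}_i$, the process factorizes as
\begin{align}
M_t(\boldsymbol{Q}) = \prod_{i=1}^M \exp\left(\langle \boldsymbol{q}_i, \boldsymbol{S}_{t,i}\rangle - \tfrac{1}{2}\boldsymbol{q}_i^\top\left(\bar{\boldsymbol{V}}_{t,i}(\lambda) - \lambda\boldsymbol{I}\right)\boldsymbol{q}_i\right).
\end{align}
Because $h$ is a product, the integral splits into $M$ one-task Gaussian integrals; in each the prior's $-\tfrac{\lambda}{2}\|\boldsymbol{q}_i\|^2$ term combines with $-\tfrac12 \boldsymbol{q}_i^\top(\bar{\boldsymbol{V}}_{t,i}(\lambda) - \lambda\boldsymbol{I})\boldsymbol{q}_i$ to restore the full quadratic $-\tfrac12 \boldsymbol{q}_i^\top \bar{\boldsymbol{V}}_{t,i}(\lambda)\boldsymbol{q}_i$, and completing the square yields
\begin{align}
\bar{M}_t = \prod_{i=1}^M \left(\frac{\det(\lambda\boldsymbol{I})}{\det(\bar{\boldsymbol{V}}_{t,i}(\lambda))}\right)^{1/2}\exp\left(\tfrac{1}{2}\|\boldsymbol{S}_{t,i}\|^2_{\bar{\boldsymbol{V}}_{t,i}^{-1}(\lambda)}\right).
\end{align}

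Finally I would apply Markov's inequality to $\bar{M}_\tau$: since $\mathbb{E}[\bar{M}_\tau] \le 1$, we have $\mathbb{P}(\bar{M}_\tau > 1/\delta) \le \delta$. On the complementary event $\bar{M}_\tau \le 1/\delta$, taking logarithms of the closed form and multiplying by $2$ gives precisely $\sum_{i=1}^M \|\boldsymbol{S}_{\tau,i}\|^2_{\bar{\boldsymbol{V}}_{\tau,i}^{-1}(\lambda)} \le 2\log\bigl(\prod_{i=1}^M (\det(\bar{\boldsymbol{V}}_{\tau,i})^{1/2}\det(\lambda\boldsymbol{I})^{-1/2})/\delta\bigr)$, which is the claim.

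The genuinely delicate step is establishing $\mathbb{E}[\bar{M}_\tau] \le 1$ \emph{at the stopping time} $\tau$, rather than at a fixed deterministic $t$; the Gaussian integral and the Markov step are routine. This hinges on Lemma~\ref{lemma: supermatingale}, which controls the stopped supermartingale via the nonnegative-supermartingale convergence theorem together with Fatou's lemma, and it is exactly why one must secure $\mathbb{E}[M_\tau(\boldsymbol{Q})] \le 1$ for each fixed $\boldsymbol{Q}$ \emph{before} mixing. The multi-task structure costs nothing extra here: because the exponent is a sum over $i$ and the prior is a product, the mixture factorizes and the determinant product $\prod_{i=1}^M \det(\bar{\boldsymbol{V}}_{\tau,i})^{1/2}$ emerges automatically.
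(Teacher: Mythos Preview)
Your proposal is correct and follows essentially the same method-of-mixtures argument as the paper: the paper introduces independent auxiliary Gaussian vectors $\boldsymbol{\Lambda}_i \sim \mathcal{N}(\boldsymbol{0},\lambda^{-1}\boldsymbol{I})$ and conditions on $F_\infty$ to factorize the mixed supermartingale across tasks, which is exactly your product-prior integral written in probabilistic rather than integral language. Your identification of the stopping-time step (handled via Lemma~\ref{lemma: supermatingale} and Fatou) as the only delicate point matches the paper's treatment as well.
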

\begin{proof}
For each $i \in [M]$, let $\boldsymbol{\Lambda}_i$ be a $\mathbb{R}^{2k}$ Gaussian random variable which is independent of all the other random variables and whose covariance is $\lambda^{-1}\boldsymbol{I}$. Define $M_t = \mathbb{E}\left[M_t([\boldsymbol{\Lambda}_1,\cdots, \boldsymbol{\Lambda}_M])\mid F_{\infty}\right]$. We still have $\mathbb{E}[M_{\tau}] = \mathbb{E}[\mathbb{E}[M_t([\boldsymbol{\Lambda}_1,\cdots, \boldsymbol{\Lambda}_M]) \mid \{\boldsymbol{\Lambda}_i\}_{i=1}^{M}]] \leq 1$.

Now we calculate $M_t$. Define $M_{t,i}(\boldsymbol{q}_i) \stackrel{\text { def }}{=} \exp\left(\sum_{n=1}^{t} \left[\eta_{t,i} \left \langle \boldsymbol{q}_i,\bar{\boldsymbol{x}}_{t,i} \right \rangle - \frac{1}{2} \left \langle \boldsymbol{q}_i,\bar{\boldsymbol{x}}_{t,i} \right \rangle^2\right]\right)$, then we have $M_t = \mathbb{E}\left[\prod_{i=1}^{M} M_{t,i}(\boldsymbol{\Lambda_i})\mid F_{\infty}\right] = \prod_{i=1}^{M} \mathbb{E}\left[ M_{t,i}(\boldsymbol{\Lambda_i})\mid F_{\infty}\right]$, where the second equality is due to the fact that $\{M_{t,i}(\boldsymbol{\Lambda}_i)\}_{i=1}^{M}$ are relatively independent given $F_{\infty}$. We only need to calculate $\mathbb{E}\left[ M_{t,i}(\boldsymbol{\Lambda}_i)\mid F_{\infty}\right]$ for each $i \in [M]$.

Following the proof of Lemma~9 in \citet{abbasi2011improved}, we know that 
\begin{align}
    \mathbb{E}\left[ M_{t,i}(\boldsymbol{\Lambda}_i)\mid F_{\infty}\right] = \left(\frac{\operatorname{det}(\lambda \boldsymbol{I})}{\operatorname{det}(\bar{\boldsymbol{V}}_{t,i})}\right)^{1/2} \exp \left(\frac{1}{2} \|\boldsymbol{S}_{t,i}\|^2_{\bar{\boldsymbol{V}}_{t,i}^{-1}(\lambda)}\right).
\end{align}

Then we have 
\begin{align}
    M_t = \prod_{i=1}^{M}\left(\left(\frac{\operatorname{det}(\lambda \boldsymbol{I})}{\operatorname{det}(\bar{\boldsymbol{V}}_{t,i})}\right)^{1/2} \right) \exp \left(\frac{1}{2} \sum_{i=1}^{M}\|\boldsymbol{S}_{t,i}\|^2_{\bar{\boldsymbol{V}}_{t,i}^{-1}(\lambda)}\right).
\end{align}

Since $\mathbb{E}[M_{\tau}] \leq 1$, we have
\begin{align*}
    \lefteqn{\operatorname{Pr}\left[ \sum_{i=1}^{M} \|\boldsymbol{S}_{\tau,i}\|^2_{\bar{\boldsymbol{V}}_{\tau,i}^{-1}(\lambda)} > 2 \log\left(\frac{\prod_{i=1}^{M}\left(\operatorname{det}(\bar{\boldsymbol{V}}_{\tau,i})^{1/2} \operatorname{det}(\lambda \boldsymbol{I})^{-1/2}\right)}{\delta}\right)\right]} \\
    = & \operatorname{Pr}\left[\frac{\exp\left(\frac{1}{2}\sum_{i=1}^{M} \|\boldsymbol{S}_{\tau,i}\|^2_{\bar{\boldsymbol{V}}_{\tau,i}^{-1}(\lambda)}\right)}{\delta^{-1} \left(\prod_{i=1}^{M}\left(\operatorname{det}(\bar{\boldsymbol{V}}_{t,i})^{1/2} \operatorname{det}(\lambda \boldsymbol{I})^{-1/2}\right)\right)} > 1\right] \\
    \leq & \mathbb{E}\left[\frac{\exp\left(\sum_{i=1}^{M} \|\boldsymbol{S}_{\tau,i}\|^2_{\bar{\boldsymbol{V}}_{\tau,i}^{-1}(\lambda)}\right)}{\delta^{-1} \left(\prod_{i=1}^{M}\left(\operatorname{det}(\bar{\boldsymbol{V}}_{\tau,i})^{1/2} \operatorname{det}(\lambda \boldsymbol{I})^{-1/2}\right)\right)} \right] \\
    = &\mathbb{E}[M_{\tau}] \delta \leq \delta.
\end{align*}
\end{proof}

\begin{proof}
(Proof of Lemma~\ref{lemma: self-normalized bound}) The only remaining issue is the stopping time construction. Define the bad event
\begin{align}
    B_t(\delta) \stackrel{\text{def}}{=} \left\{ \omega \in \Omega:  \sum_{i=1}^{M} \|\boldsymbol{S}_{t,i}\|^2_{\bar{\boldsymbol{V}}_{t,i}^{-1}(\lambda)} > 2 \log\left(\frac{\prod_{i=1}^{M}\left(\operatorname{det}(\bar{\boldsymbol{V}}_{t,i})^{1/2} \operatorname{det}(\lambda \boldsymbol{I})^{-1/2}\right)}{\delta}\right)\right\}
\end{align}

Consider the stopping time $\tau(\omega) = \min\{t \geq 0: \omega \in B_t(\delta)\}$, we have $\bigcup_{t \geq 0} B_{t}(\delta)=\{\omega: \tau(\omega)<\infty\}$.

By lemma~\ref{lemma: self-normalized bound with stopping time}, we have 
\begin{align}
    \operatorname{Pr}\left[\bigcup_{t \geq 0} B_{t}(\delta)\right]  =&\operatorname{Pr}[\tau<\infty] \\
     = & \operatorname{Pr} \left[ \sum_{i=1}^{M} \|\boldsymbol{S}_{\tau,i}\|^2_{\bar{\boldsymbol{V}}_{\tau,i}^{-1}(\lambda)} > 2 \log\left(\frac{\prod_{i=1}^{M}\left(\operatorname{det}(\bar{\boldsymbol{V}}_{\tau,i})^{1/2} \operatorname{det}(\lambda \boldsymbol{I})^{-1/2}\right)}{\delta}\right), \tau \leq \infty\right] \\
     \leq &  \operatorname{Pr} \left[ \sum_{i=1}^{M} \|\boldsymbol{S}_{\tau,i}\|^2_{\bar{\boldsymbol{V}}_{\tau,i}^{-1}(\lambda)} > 2 \log\left(\frac{\prod_{i=1}^{M}\left(\operatorname{det}(\bar{\boldsymbol{V}}_{\tau,i})^{1/2} \operatorname{det}(\lambda \boldsymbol{I})^{-1/2}\right)}{\delta}\right)\right] \\
     \leq &\delta.
\end{align}
\end{proof}

\newpage

\subsubsection{Proof of Theorem~\ref{thm: regret for misspecified linear bandits}}
\label{sec: proof for misspecified linear bandits}

\begin{proof}
 The proof follows the same idea of that for Theorem~\ref{thm: main theory for linear bandits}. The only difference is that, in our setting, we have $y_{t,i} = \boldsymbol{x}_{t,i}^{\top}\boldsymbol{B}\boldsymbol{w}_i + \eta_{t,i} + \Delta_{t,i}$, where  $\boldsymbol{\theta}_{i} = \boldsymbol{B} \boldsymbol{w}_i $ is the best approximator for task $i \in [M]$ such that $\left|\mathbb{E}\left[y_{i} \mid \boldsymbol{x}_{i}\right]-\left\langle\boldsymbol{x}_{i}, \dot{\boldsymbol{B}} \dot{\boldsymbol{w}}_{i}\right\rangle\right| \leq \zeta$, and $\|\Delta_{t,i}\| \leq \zeta$. Define $\boldsymbol{\Delta}_{t, i}=\left[\Delta_{1, i}, \Delta_{2, i}, \cdots, \Delta_{t, i}\right]$. Similarly, by the optimality of $\hat{\boldsymbol{B}}_t$ and $\hat{\boldsymbol{W}}_t = [\hat{\boldsymbol{w}}_{t,1}, \cdots, \hat{\boldsymbol{w}}_{t,M}]$, we know that $\sum_{i=1}^{M} \left\|\boldsymbol{y}_{t-1,i}-\boldsymbol{X}_{t-1,i}^{\top} \hat{\boldsymbol{B}}_t \hat{\boldsymbol{w}}_{t,i}\right\|^{2}_2 \leq \sum_{i=1}^{M} \left\|\boldsymbol{y}_{t-1,i}-\boldsymbol{X}_{t-1,i}^{\top} \boldsymbol{B} \boldsymbol{w}_{i}\right\|^{2}$. Since $\boldsymbol{y}_{t-1, i}=\boldsymbol{X}_{t-1, i}^{\top} \boldsymbol{B} \boldsymbol{w}_{i}+\boldsymbol{\eta}_{t-1, i}+\boldsymbol{\Delta}_{t, i}$,  thus we have
\begin{align}
    \label{eqn: optimality of hTheta misspecified setting}
    & \sum_{i=1}^{M} \left\|\boldsymbol{X}_{t-1,i}^{\top} \left(\hat{\boldsymbol{B}}_t \hat{\boldsymbol{w}}_{t,i} - \boldsymbol{B} \boldsymbol{w}_{i}\right)\right\|^2 \\
    \leq &2\sum_{i=1}^{M} \boldsymbol{\eta}_{t-1,i}^{\top} \boldsymbol{X}_{t-1,i}^{\top} \left(\hat{\boldsymbol{B}}_t \hat{\boldsymbol{w}}_{t,i} - \boldsymbol{B} \boldsymbol{w}_{i}\right)  + 2 \sum_{i=1}^{M} \boldsymbol{\Delta}_{t-1, i}^{\top} \boldsymbol{X}_{t-1, i}^{\top}\left(\hat{\boldsymbol{B}}_{t} \hat{\boldsymbol{w}}_{t, i}-\boldsymbol{B} \boldsymbol{w}_{i}\right) \\
    \leq & 2\sum_{i=1}^{M} \boldsymbol{\eta}_{t-1,i}^{\top} \boldsymbol{X}_{t-1,i}^{\top} \left(\hat{\boldsymbol{B}}_t \hat{\boldsymbol{w}}_{t,i} - \boldsymbol{B} \boldsymbol{w}_{i}\right) + 2 \sum_{i=1}^{M} \left\|\boldsymbol{X}_{t-1,i}\boldsymbol{\Delta}_{t-1,i}\right\|_{\tilde{\boldsymbol{V}}_{t-1,i}^{-1}(\lambda)} \left\|\hat{\boldsymbol{B}}_t \hat{\boldsymbol{w}}_{t,i} - \boldsymbol{B} \boldsymbol{w}_{i}\right\|_{\tilde{\boldsymbol{V}}_{t-1,i}(\lambda)} \\
    \leq & 2\sum_{i=1}^{M} \boldsymbol{\eta}_{t-1,i}^{\top} \boldsymbol{X}_{t-1,i}^{\top} \left(\hat{\boldsymbol{B}}_t \hat{\boldsymbol{w}}_{t,i} - \boldsymbol{B} \boldsymbol{w}_{i}\right) + 2 \sum_{i=1}^{M} \sqrt{T}\zeta \left\|\hat{\boldsymbol{B}}_t \hat{\boldsymbol{w}}_{t,i} - \boldsymbol{B} \boldsymbol{w}_{i}\right\|_{\tilde{\boldsymbol{V}}_{t-1,i}(\lambda)} \\
    \label{eqn: misspecified bandit, confidence proof}
    \leq & 2\sum_{i=1}^{M} \boldsymbol{\eta}_{t-1,i}^{\top} \boldsymbol{X}_{t-1,i}^{\top} \left(\hat{\boldsymbol{B}}_t \hat{\boldsymbol{w}}_{t,i} - \boldsymbol{B} \boldsymbol{w}_{i}\right) + 2 \sqrt{MT} \zeta \sqrt{\sum_{i=1}^{M}\left\|\hat{\boldsymbol{B}}_t \hat{\boldsymbol{w}}_{t,i} - \boldsymbol{B} \boldsymbol{w}_{i}\right\|^2_{\tilde{\boldsymbol{V}}_{t-1,i}(\lambda)}}
\end{align}
The third inequality follows from Projection Bound (Lemma 8) in \citet{zanette2020learning}. The first term of Eqn~\ref{eqn: misspecified bandit, confidence proof} shares the same form of Eqn~\ref{eqn: optimality of hTheta}. Following the same proof idea of Lemma~\ref{lemma: confidence set for linear bandits}, we know that with probability $1-\delta$,
\begin{align}
    &\sum_{i=1}^{M}\left\|\hat{\boldsymbol{B}}_t \hat{\boldsymbol{w}}_{t,i}-\boldsymbol{B}\boldsymbol{w}_{i}\right\|^{2}_{\tilde{\boldsymbol{V}}_{t-1,i}(\lambda)} \\
    \leq  &\left(2\sqrt{Mk + 8kd\log(kMT/\delta)}+ 2\sqrt{MT}\zeta\right) \sqrt{\sum_{i=1}^{M}\left\|\hat{\boldsymbol{B}}_t \hat{\boldsymbol{w}}_{t,i}-\boldsymbol{B}\boldsymbol{w}_{i}\right\|^{2}_{\tilde{\boldsymbol{V}}_{t-1,i}(\lambda)}} + 4M + 4 \sqrt{\log(4MT/\delta)}
\end{align}

Solving for $\sum_{i=1}^{M}\left\|\hat{\boldsymbol{B}}_t \hat{\boldsymbol{w}}_{t,i}-\boldsymbol{B}\boldsymbol{w}_{i}\right\|^{2}_{\tilde{\boldsymbol{V}}_{t-1,i}(\lambda)}$, we know that the true parameter $\boldsymbol{B}\boldsymbol{W}$ is always contained in the confidence set, i.e.
\begin{align}
\label{eqn: confidence bound for misspecified linear bandit}
\sum_{i=1}^{M}\left\|\hat{\boldsymbol{B}}_{t} \hat{\boldsymbol{w}}_{t, i}-\boldsymbol{B} \boldsymbol{w}_{i}\right\|_{\tilde{\boldsymbol{V}}_{t-1, i}(\lambda)}^{2} \leq L',
\end{align}
where $L' = 2L + 32 MT \zeta^2$.

Thus we have 
\begin{align}
\mathrm{Reg}(T) & = \sum_{t = 1}^T \sum_{i = 1}^M \left(y^*_{t,i}- y_{t,i}\right) \\
& \leq 2MT\zeta + \sum_{t = 1}^T \sum_{i = 1}^M \left\langle \boldsymbol{\theta}_i, \boldsymbol{x}^*_{t, i} - \boldsymbol{x}_{t, i} \right\rangle \\
& \leq 2MT\zeta +\sum_{t = 1}^T \sum_{i = 1}^M \left\langle \tilde{\boldsymbol{\theta}}_{t, i} - \boldsymbol{\theta}_i, \boldsymbol{x}_{t, i} \right\rangle \\
& = 2MT\zeta +\sum_{t = 1}^T \sum_{i = 1}^M \left\langle \tilde{\boldsymbol{\theta}}_{t, i} - \hat{\boldsymbol{\theta}}_{t , i} + \hat{\boldsymbol{\theta}}_{t , i} - \boldsymbol{\theta}_i, \boldsymbol{x}_{t, i} \right\rangle\\
& \leq 2MT\zeta +\sum_{t = 1}^T \sum_{i = 1}^M \left(\left\| \tilde{\boldsymbol{\theta}}_{t, i} - \hat{\boldsymbol{\theta}}_{t , i}\right\|_{\tilde{\boldsymbol{V}}_{t - 1, i}(\lambda)} + \left\|\hat{\boldsymbol{\theta}}_{t , i} - \boldsymbol{\theta}_i\right\|_{\tilde{\boldsymbol{V}}_{t - 1, i}(\lambda)}\right) \left\|\boldsymbol{x}_{t, i} \right\|_{\tilde{\boldsymbol{V}}_{t - 1, i}(\lambda)^{-1}} \\
& \leq  2MT\zeta +\left(\sqrt{\sum_{t = 1}^T\sum_{i = 1}^M \left\| \tilde{\boldsymbol{\theta}}_{t, i} - \hat{\boldsymbol{\theta}}_{t , i}\right\|_{\tilde{\boldsymbol{V}}_{t - 1, i}(\lambda)}^2} + \sqrt{\sum_{i = 1}^M \left\|\hat{\boldsymbol{\theta}}_{t , i} - \boldsymbol{\theta}_i\right\|_{\tilde{\boldsymbol{V}}_{t - 1, i}(\lambda)}^2}\right) \cdot \sqrt{\sum_{t = 1}^T\sum_{i = 1}^M \left\|\boldsymbol{x}_{t, i} \right\|_{\tilde{\boldsymbol{V}}_{t - 1, i}(\lambda)^{-1}}^2}\\
& \leq 2MT\zeta +2\sqrt{T\left(L' + 4\lambda M\right)} \cdot \sqrt{\sum_{i = 1}^M \sum_{t = 1}^T \left\|\boldsymbol{x}_{t, i} \right\|_{\tilde{\boldsymbol{V}}_{t - 1, i}(\lambda)^{-1}}^2} \\
& \leq 2MT\zeta +2\sqrt{T\left(L' + 4\lambda M\right)} \sqrt{Md \log(1+\frac{T}{d})} \\
& = \tilde{O}(M\sqrt{dkT} + d\sqrt{kMT} + MT\sqrt{d}\zeta),
\end{align}
where the second inequality is due to $\sum_{i=1}^{M} \left\langle \boldsymbol{\theta}_i, \boldsymbol{x}^*_{t, i} \right\rangle \leq \left\langle \tilde{\boldsymbol{\theta}}_{t,i}, \boldsymbol{x}_{t, i} \right\rangle$ from the optimistic choice of $\tilde{\boldsymbol{\theta}}_{t,i}$ and $\boldsymbol{x}_{t, i}$. The third inequality is due to Eqn~\ref{eqn: confidence bound for misspecified linear bandit}. The last inequality is from Eqn~\ref{eqn: linear bandit potential function}.
\end{proof}

\newpage

\subsubsection{Proof of Theorem~\ref{thm: lower bound for linear bandits}}
\label{sec: proof of lower bound for linear bandits}

 Since our setting is strictly harder than the setting of multi-task linear bandit with infinite arms in \citet{yang2020provable}, we can prove the following lemma directly from their Theorem 4 by reduction.
 \begin{lemma}
 \label{lemma: lower bound for linear bandit original terms}
Under the setting of Theorem~\ref{thm: lower bound for linear bandits}, the regret of any Algorithm~$\mathcal{A}$ is lower bounded by $ \Omega\left(Mk\sqrt{T}+ d\sqrt{kMT}\right).$
\end{lemma}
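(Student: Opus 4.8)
The plan is to establish a reduction showing that the hard-instance family of \citet{yang2020provable} is a strict special case of our problem class, so that their regret lower bound (their Theorem 4) transfers to our setting up to constant factors. The key conceptual point is that lower bounds are monotone under enlarging the instance class: if their restricted family is contained in ours, then a hard instance witnessing their bound is also a legitimate instance of our problem, and hence witnesses the same bound here.

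First I would record the structure of the instances used in \citet{yang2020provable}. Their construction fixes a single well-conditioned ellipsoidal action set of constant radius, shared across all $M$ tasks and all $T$ steps, and imposes diversity on the coefficients (with $\boldsymbol{W}\boldsymbol{W}^\top$ well-conditioned and $\|\boldsymbol{w}_i\|_2$ bounded below by a constant). Crucially, every such instance is low-rank realizable, i.e.\ $\boldsymbol{\theta}_i = \boldsymbol{B}\boldsymbol{w}_i$ with $\boldsymbol{B} \in \mathbb{R}^{d\times k}$, and therefore satisfies Assumption~\ref{assumptions:low_rank_bandits}, which in turn implies Assumption~\ref{assumption: misspecified linear bandits} with $\zeta = 0$. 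Our setting, by contrast, only requires $\mathcal{A}_{t,i} \subseteq \{\boldsymbol{x} : \|\boldsymbol{x}\|_2 \leq 1\}$, $\|\boldsymbol{\theta}_i\|_2 \leq 1$, and conditionally $1$-sub-Gaussian noise, with no constraint tying action sets across $t,i$ and no diversity condition on $\boldsymbol{W}$. Taking $\mathcal{A}_{t,i} \equiv \mathcal{A}$ for all $t,i$ recovers their setting verbatim, exhibiting the inclusion of instance classes.

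The one technical point deserving care is normalization, and I would address it next: their ellipsoid and parameters carry constant (not necessarily unit) norms, say action radius $r$ and parameter radius $R$. I would rescale the actions by $1/r$ and the parameters by $1/R$ so that the rescaled instance meets our bounds $\|\boldsymbol{x}\|_2 \leq 1$ and $\|\boldsymbol{\theta}_i\|_2 \leq 1$; rewards, and hence regret, then scale by the constant $1/(rR)$, and the noise can be rescaled to remain $1$-sub-Gaussian (equivalently, independent Gaussian noise can be added without making the instance easier). Since $r,R$ are constants, this changes the bound only by a constant factor absorbed into $\Omega(\cdot)$. With the inclusion in hand, the reduction is immediate: for any algorithm $\mathcal{A}$, restricting to the subfamily of rescaled \citet{yang2020provable} instances, their Theorem 4 forces some instance in this subfamily to incur regret $\Omega(Mk\sqrt{T} + d\sqrt{kMT})$, and that instance is a valid instance of our problem satisfying Assumption~\ref{assumption: misspecified linear bandits}; this yields the claimed bound.

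I expect the main (and essentially only) obstacle to be bookkeeping rather than any deep difficulty: verifying that the rescaling to unit-norm, $1$-sub-Gaussian regularity distorts the regret by at most a constant, and checking that the parameter regime of Theorem~\ref{thm: lower bound for linear bandits}, namely $k \leq d \leq T$ and $k \leq M$, lies within the regime in which their Theorem 4 is stated, so that the transfer is legitimate throughout.
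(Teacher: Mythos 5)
Your proposal is correct and follows essentially the same route as the paper, which proves this lemma in one line by observing that the present setting is strictly harder than that of \citet{yang2020provable} and reducing directly to their Theorem~4. Your additional bookkeeping (the explicit class inclusion via fixing $\mathcal{A}_{t,i} \equiv \mathcal{A}$, and the constant-factor rescaling to unit norms and $1$-sub-Gaussian noise) merely fills in details the paper leaves implicit.
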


In order to prove Theorem~\ref{thm: lower bound for linear bandits}, we only need to show that the following lemma is true.

\begin{lemma}
\label{lemma: lower bound for linear bandits, approximation error term}
Under the setting of Theorem~\ref{thm: lower bound for linear bandits}, the regret of any Algorithm~$\mathcal{A}$ is lower bounded by $ \Omega\left(MT\sqrt{d}\zeta\right).$
\end{lemma}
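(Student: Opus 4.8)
The plan is to separate the misspecification penalty from the representation-learning difficulty via a tensorization argument, reducing the multi-task statement to a single-task misspecified linear bandit lower bound. First I would note that the low-rank structure in Assumption~\ref{assumption: misspecified linear bandits} is trivially compatible with identical tasks: take one hard $d$-dimensional instance with parameter $\boldsymbol{\theta}$ (with $\|\boldsymbol{\theta}\|_2\le 1$) and misspecification level $\zeta$, and let every task $i\in[M]$ be an independent copy of it, so $\boldsymbol{\theta}_i=\boldsymbol{\theta}$ with the same action sets and reward law. Then $\boldsymbol{\theta}_i=\boldsymbol{B}\boldsymbol{w}_i$ where $\boldsymbol{B}\in\mathbb{R}^{d\times k}$ has first column $\boldsymbol{\theta}/\|\boldsymbol{\theta}\|$ (rank at most $k$, using $k\ge 1$) and $\boldsymbol{w}_i=(\|\boldsymbol{\theta}\|,0,\dots,0)^\top$, so Assumption~\ref{assumption: misspecified linear bandits} and the regularity conditions are inherited. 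Since the $M$ copies are statistically independent and the regret decomposes additively across tasks, the total regret equals the sum of the $M$ per-task regrets; hence it suffices to prove a single-task lower bound $\Omega(T\sqrt{d}\,\zeta)$ and multiply by $M$.

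For the single-task bound I would invoke (or reproduce, in the spirit of the reduction behind Lemma~\ref{lemma: lower bound for linear bandit original terms}) the known hardness of misspecified linear bandits~\citep{lattimore2020learning}: with an infinite action set, any algorithm must pay $\Omega(\sqrt{d}\,\zeta)$ regret \emph{per step}. Concretely, I would build a family of instances sharing one linear model $\langle\cdot,\boldsymbol{\theta}\rangle$ but differing in how the $\zeta$-budget of misspecification is allocated across $d$ nearly orthogonal directions. The crucial feature is an amplification: although the pointwise gap between the true mean reward and the best linear predictor is only $O(\zeta)$, no estimator can pin down $\boldsymbol{\theta}$ (equivalently, the best action) to accuracy better than $\Omega(\sqrt{d}\,\zeta)$, because an adversarial misspecified component of size $\zeta$ placed in each of the $d$ orthogonal directions biases every estimator by a vector of norm $\Omega(\sqrt{d}\,\zeta)$. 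This bias is deterministic rather than stochastic, so it does not average out as the number of pulls grows, which is exactly why the per-step penalty persists for all $T$ and yields $\Omega(T\sqrt{d}\,\zeta)$ rather than a one-time or logarithmic cost. I would formalize the indistinguishability with a Le~Cam (two-point) or Assouad argument, selecting reward functions inside the $\zeta$-ball of the common linear model whose optimal actions are separated by a gap of order $\sqrt{d}\,\zeta$ while their induced observation distributions have vanishing total-variation distance under any action sequence, forcing $\Omega(\sqrt{d}\,\zeta)$ regret on at least one instance.

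Combining the two pieces, any algorithm incurs regret at least $M\cdot\Omega(T\sqrt{d}\,\zeta)=\Omega(MT\sqrt{d}\,\zeta)$ on the tensorized instance, which establishes Lemma~\ref{lemma: lower bound for linear bandits, approximation error term}; together with Lemma~\ref{lemma: lower bound for linear bandit original terms} this gives the full bound in Theorem~\ref{thm: lower bound for linear bandits}.

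The main obstacle I anticipate is the single-task construction, specifically engineering the $\sqrt{d}$ amplification while simultaneously respecting the hard pointwise constraint $|\mathbb{E}[y\mid\boldsymbol{x}]-\langle\boldsymbol{x},\boldsymbol{B}\boldsymbol{w}\rangle|\le\zeta$ and keeping the competing instances statistically indistinguishable. The tension is that spreading the hidden signal over $d$ directions to obtain the $\sqrt{d}$ factor tends to make the hypotheses easier to separate (larger KL per pull), so the packing of directions and the perturbation scale must be balanced carefully. The argument must also remain valid for arbitrarily large $T$ (the theorem only assumes $d\le T$), which rules out hiding the optimum among finitely many actions and forces a continuous action set together with a bias-based, rather than exploration-based, notion of hardness.
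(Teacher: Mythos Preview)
Your approach is essentially the paper's: make the tasks share parameters so the low-rank constraint is automatically satisfied, then invoke the single-task misspecified linear bandit lower bound $\Omega(\sqrt{d}\,\zeta\cdot T')$. The paper simply cites that bound (Proposition~6 of \citet{zanette2020learning}) rather than rederiving it, and it partitions the $M$ tasks into $k$ groups with a distinct parameter per group so the construction has rank exactly $k$; you use one group, which is fine for this lemma since rank $1\le k$ already meets Assumption~\ref{assumption: misspecified linear bandits}.

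One step to tighten: when all $M$ tasks share the same $\boldsymbol{\theta}$, they are \emph{not} independent subproblems---the algorithm can pool observations across tasks---so ``prove a single-task $\Omega(T\sqrt{d}\,\zeta)$ bound and multiply by $M$'' is not valid as stated. The paper uses the correct reduction: any multi-task algorithm on $M/k$ identical tasks for $T$ steps each induces a single-task algorithm for $(M/k)\cdot T$ steps, so one applies the single-task bound at horizon $(M/k)T$ and then sums over the $k$ independent groups. In your one-group version this becomes the single-task bound at horizon $MT$. You reach the right answer only because the cited bound is linear in the horizon; the independence-and-multiply argument would fail for any sublinear single-task lower bound.
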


\begin{proof}
 (Proof of Lemma~\ref{lemma: lower bound for linear bandits, approximation error term}) 
 
 To prove Lemma~\ref{lemma: lower bound for linear bandits, approximation error term}, we leverage the lower bound for misspecified linear bandits in the single-task setting. We restate the following lemma from the previous literature with a slight modification of notations. 

\begin{lemma}
\label{lemma: related lemma for linear bandit lower bound}
(Proposition 6 in \citet{zanette2020learning}). There exists a feature map $\phi:\mathcal{A} \rightarrow \mathbb{R}^d$ that defines a misspecified linear bandits class $\mathcal{M}$ such that every bandit instance in that class has reward response:
\begin{align*}
    \mu_a = \boldsymbol{\phi}_a^{\top} \boldsymbol{\theta} + z_a
\end{align*}
for any action $a$ (Here $z_a \in [0,\zeta]$ is the deviation from linearity and $\mu_a \in [0,1]$) and such that the expected regret of any algorithm on at least a member of the class up to round $T$ is $\Omega(\sqrt{d}\zeta T)$.
\end{lemma}

Suppose $M$ can be exactly divided by $k$, we construct the following instances to prove lemma~\ref{lemma: lower bound for linear bandits, approximation error term}. We divide $M$ tasks into $k$ groups. Each group shares the same parameter $\theta_i$. To be more specific, we let $\boldsymbol{w}_1 = \boldsymbol{w}_2 = \cdots = \boldsymbol{w}_{M/k} = \boldsymbol{e}_1$, $\boldsymbol{w}_{M/k+1} = \boldsymbol{w}_{M/k+2} = \cdots = \boldsymbol{w}_{2M/k} = \boldsymbol{e}_2$, $\cdots$, $\boldsymbol{w}_{(k-1)M/k+1} = \boldsymbol{w}_{(k-1)M/k+2} = \cdots = \boldsymbol{w}_{M} = \boldsymbol{e}_k$. Under this construction, the parameters $\theta_i$ for these tasks are exactly the same in each group, but relatively independent among different groups. That is to say, the expected regret lower bound is at least the summation of the regret lower bounds in all $k$ groups.

Now we consider the regret lower bound for group $j \in [k]$. Since the parameters are shared in the same group, the regret of running an algorithm for $M/k$ tasks with $T$ steps each is at least the regret of running an algorithm for single-task linear bandit with $M/k \cdot T$ steps. By Lemma~\ref{lemma: related lemma for linear bandit lower bound}, the regret for single-task linear bandit with $MT/k$ steps is at least $\Omega(\sqrt{d} \zeta MT/k)$. Summing over all $k$ groups, we can prove that the regret lower bound is $\Omega(\sqrt{d} \zeta MT)$.
\end{proof}

Combining Lemma~\ref{lemma: lower bound for linear bandit original terms} and Lemma~\ref{lemma: lower bound for linear bandits, approximation error term}, we complete the proof of Theorem~\ref{thm: lower bound for linear bandits}.

\newpage

\subsection{Proof of Theorem~\ref{theorem:linear_rl_regret_bound}}
\label{sec: omiited proof in linea_rl}

\subsubsection{Definitions and First Step Analysis}
\label{appendix_linear_rl:definitions}

Before presenting the proof of theorem \ref{theorem:linear_rl_regret_bound}, we will make a first step analysis on the low-rank least-square estimator in equation \ref{formula:linear_rl_least_square}.

For any $\left\{Q_{h+1}^i\right\}_{i=1}^M \in \caQ_{h+1}$, there exists $\left\{\dot{\boldsymbol{\theta}}_{h}^i\left(Q_{h+1}^i\right) \right\}_{i=1}^M \in \Theta_h$ that

\begin{align}
\label{formula:best_approximators}
\Delta_h^i \left(Q_{h+1}^i\right)(s, a) = \caT_h^i \left(Q_{h+1}^i\right)(s, a) - \boldsymbol{\phi}(s, a)^\top \dot{\boldsymbol{\theta}}_{h}^i\left(Q_{h+1}^i\right)
\end{align}

where the approximation error $\left\|\Delta_h^i \left(Q_{h+1}^i\right)\right\|_{\infty} \leq \caI$ is small for each $i \in [M]$. We also use $\dot{\boldsymbol{B}}_h \dot{\boldsymbol{w}}_h^i \left(Q_{h+1}^i\right)$ in place of $\dot{\boldsymbol{\theta}}_{h}^i\left(Q_{h+1}^i\right)$ in the following sections since we can write $\dot{\boldsymbol{\theta}}_{h}^i$ as $\dot{\boldsymbol{B}}_h \dot{\boldsymbol{w}}_h^i$ according to Assumption \ref{assumptions:low_rank_small_ibe}.

In the multi-task low-rank least-square regression (equation \ref{formula:linear_rl_least_square}), we are actually trying to recover $\dot{\bm{\theta}}_h^i$. However, due to the noise and representation error (i.e. the inherent Bellman error), we can only obtain an approximate solution $\hat{\bm{\theta}}_h^i = \hat{\bm{B}}_h \hat{\bm{w}}_h^i$ (see the global optimization problem in Definition \ref{formula:linear_rl_global_optimization}). 

\begin{align}
\left(\hat{\boldsymbol{\theta}}_h^1, ..., \hat{\boldsymbol{\theta}}_h^M\right) & = \hat{\boldsymbol{B}}_h \begin{bmatrix} \hat{\boldsymbol{w}}_h^1 & \hat{\boldsymbol{w}}_h^2 & \cdots & \hat{\boldsymbol{w}}_h^M
\end{bmatrix}\\
 & = \argmin_{\left\|\boldsymbol{B}_h \boldsymbol{w}_h^i\right\|_2 \leq D} \sum_{i=1}^M \sum_{j=1}^{t-1} \left(\boldsymbol{\phi}\left(s^i_{h j}, a^i_{hj}\right)^{\top} \boldsymbol{B}_h \boldsymbol{w}_h^i -R\left(s_{h j}^i, a_{h j}^i\right)-\max_a Q_{h+1}^i\left(s_{h+1, j}^i\right)\right)^{2} \\
 \label{formula:linear_rl_low_rank_regression}
 & = \argmin_{\left\|\boldsymbol{B}_h \boldsymbol{w}_h^i\right\|_2 \leq D} \sum_{i=1}^M \sum_{j=1}^{t-1} \left(\boldsymbol{\phi}\left(s^i_{h j}, a^i_{hj}\right)^{\top} \boldsymbol{B}_h \boldsymbol{w}_h^i - \caT_h^i \left(Q_{h+1}^i\right)\left(s^i_{h j}, a^i_{hj}\right) - z_{hj}^i\left(Q_{h+1}^i\right)\left(s^i_{h j}, a^i_{hj}\right)\right)^{2}
\end{align}

where $z_{hj}^i\left(Q_{h+1}^i\right)\left(s^i_{h j}, a^i_{hj}\right) \defeq R\left(s_{h j}^i, a_{h j}^i\right) + \max_a Q_{h+1}^i\left(s_{h+1,j}^i, a\right) - \caT_h^i\left(Q_{h+1}^i\right)\left(s^i_{h j}, a^i_{hj}\right)$.

Define $\boldsymbol{\Phi}_{ht}^i \in \dbR^{(t-1) \times d}$ to be the collection of linear features up to episode $t - 1$ in task $i$, i.e. the $j$-th row of $\boldsymbol{\Phi}_{ht}^i$ is $\boldsymbol{\phi}\left(s^i_{h j}, a^i_{hj}\right)^\top$. Let $\boldsymbol{Y}_{ht}^i \in \dbR^{t-1}$ be a vector whose $j$-th dimension is $\caT_h^i \left(Q_{h+1}^i\right)\left(s^i_{h j}, a^i_{hj}\right) + z_{hj}^i\left(Q_{h+1}^i\right)\left(s^i_{h j}, a^i_{hj}\right)$. Then the objective in (\ref{formula:linear_rl_low_rank_regression}) can be written as

\begin{align}
\argmin_{\left\|\boldsymbol{B}_h \boldsymbol{w}_h^i\right\|_2 \leq D} \sum_{i=1}^M \left\|\boldsymbol{\Phi}_{ht}^i \boldsymbol{B}_h \boldsymbol{w}_h^i - \boldsymbol{Y}_{ht}^i \right\|_2^2
\end{align}

Therefore, we have

\begin{align}
\sum_{i=1}^M \left\|\boldsymbol{\Phi}_{ht}^i \hat{\boldsymbol{B}}_h \hat{\boldsymbol{w}}_h^i\left(Q_{h+1}^i\right) - Y_{ht}^i \right\|_2^2 \leq \sum_{i=1}^M \left\|\boldsymbol{\Phi}_{ht}^i \dot{\boldsymbol{B}}_h \dot{\boldsymbol{w}}_h^i\left(Q_{h+1}^i\right) - \boldsymbol{Y}_{ht}^i \right\|_2^2
\end{align}

which implies

\begin{align}
& \sum_{i=1}^M \left\|\boldsymbol{\Phi}_{ht}^i \hat{\boldsymbol{B}}_h \hat{\boldsymbol{w}}_h^i\left(Q_{h+1}^i\right) - \boldsymbol{\Phi}_{ht}^i \dot{\boldsymbol{B}}_h \dot{\boldsymbol{w}}_h^i\left(Q_{h+1}^i\right) \right\|_2^2 \\ 
\label{formula:linear_rl_noise_error}
& \leq 2 \sum_{i=1}^M \left(\bm{\Delta}_{ht}^i\right)^\top \boldsymbol{\Phi}_{ht}^i \left(\hat{\boldsymbol{B}}_h \hat{\boldsymbol{w}}_h^i\left(Q_{h+1}^i\right) - \dot{\boldsymbol{B}}_h \dot{\boldsymbol{w}}_h^i\left(Q_{h+1}^i\right)\right) \\
\label{formula:linear_rl_projection_error}
& + 2 \sum_{i=1}^M \left(\bm{z}_{ht}^i\right)^\top \boldsymbol{\Phi}_{ht}^i \left(\hat{\boldsymbol{B}}_h \hat{\boldsymbol{w}}_h^i\left(Q_{h+1}^i\right) - \dot{\boldsymbol{B}}_h \dot{\boldsymbol{w}}_h^i\left(Q_{h+1}^i\right)\right)
\end{align}

where $\bm{\Delta}_{ht}^i \defeq \begin{bmatrix} \Delta_{h1}^i\left(Q_{h+1}^i\right)\left(s^i_{h 1}, a^i_{h1}\right) & \Delta_{h2}^i\left(Q_{h+1}^i\right)\left(s^i_{h 2}, a^i_{h2}\right) & \cdots & \Delta_{h, t - 1}^i\left(Q_{h+1}^i\right)\left(s^i_{h, t - 1}, a^i_{h, t-1}\right) \end{bmatrix} \in \dbR^{t-1}$, and $\bm{z}_{ht}^i \defeq \begin{bmatrix} z_{h1}^i\left(Q_{h+1}^i\right)\left(s^i_{h 1}, a^i_{h1}\right) & \cdots & z_{h, t - 1}^i\left(Q_{h+1}^i\right)\left(s^i_{h, t - 1}, a^i_{h, t-1}\right) \end{bmatrix} \in \dbR^{t-1}$.

In the next sections we will show how to bound \ref{formula:linear_rl_noise_error} and \ref{formula:linear_rl_projection_error}.

\newpage

\subsubsection{Failure Event}
\label{appendix_linear_rl:failure_event}

Define the failure event at step $h$ in episode $t$ as 

\begin{definition}[Failure Event]
\label{definitions:failure_event_ht}
\begin{align}
E_{ht} & \defeq I\Big[\exists \left\{Q_{h+1}^i\right\}_{i=1}^M \in \caQ_{h+1} \quad \sum_{i=1}^M \left(\bm{z}_{ht}^i\right)^\top \boldsymbol{\Phi}_{ht}^i \left(\hat{\boldsymbol{B}}_h \hat{\boldsymbol{w}}_h^i\left(Q_{h+1}^i\right) - \dot{\boldsymbol{B}}_h \dot{\boldsymbol{w}}_h^i\left(Q_{h+1}^i\right)\right) > \\
& F_h^1 \sqrt{\sum_{i=1}^{M}\left\|\hat{\boldsymbol{B}}_h \hat{\boldsymbol{w}}_h^i\left(Q_{h+1}^i\right) - \dot{\boldsymbol{B}}_h \dot{\boldsymbol{w}}_h^i\left(Q_{h+1}^i\right)\right\|^{2}_{\tilde{\boldsymbol{V}}_{ht}^i(\lambda)}} + F_h^2 \Big]
\end{align}
\end{definition}

where $F_h^1$ and $F_h^2$ will be specified later.

We have the following lemma to bound the probability of $E_{ht}$.

\begin{lemma}
\label{lemma:linear_rl_failure_event_ht}
For the input parameter $\delta > 0$, there exists $F_h^1$ and $F_h^2$ such that
\begin{align}
\dbP\left(\bigcup_{t=1}^T \bigcup_{h=1}^H E_{ht}\right) \leq \frac{\delta}{2}
\end{align}
\end{lemma}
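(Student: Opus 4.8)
The plan is to establish Lemma~\ref{lemma:linear_rl_failure_event_ht} by adapting the confidence-set argument of Lemma~\ref{lemma: confidence set for linear bandits} to the RL setting, with one essential new ingredient: since the martingale noise $\boldsymbol{z}_{ht}^i = \boldsymbol{z}_{ht}^i(Q_{h+1}^i)$ itself depends on the value function $Q_{h+1}^i$, a single self-normalized bound no longer suffices and the concentration must hold \emph{uniformly} over the whole class $\caQ_{h+1}$. The term to control, namely (\ref{formula:linear_rl_projection_error}), is exactly the analogue of the cross term $\sum_i \boldsymbol{\eta}_{t-1,i}^\top \boldsymbol{X}_{t-1,i}^\top(\hat{\boldsymbol{B}}_t\hat{\boldsymbol{w}}_{t,i}-\boldsymbol{B}\boldsymbol{w}_i)$ that drove the bandit proof, with the reward noise $\boldsymbol{\eta}$ replaced by $\boldsymbol{z}_{ht}^i$ and the design $\boldsymbol{X}_{t-1,i}^\top$ by $\boldsymbol{\Phi}_{ht}^i$. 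By Assumption~\ref{assumptions:linear_rl_regularity}, for any \emph{fixed} $\{Q_{h+1}^i\}_{i=1}^M$ the entries of $\boldsymbol{z}_{ht}^i$ are conditionally zero-mean (since $\caT_h^i(Q_{h+1}^i)$ is precisely the conditional mean of $R_h^i+\max_a Q_{h+1}^i$) and bounded in $[-1,1]$, hence $1$-sub-Gaussian with respect to $\caF_{h,t}$, so Lemma~\ref{lemma: self-normalized bound} applies to them verbatim.

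First I would exploit the low-rank structure exactly as before: for fixed $\{Q_{h+1}^i\}$, both $\{\hat{\boldsymbol{B}}_h\hat{\boldsymbol{w}}_h^i\}_i$ and $\{\dot{\boldsymbol{B}}_h\dot{\boldsymbol{w}}_h^i\}_i$ span at most a $k$-dimensional subspace, so their difference has rank at most $2k$ and factors as $\hat{\boldsymbol{B}}_h\hat{\boldsymbol{w}}_h^i-\dot{\boldsymbol{B}}_h\dot{\boldsymbol{w}}_h^i=\boldsymbol{U}\boldsymbol{r}_i$ for an orthonormal $\boldsymbol{U}\in\dbR^{d\times 2k}$. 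Applying Cauchy--Schwarz as in (\ref{eqn: hTheta - Theta decomposition 4})--(\ref{eqn: hTheta - Theta decomposition 6}) bounds (\ref{formula:linear_rl_projection_error}) by
\begin{align*}
2\sqrt{\sum_{i=1}^M\big\|(\boldsymbol{\Phi}_{ht}^i)^\top\boldsymbol{z}_{ht}^i\big\|_{\boldsymbol{V}_{ht,i}(\boldsymbol{U})^{-1}}^2}\cdot\sqrt{\sum_{i=1}^M\big\|\hat{\boldsymbol{B}}_h\hat{\boldsymbol{w}}_h^i-\dot{\boldsymbol{B}}_h\dot{\boldsymbol{w}}_h^i\big\|_{\tilde{\boldsymbol{V}}_{ht}^i(\lambda)}^2},
\end{align*}
where $\boldsymbol{V}_{ht,i}(\boldsymbol{U})=\boldsymbol{U}^\top(\boldsymbol{\Phi}_{ht}^i)^\top\boldsymbol{\Phi}_{ht}^i\boldsymbol{U}+\lambda\boldsymbol{I}$; the second factor is precisely the quantity on the right-hand side of the failure event, so it remains to bound the first factor by $F_h^1$. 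To do this I would build two $\epsilon$-nets — a net $\mathcal{E}_U$ over $\{\boldsymbol{U}:\|\boldsymbol{U}\|_F\le\sqrt{2k}\}$ of size $(C/\epsilon)^{2kd}$ (as in the bandit proof) and a net $\mathcal{E}_Q$ over the parameter set $\Theta_{h+1}$, whose log-cardinality is $\tilde{O}((d+M)k\log(1/\epsilon))$ since each element is specified by $\boldsymbol{B}_{h+1}\in\caO^{d\times k}$ and $M$ coefficients in $\caB^{k}$ — then apply Lemma~\ref{lemma: self-normalized bound} at each pair of net points and take a union bound over $\mathcal{E}_U\times\mathcal{E}_Q$ and over $h\in[H]$. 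Because the self-normalized bound holds simultaneously for all episodes via its stopping-time form, no extra $\log T$ factor is incurred by the union over $t$.

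Then I would control the discretization error from replacing the data-dependent $(\boldsymbol{U}_t,\{Q_{h+1}^i\})$ by the nearest net points $(\bar{\boldsymbol{U}},\{\bar Q_{h+1}^i\})$. The $\boldsymbol{U}$-perturbation is handled exactly as in (\ref{eqn: hTheta - Theta decomposition part 2})--(\ref{eqn: hTheta - Theta decomposition part 3}) using $\|\boldsymbol{\phi}_{hj}^i\|_2\le1$ and $|z_{hj}^i|\le1$. For the $Q$-perturbation, since $\|\boldsymbol{\theta}_{h+1}^i-\bar{\boldsymbol{\theta}}_{h+1}^i\|_2\le\epsilon$ implies $|V_{h+1}^i-\bar V_{h+1}^i|\le\epsilon$ pointwise, the entries of $\boldsymbol{z}_{ht}^i$ move by at most $2\epsilon$ while $\|\boldsymbol{\Phi}_{ht}^i(\hat{\boldsymbol{\theta}}_h^i-\dot{\boldsymbol{\theta}}_h^i)\|_2\le 2D\sqrt{T}$, so the induced residual in (\ref{formula:linear_rl_projection_error}) is $O(MTD\epsilon)$; choosing $\epsilon=\mathrm{poly}(1/(dMT))$ renders all residuals $O(1)$. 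Collecting terms gives $F_h^1=\tilde{O}(\sqrt{Mk+kd})$ — the log-covering factor $\tilde{O}(kd+Mk)$ from $\mathcal{E}_U\times\mathcal{E}_Q$ is what produces the $kd$ term and hence the $\sqrt{d/k}$ saving over the independent baseline — and $F_h^2=O(1)$; setting the per-net failure probability to $\delta/(2H|\mathcal{E}_U||\mathcal{E}_Q|)$ then yields the claimed bound $\delta/2$.

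The main obstacle is precisely this uniform-over-$\caQ_{h+1}$ concentration: unlike the bandit setting where the true $\boldsymbol{\Theta}$ is fixed, here the noise vector $\boldsymbol{z}_{ht}^i$, the best approximator $\dot{\boldsymbol{\theta}}_h^i$, and the least-square solution $\hat{\boldsymbol{\theta}}_h^i$ all vary with $Q_{h+1}^i$, so the self-normalized bound must be combined with a covering of the value-function class. The delicate point is to keep the covering \emph{joint} — netting the low-rank projection $\boldsymbol{U}$ and the value function $Q_{h+1}^i$ simultaneously so that the combined log-covering number stays $\tilde{O}(kd+Mk)$ rather than $\tilde{O}(Md)$, which is exactly what is needed to obtain the sharp $F_h^1$ and, downstream, the improved regret of Theorem~\ref{theorem:linear_rl_regret_bound}.
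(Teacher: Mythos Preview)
Your proposal is correct and follows essentially the same approach as the paper: build an $\epsilon$-net over $\Theta_{h+1}$, for each fixed net point freeze the noise $\bar{\boldsymbol{z}}$ and invoke the low-rank self-normalized argument of Lemma~\ref{lemma: confidence set for linear bandits} (which itself nets over the projection $\boldsymbol{U}$), then control the discretization error in $\boldsymbol{z}$ via the Lipschitz bound $|z_{hj}^i(Q)-z_{hj}^i(\bar Q)|\le O(\epsilon)$ while leaving $\hat{\boldsymbol{\theta}}_h^i-\dot{\boldsymbol{\theta}}_h^i$ evaluated at the true $Q_{h+1}^i$. One small correction: $F_h^2$ is not $O(1)$ but also $\tilde{O}(\sqrt{kd+Mk})$, because after the union bound over $\mathcal{E}_Q$ every $\log(1/\delta)$ appearing in the residual terms is replaced by $\log(|\mathcal{E}_Q|/\delta)=\tilde{O}(kd+Mk)$; this has no effect on $\alpha_{ht}$ or the downstream regret since $\sqrt{F_h^2}$ is dominated by $F_h^1$.
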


\begin{proof}
According to Lemma A.5 of \citet{du2020few}, there exists an $\epsilon$-net $\caE_{h+1}^o$ over $\caO^{d \times k}$ (with regards to the Frobenius norm) such that $\left|\caE_{h+1}^o\right| \leq (6\sqrt{k}/\epsilon^\prime)^{kd} $. Moreover, there exists an $\epsilon$-net $\caE^b_{h+1}$ over $\caB^{k}$ that $\left|\caE^b_{h+1}\right| \leq (1 + 2 / \epsilon^\prime)^{k}$. We can show a corresponding $\epsilon$-net $\caE^{\text{mul}}_{h+1} \defeq \caE_{h+1}^o \times \left(\caE^b_{h+1}\right)^M$ over $\Theta_{h+1}$.

For any $\left(Q_{h+1}^1\left(\boldsymbol{B}_{h+1}\boldsymbol{w}^1_{h+1}\right), \cdots, Q_{h+1}^M\left(\boldsymbol{B}_{h+1}\boldsymbol{w}_{h+1}^M\right)\right) \in \caQ_{h+1}$, there exists $\bar{\boldsymbol{B}}_{h+1} \in \caE_{h+1}^o$ and $\left(\bar{\boldsymbol{w}}^1_{h+1}, \cdots, \bar{\boldsymbol{w}}_{h+1}^M\right) \in \left(\caE^b_{h+1}\right)^M$ such that

$$\left\|\boldsymbol{B}_{h+1} - \bar{\boldsymbol{B}}_{h+1}\right\|_F \leq \epsilon^\prime  \quad \left\|\boldsymbol{w}_{h+1}^i - \bar{\boldsymbol{w}}_{h+1}^i\right\|_2 \leq \epsilon^\prime, \forall i \in [M]$$

Therefore, 

$$\left\|\boldsymbol{B}_{h+1} \boldsymbol{w}^i_{h+1} - \bar{\boldsymbol{B}}_{h+1} \bar{\boldsymbol{w}}^i_{h+1}\right\|_2 \leq 2\epsilon^\prime, \forall i \in [M]$$

Define $\bar{Q}_{h+1}^i$ to be $Q_{h+1}^i\left(\bar{\boldsymbol{B}}_{h+1} \bar{\boldsymbol{w}}^i_{h+1}\right)$, and let $\bm{\bar{z}}_{ht}^i \defeq \begin{bmatrix} z_{h1}^i\left(\bar{Q}_{h+1}^i\right)\left(s^i_{h 1}, a^i_{h1}\right) & \cdots & z_{h, t - 1}^i\left(\bar{Q}_{h+1}^i\right)\left(s^i_{h, t - 1}, a^i_{h, t-1}\right) \end{bmatrix} \in \dbR^{t-1}$, then

\begin{align}
& \sum_{i=1}^M \left(\bm{z}_{ht}^i\right)^\top \boldsymbol{\Phi}_{ht}^i \left(\hat{\boldsymbol{B}}_h \hat{\boldsymbol{w}}_h^i\left(Q_{h+1}^i\right) - \dot{\boldsymbol{B}}_h \dot{\boldsymbol{w}}_h^i\left(Q_{h+1}^i\right)\right) \\
& = \sum_{i=1}^M \left(\bm{\bar{z}}_{ht}^i\right)^\top \boldsymbol{\Phi}_{ht}^i \left(\hat{\boldsymbol{B}}_h \hat{\boldsymbol{w}}_h^i\left(Q_{h+1}^i\right) - \dot{\boldsymbol{B}}_h \dot{\boldsymbol{w}}_h^i\left(Q_{h+1}^i\right)\right) \\
& + \sum_{i=1}^M \left(\bm{z}_{ht}^i - \bm{\bar{z}}_{ht}^i\right)^\top \boldsymbol{\Phi}_{ht}^i \left(\hat{\boldsymbol{B}}_h \hat{\boldsymbol{w}}_h^i\left(Q_{h+1}^i\right) - \dot{\boldsymbol{B}}_h \dot{\boldsymbol{w}}_h^i\left(Q_{h+1}^i\right)\right)
\end{align}

For fixed $\left\{\bar{\boldsymbol{B}}_{h+1} \bar{\boldsymbol{w}}^i_{h+1}\right\}_{i=1}^M \in \caE^{\text{mul}}_{h+1}$, $z_{h, j}^i\left(\bar{Q}_{h+1}^i\right)\left(s^i_{h,j}, a^i_{h, j}\right)$ is zero-mean 1-subgaussian conditioned on $\caF_{h, j}$ according to Assumption \ref{assumptions:linear_rl_regularity}. Thus, we can use exactly the same argument as in Lemma \ref{lemma: confidence set for linear bandits} to show that

\begin{align}
& \sum_{i=1}^M \left(\bm{\bar{z}}_{ht}^i\right)^\top \boldsymbol{\Phi}_{ht}^i \left(\hat{\boldsymbol{B}}_h \hat{\boldsymbol{w}}_h^i\left(Q_{h+1}^i\right) - \dot{\boldsymbol{B}}_h \dot{\boldsymbol{w}}_h^i\left(Q_{h+1}^i\right)\right)  \\
& \leq \sqrt{Mk + 5kd \log(kMT) + 2\log(1 / \delta^{\prime})} \sqrt{\sum_{i=1}^{M}\left\|\hat{\boldsymbol{B}}_h \hat{\boldsymbol{w}}_h^i\left(Q_{h+1}^i\right) - \dot{\boldsymbol{B}}_h \dot{\boldsymbol{w}}_h^i\left(Q_{h+1}^i\right)\right\|^{2}_{\tilde{\boldsymbol{V}}_{ht}^i(\lambda)}} \\
&+ \sqrt{2\log(2MT/\delta^{\prime})} + \sqrt{k + 3kd\log(kMT) + \log(1 / \delta^{\prime})}
\end{align}

by setting $\epsilon = \frac{1}{kM^2T^2}$, $\delta_1 = \frac{\delta^{\prime}}{2 \left(\frac{6\sqrt{2k}}{\epsilon}\right)^{2kd}}$, and $\delta_2 = \frac{\delta^{\prime}}{2MT}$ in equation \ref{formula:linear_rl_cross_reference}. Thus, we have that with probability $1 - \delta^{\prime}$ the inequality above holds for any $h \in [H], t \in [T]$. Take $\delta = \frac{\delta^{\prime}}{2\left|\caE^{\text{mul}}_{h+1}\right|}$, by union bound we know the above ineqaulity holds with probability $1 - \delta$ for any $\left\{\bar{\boldsymbol{B}}_{h+1} \bar{\boldsymbol{w}}^i_{h+1}\right\}_{i=1}^M \in \caE^{\text{mul}}_{h+1}$ and any $h \in [H], t \in [T]$.


Since it holds that $\left|Q_{h+1}^i\left(\boldsymbol{B}_{h+1}\boldsymbol{w}_{h+1}^i\right)(s, a) - Q_{h+1}^i\left(\bar{\boldsymbol{B}}_{h+1}\bar{\boldsymbol{w}}_{h+1}^i\right)(s, a)\right| \leq 2\epsilon^\prime$ for any $(s, a) \in \caS \times \caA, i \in [M]$, we have

\begin{align}
\left|z_{hj}^i\left(\bar{Q}_{h+1}^i\right)\left(s^i_{h j}, a^i_{hj}\right) - z_{hj}^i\left(Q_{h+1}^i\right)\left(s^i_{h j}, a^i_{hj}\right)\right| \leq 8\epsilon^\prime
\end{align}

Then we have

\begin{align}
& \sum_{i=1}^M \left(\bm{z}_{ht}^i - \bm{\bar{z}}_{ht}^i\right)^\top \boldsymbol{\Phi}_{ht}^i \left(\hat{B}_h \hat{w}_h^i\left(Q_{h+1}^i\right) - \dot{\boldsymbol{B}}_h \dot{\boldsymbol{w}}_h^i\left(Q_{h+1}^i\right)\right) \\
& \leq \sum_{i=1}^M \left\|\left(\boldsymbol{\Phi}_{ht}^i\right)^\top \left(\bm{z}_{ht}^i - \bm{\bar{z}}_{ht}^i\right)\right\|_{\tilde{\boldsymbol{V}}_{ht}^i(\lambda)^{-1}} \left\|\hat{\boldsymbol{B}}_h \hat{\boldsymbol{w}}_h^i\left(Q_{h+1}^i\right) - \dot{\boldsymbol{B}}_h \dot{\boldsymbol{w}}_h^i\left(Q_{h+1}^i\right)\right\|_{\tilde{\boldsymbol{V}}_{ht}^i(\lambda)} \\
& \leq 8\epsilon^\prime \sqrt{T} \sum_{i=1}^M \left\|\hat{\boldsymbol{B}}_h \hat{\boldsymbol{w}}_h^i\left(Q_{h+1}^i\right) - \dot{\boldsymbol{B}}_h \dot{\boldsymbol{w}}_h^i\left(Q_{h+1}^i\right)\right\|_{\tilde{\boldsymbol{V}}_{ht}^i(\lambda)} \\
& \leq 8\epsilon^\prime \sqrt{MT} \sqrt{\sum_{i=1}^{M}\left\|\hat{\boldsymbol{B}}_h \hat{\boldsymbol{w}}_h^i\left(Q_{h+1}^i\right) - \dot{\boldsymbol{B}}_h \dot{\boldsymbol{w}}_h^i\left(Q_{h+1}^i\right)\right\|^{2}_{\tilde{\boldsymbol{V}}_{ht}^i(\lambda)}}
\end{align}

for arbitrary $\{Q_{h+1}^i\}$ and any $h \in [H], t \in [T]$. The second inequality follows from the Projection Bound (Lemma 8) in \citet{zanette2020learning}. 

Take $\epsilon^\prime = 1 / 8\sqrt{MT}$, we finally finish the proof by setting 

\begin{align}
F_h^1 & \defeq \sqrt{9kd \log(kMT) + 5Mk \log(MT) + 2\log(2 / \delta)} \\
F_h^2 & \defeq \sqrt{4kd \log(kMT) + 5Mk \log(MT) + 2\log(2 / \delta)} \\
& + \sqrt{k + 5kd\log(kMT) + 2Mk \log(MT) + \log(2 / \delta)}
\end{align}

\end{proof}

In the next sections we assume the failure event $\bigcup_{t=1}^T \bigcup_{h=1}^H E_{ht}$ won't happen.

\newpage

\subsubsection{Bellman Error}
\label{appendix_linear_rl:bellman_error}

Outside the failure event, we can bound the estimation error of the least-square regression \ref{formula:linear_rl_least_square}.

\begin{lemma}
\label{lemma:linear_rl_least_square_error}
For any episode $t \in [T]$ and step $h \in [H]$, any $\left\{Q_{h+1}^i\right\}_{i=1}^M \in \caQ_{h+1}$, we have

\begin{align}
\sum_{i=1}^{M}\left\|\hat{\boldsymbol{B}}_h \hat{\boldsymbol{w}}_h^i\left(Q_{h+1}^i\right) - \dot{\boldsymbol{B}}_h \dot{\boldsymbol{w}}_h^i\left(Q_{h+1}^i\right)\right\|^{2}_{\tilde{\boldsymbol{V}}_{ht}^i(\lambda)} \leq \alpha_{ht} \defeq \left(2\sqrt{MT} \caI + 2 F_{h}^1 + \sqrt{2F_h^2 + 4MD^2 \lambda}\right)^2
\end{align}
\end{lemma}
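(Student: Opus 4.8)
The plan is to start from the basic optimality inequality established in the first-step analysis, namely that $\sum_{i=1}^M \|\boldsymbol{\Phi}_{ht}^i(\hat{\boldsymbol{B}}_h \hat{\boldsymbol{w}}_h^i - \dot{\boldsymbol{B}}_h \dot{\boldsymbol{w}}_h^i)\|_2^2$ is bounded by the sum of the misspecification term (\ref{formula:linear_rl_noise_error}) and the noise term (\ref{formula:linear_rl_projection_error}), and to convert the target quantity into this quadratic form. Writing $S \defeq \sum_{i=1}^{M}\|\hat{\boldsymbol{B}}_h \hat{\boldsymbol{w}}_h^i - \dot{\boldsymbol{B}}_h \dot{\boldsymbol{w}}_h^i\|^{2}_{\tilde{\boldsymbol{V}}_{ht}^i(\lambda)}$ and recalling $\tilde{\boldsymbol{V}}_{ht}^i(\lambda) = (\boldsymbol{\Phi}_{ht}^i)^\top \boldsymbol{\Phi}_{ht}^i + \lambda \boldsymbol{I}$, I would first decompose
\begin{align}
S = \sum_{i=1}^M \left\|\boldsymbol{\Phi}_{ht}^i\left(\hat{\boldsymbol{B}}_h \hat{\boldsymbol{w}}_h^i - \dot{\boldsymbol{B}}_h \dot{\boldsymbol{w}}_h^i\right)\right\|_2^2 + \lambda \sum_{i=1}^M \left\|\hat{\boldsymbol{B}}_h \hat{\boldsymbol{w}}_h^i - \dot{\boldsymbol{B}}_h \dot{\boldsymbol{w}}_h^i\right\|_2^2.
\end{align}
The second (ridge) term is controlled by the norm constraints: since both $\hat{\boldsymbol{B}}_h \hat{\boldsymbol{w}}_h^i$ and $\dot{\boldsymbol{B}}_h \dot{\boldsymbol{w}}_h^i$ satisfy $\|\cdot\|_2 \leq D$ (they belong to the feasible set of the global optimization problem, and $\dot{\boldsymbol{\theta}}_h^i \in \Theta_h$ by Assumption~\ref{assumptions:linear_rl_regularity}), the difference has squared norm at most $4D^2$, so this term is at most $4MD^2\lambda$.

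Next I would bound the two cross terms, setting $Z \defeq \sqrt{S}$. For the noise term (\ref{formula:linear_rl_projection_error}), I invoke the assumption that the failure event $\bigcup_{t,h} E_{ht}$ does not occur; by Definition~\ref{definitions:failure_event_ht} this gives directly
\begin{align}
2\sum_{i=1}^M \left(\bm{z}_{ht}^i\right)^\top \boldsymbol{\Phi}_{ht}^i \left(\hat{\boldsymbol{B}}_h \hat{\boldsymbol{w}}_h^i - \dot{\boldsymbol{B}}_h \dot{\boldsymbol{w}}_h^i\right) \leq 2 F_h^1 Z + 2 F_h^2.
\end{align}
For the misspecification term (\ref{formula:linear_rl_noise_error}), I would apply Cauchy--Schwarz in the $\tilde{\boldsymbol{V}}_{ht}^i(\lambda)$ geometry per task, then the Projection Bound (Lemma~8 of \citet{zanette2020learning}) to get $\|(\boldsymbol{\Phi}_{ht}^i)^\top \bm{\Delta}_{ht}^i\|_{\tilde{\boldsymbol{V}}_{ht}^i(\lambda)^{-1}} \leq \|\bm{\Delta}_{ht}^i\|_2 \leq \sqrt{T}\,\caI$ (using $\|\Delta_{hj}^i\|_\infty \leq \caI$), and finally Cauchy--Schwarz across the $M$ tasks:
\begin{align}
2\sum_{i=1}^M \left(\bm{\Delta}_{ht}^i\right)^\top \boldsymbol{\Phi}_{ht}^i \left(\hat{\boldsymbol{B}}_h \hat{\boldsymbol{w}}_h^i - \dot{\boldsymbol{B}}_h \dot{\boldsymbol{w}}_h^i\right) \leq 2\sqrt{MT}\,\caI \cdot Z.
\end{align}

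Combining the three bounds yields the scalar quadratic inequality $Z^2 \leq \left(2\sqrt{MT}\caI + 2F_h^1\right) Z + \left(2F_h^2 + 4MD^2\lambda\right)$. Solving for $Z$ and using $\sqrt{b^2+4c} \leq b + 2\sqrt{c}$ gives $Z \leq 2\sqrt{MT}\caI + 2F_h^1 + \sqrt{2F_h^2 + 4MD^2\lambda}$, so $S = Z^2 \leq \alpha_{ht}$, exactly as claimed. I expect no real obstacle inside this lemma itself: the argument is an assembly of Cauchy--Schwarz, the Projection Bound, and the standard self-bounding quadratic trick. The genuine difficulty is entirely deferred to Lemma~\ref{lemma:linear_rl_failure_event_ht}, which supplies the high-probability control of the noise term via a self-normalized martingale bound combined with an $\epsilon$-net over the low-rank parameter space $\Theta_{h+1}$; here I am simply consuming that result on the complement of the failure event.
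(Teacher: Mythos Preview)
Your proposal is correct and follows essentially the same approach as the paper: decompose $\tilde{\boldsymbol{V}}_{ht}^i(\lambda)$ into the data and ridge parts, bound the ridge piece by $4MD^2\lambda$ via the norm constraints, control the misspecification cross term by Cauchy--Schwarz plus the Projection Bound to get $2\sqrt{MT}\,\caI\cdot Z$, control the noise cross term by the complement of the failure event to get $2F_h^1 Z + 2F_h^2$, and finish with the self-bounding quadratic inequality. The paper's proof is line-for-line the same, differing only cosmetically in how the quadratic is solved at the end.
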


\begin{proof}
Recall that

\begin{align}
& \sum_{i=1}^M \left\|\boldsymbol{\Phi}_{ht}^i \hat{\boldsymbol{B}}_h \hat{\boldsymbol{w}}_h^i\left(Q_{h+1}^i\right) - \boldsymbol{\Phi}_{ht}^i \dot{\boldsymbol{B}}_h \dot{\boldsymbol{w}}_h^i\left(Q_{h+1}^i\right) \right\|_2^2 \\ 
& \leq 2 \sum_{i=1}^M \left(\bm{\Delta}_{ht}^i\right)^\top \boldsymbol{\Phi}_{ht}^i \left(\hat{\boldsymbol{B}}_h \hat{\boldsymbol{w}}_h^i\left(Q_{h+1}^i\right) - \dot{\boldsymbol{B}}_h \dot{\boldsymbol{w}}_h^i\left(Q_{h+1}^i\right)\right) \\
& + 2 \sum_{i=1}^M \left(\bm{z}_{ht}^i\right)^\top \boldsymbol{\Phi}_{ht}^i \left(\hat{\boldsymbol{B}}_h \hat{\boldsymbol{w}}_h^i\left(Q_{h+1}^i\right) - \dot{\boldsymbol{B}}_h \dot{\boldsymbol{w}}_h^i\left(Q_{h+1}^i\right)\right)
\end{align}

For the first term, we have

\begin{align}
& \sum_{i=1}^M \left(\bm{\Delta}_{ht}^i\right)^\top \Phi_{ht}^i \left(\hat{\boldsymbol{B}}_h \hat{\boldsymbol{w}}_h^i\left(Q_{h+1}^i\right) - \dot{\boldsymbol{B}}_h \dot{\boldsymbol{w}}_h^i\left(Q_{h+1}^i\right)\right) \\
& \leq \sum_{i=1}^M \left\|\left(\boldsymbol{\Phi}_{ht}^i\right)^\top \bm{\Delta}_{ht}^i\right\|_{\tilde{\boldsymbol{V}}_{ht}^i(\lambda)^{-1}} \left\|\hat{\boldsymbol{B}}_h \hat{\boldsymbol{w}}_h^i\left(Q_{h+1}^i\right) - \dot{\boldsymbol{B}}_h \dot{\boldsymbol{w}}_h^i\left(Q_{h+1}^i\right)\right\|_{\tilde{\boldsymbol{V}}_{ht}^i(\lambda)} \\
& \leq \sqrt{T} \caI \sum_{i=1}^M \left\|\hat{\boldsymbol{B}}_h \hat{\boldsymbol{w}}_h^i\left(Q_{h+1}^i\right) - \dot{\boldsymbol{B}}_h \dot{\boldsymbol{w}}_h^i\left(Q_{h+1}^i\right)\right\|_{\tilde{\boldsymbol{V}}_{ht}^i(\lambda)} \\
& \leq \sqrt{MT} \caI \sqrt{\sum_{i=1}^{M}\left\|\hat{\boldsymbol{B}}_h \hat{\boldsymbol{w}}_h^i\left(Q_{h+1}^i\right) - \dot{\boldsymbol{B}}_h \dot{\boldsymbol{w}}_h^i\left(Q_{h+1}^i\right)\right\|^{2}_{\tilde{\boldsymbol{V}}_{ht}^i(\lambda)}}
\end{align}

The second inequality follows from the Projection Bound (Lemma 8) in \citet{zanette2020learning}, and the last inequality is due to Cauchy-Schwarz.

Outside the failure event, we have

\begin{align}
& \sum_{i=1}^{M}\left\|\hat{\boldsymbol{B}}_h \hat{\boldsymbol{w}}_h^i\left(Q_{h+1}^i\right) - \dot{\boldsymbol{B}}_h \dot{\boldsymbol{w}}_h^i\left(Q_{h+1}^i\right)\right\|^{2}_{\tilde{\boldsymbol{V}}_{ht}^i(\lambda)} \\
& \leq \sum_{i=1}^M \left\|\boldsymbol{\Phi}_{ht}^i \hat{\boldsymbol{B}}_h \hat{\boldsymbol{w}}_h^i\left(Q_{h+1}^i\right) - \boldsymbol{\Phi}_{ht}^i \dot{\boldsymbol{B}}_h \dot{\boldsymbol{w}}_h^i\left(Q_{h+1}^i\right) \right\|_2^2 + 4MD^2 \lambda \\
& \leq \left(2\sqrt{MT} \caI + 2 F_{h}^1\right) \sqrt{\sum_{i=1}^{M}\left\|\hat{\boldsymbol{B}}_h \hat{\boldsymbol{w}}_h^i\left(Q_{h+1}^i\right) - \dot{\boldsymbol{B}}_h \dot{\boldsymbol{w}}_h^i\left(Q_{h+1}^i\right)\right\|^{2}_{\tilde{\boldsymbol{V}}_{ht}^i(\lambda)}} + 2F_{h}^2 + 4MD^2 \lambda 
\end{align}

which implies

\begin{align}
& \sum_{i=1}^{M}\left\|\hat{\boldsymbol{B}}_h \hat{\boldsymbol{w}}_h^i\left(Q_{h+1}^i\right) - \dot{\boldsymbol{B}}_h \dot{\boldsymbol{w}}_h^i\left(Q_{h+1}^i\right)\right\|^{2}_{\tilde{\boldsymbol{V}}_{ht}^i(\lambda)} \\
& \leq \left(2\sqrt{MT} \caI + 2 F_{h}^1\right)^2 + 2F_h^2 + 4MD^2 \lambda + \left(2\sqrt{MT} \caI + 2 F_{h}^1\right) \sqrt{2F_h^2 + 4MD^2 \lambda}\\    
& \leq \left(2\sqrt{MT} \caI + 2 F_{h}^1 + \sqrt{2F_h^2 + 4MD^2 \lambda}\right)^2
\end{align}
\end{proof}

\begin{lemma}[Bound on Bellman Error]
\label{lemma:linear_rl_bellman_error}
Outside the failure event, for any feasible solution $\left\{Q_h^i\left(\bar{\theta}_{h}^i\right)\right\}_{h}^i $ ($\bar{Q}_{h}^i$ for short, with a little abuse of notations) of the global optimization procedure in definition \ref{formula:linear_rl_global_optimization}, for any $(s, a) \in \caS \times \caA$, any $h \in [H]$, $t \in [T]$

\begin{align}
\sum_{i=1}^M \left|\bar{Q}_{h}^i (s, a) - \caT_h^i \bar{Q}_{h+1}^i (s, a)\right| \leq  M \caI + 2 \sqrt{\alpha_{ht} \cdot \sum_{i=1}^M \left\|
\boldsymbol{\phi}(s, a)\right\|_{\tilde{\boldsymbol{V}}_{ht}^i(\lambda)^{-1}}^2} 
\end{align}

\end{lemma}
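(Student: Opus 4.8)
The plan is to reduce the Bellman error of $\bar{Q}_h^i$ into two pieces: the inherent Bellman error $\caI$, which is irreducible, and a weighted distance between the optimistic parameter $\bar{\boldsymbol{\theta}}_h^i$ and the best approximator $\dot{\boldsymbol{\theta}}_h^i(\bar{Q}_{h+1}^i)$ of the Bellman backup, which I would then control via Lemma~\ref{lemma:linear_rl_least_square_error} combined with the confidence constraint built into the global optimization procedure (Definition~\ref{formula:linear_rl_global_optimization}).

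First I would invoke the best-approximator decomposition (\ref{formula:best_approximators}): since $\{\bar{Q}_{h+1}^i\}_{i=1}^M$ is a feasible solution it lies in $\caQ_{h+1}$, so there exist $\dot{\boldsymbol{\theta}}_h^i = \dot{\boldsymbol{B}}_h \dot{\boldsymbol{w}}_h^i(\bar{Q}_{h+1}^i) \in \Theta_h$ with $\caT_h^i \bar{Q}_{h+1}^i(s,a) = \boldsymbol{\phi}(s,a)^\top \dot{\boldsymbol{\theta}}_h^i + \Delta_h^i(\bar{Q}_{h+1}^i)(s,a)$ and $\|\Delta_h^i\|_\infty \leq \caI$. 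Because $\bar{Q}_h^i(s,a) = \boldsymbol{\phi}(s,a)^\top \bar{\boldsymbol{\theta}}_h^i$, the triangle inequality gives, for each $i$,
\[
\left|\bar{Q}_h^i(s,a) - \caT_h^i \bar{Q}_{h+1}^i(s,a)\right| \leq \left|\boldsymbol{\phi}(s,a)^\top\left(\bar{\boldsymbol{\theta}}_h^i - \dot{\boldsymbol{\theta}}_h^i\right)\right| + \caI ,
\]
and summing over $i \in [M]$ produces the $M\caI$ term, leaving $\sum_i \left|\boldsymbol{\phi}(s,a)^\top(\bar{\boldsymbol{\theta}}_h^i - \dot{\boldsymbol{\theta}}_h^i)\right|$ to bound. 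Next I would apply Cauchy--Schwarz in the Mahalanobis norm per task and then once more across tasks:
\[
\sum_{i=1}^M \left|\boldsymbol{\phi}(s,a)^\top\left(\bar{\boldsymbol{\theta}}_h^i - \dot{\boldsymbol{\theta}}_h^i\right)\right| \leq \sqrt{\sum_{i=1}^M \left\|\boldsymbol{\phi}(s,a)\right\|_{\tilde{\boldsymbol{V}}_{ht}^i(\lambda)^{-1}}^2}\, \sqrt{\sum_{i=1}^M \left\|\bar{\boldsymbol{\theta}}_h^i - \dot{\boldsymbol{\theta}}_h^i\right\|_{\tilde{\boldsymbol{V}}_{ht}^i(\lambda)}^2}.
\]
To control the second factor I would decompose $\bar{\boldsymbol{\theta}}_h^i - \dot{\boldsymbol{\theta}}_h^i = (\hat{\boldsymbol{\theta}}_h^i - \dot{\boldsymbol{\theta}}_h^i) + \bar{\boldsymbol{\xi}}_h^i$ (recall $\bar{\boldsymbol{\theta}}_h^i = \hat{\boldsymbol{\theta}}_h^i + \bar{\boldsymbol{\xi}}_h^i$) and use Minkowski's inequality in the $\ell_2$-over-tasks norm:
\[
\sqrt{\sum_{i=1}^M \left\|\bar{\boldsymbol{\theta}}_h^i - \dot{\boldsymbol{\theta}}_h^i\right\|_{\tilde{\boldsymbol{V}}_{ht}^i(\lambda)}^2} \leq \sqrt{\sum_{i=1}^M \left\|\hat{\boldsymbol{\theta}}_h^i - \dot{\boldsymbol{\theta}}_h^i\right\|_{\tilde{\boldsymbol{V}}_{ht}^i(\lambda)}^2} + \sqrt{\sum_{i=1}^M \left\|\bar{\boldsymbol{\xi}}_h^i\right\|_{\tilde{\boldsymbol{V}}_{ht}^i(\lambda)}^2} \leq 2\sqrt{\alpha_{ht}} ,
\]
where the first summand is at most $\sqrt{\alpha_{ht}}$ by Lemma~\ref{lemma:linear_rl_least_square_error} (valid outside the failure event) and the second is at most $\sqrt{\alpha_{ht}}$ by the feasibility constraint $\sum_i \|\bar{\boldsymbol{\xi}}_h^i\|_{\tilde{\boldsymbol{V}}_{ht}^i(\lambda)}^2 \leq \alpha_{ht}$. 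Chaining these bounds yields exactly $M\caI + 2\sqrt{\alpha_{ht}\cdot \sum_i \|\boldsymbol{\phi}(s,a)\|_{\tilde{\boldsymbol{V}}_{ht}^i(\lambda)^{-1}}^2}$.

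The subtle point, and the step I would take care to argue explicitly, is that $\dot{\boldsymbol{\theta}}_h^i = \dot{\boldsymbol{\theta}}_h^i(\bar{Q}_{h+1}^i)$ is the best approximator of the backup of $\bar{Q}_{h+1}^i$, and $\bar{Q}_{h+1}^i$ is \emph{data-dependent} (it is the algorithm's own optimistic estimate from step $h+1$). I therefore cannot treat it as a fixed function and apply Lemma~\ref{lemma:linear_rl_least_square_error} for a single $\{Q_{h+1}^i\}$. The resolution is that Lemma~\ref{lemma:linear_rl_least_square_error} is stated \emph{uniformly} over all $\{Q_{h+1}^i\}_{i=1}^M \in \caQ_{h+1}$ — a uniformity secured by the $\epsilon$-net over $\Theta_{h+1}$ in the failure-event analysis — and since any feasible $\bar{Q}_{h+1}^i$ belongs to $\caQ_{h+1}$, the least-square error bound applies to it as well. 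Making this uniform-over-$\caQ_{h+1}$ invocation precise is the crux on which the entire optimism argument rests.
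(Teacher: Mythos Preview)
Your proposal is correct and follows essentially the same approach as the paper: decompose via the best approximator $\dot{\boldsymbol{\theta}}_h^i$ to peel off the $M\caI$ term, split $\bar{\boldsymbol{\theta}}_h^i - \dot{\boldsymbol{\theta}}_h^i$ through $\hat{\boldsymbol{\theta}}_h^i$, and combine the per-task Mahalanobis Cauchy--Schwarz with an across-tasks Cauchy--Schwarz/Minkowski to invoke Lemma~\ref{lemma:linear_rl_least_square_error} and the feasibility constraint. The only cosmetic difference is the order of operations---the paper applies the triangle inequality through $\hat{\boldsymbol{\theta}}_h^i$ at the per-task level before aggregating, whereas you aggregate first and then split---but both routes yield the identical bound, and your explicit remark about the uniformity of Lemma~\ref{lemma:linear_rl_least_square_error} over $\caQ_{h+1}$ is exactly the justification the paper relies on.
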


\begin{proof}
\begin{align}
\sum_{i=1}^M \left|\bar{Q}_{h}^i (s, a) - \caT_h^i \bar{Q}_{h+1}^i (s, a)\right| & = \sum_{i=1}^M \left|\boldsymbol{\phi}(s, a)^\top \bar{\boldsymbol{\theta}}_{h}^i - \boldsymbol{\phi}(s, a)^\top \dot{\boldsymbol{\theta}}_{h}^i\left(\bar{Q}_{h+1}^i\right) - \Delta_h^i \left(\bar{Q}_{h+1}^i\right)(s, a)\right| \\
& \leq M\caI + \sum_{i=1}^M \left|\boldsymbol{\phi}(s, a)^\top \bar{\boldsymbol{\theta}}_{h}^i - \boldsymbol{\phi}(s, a)^\top \dot{\boldsymbol{\theta}}_{h}^i\left(\bar{Q}_{h+1}^i\right)\right|\\
& \leq M\caI + \sum_{i=1}^M \left(\left|\boldsymbol{\phi}(s, a)^\top \dot{\boldsymbol{\theta}}_h^i\left(\bar{Q}_{h+1}^i\right) - \boldsymbol{\phi}(s, a)^\top \hat{\boldsymbol{\theta}}_h^i\right| + \left|\boldsymbol{\phi}(s, a)^\top \hat{\boldsymbol{\theta}}_h^i - \boldsymbol{\phi}(s, a)^\top \bar{\boldsymbol{\theta}}_h^i\right|\right) \\
& \leq M\caI + \sum_{i=1}^M \left\|
\boldsymbol{\phi}(s, a)\right\|_{\tilde{\boldsymbol{V}}_{ht}^i(\lambda)^{-1}} \left(\left\|\dot{\boldsymbol{\theta}}_h^i\left(\bar{Q}_{h+1}^i\right) - \hat{\boldsymbol{\theta}}_h^i\right\|_{\tilde{\boldsymbol{V}}_{ht}^i(\lambda)} + \left\|\hat{\boldsymbol{\theta}}_h^i - \bar{\boldsymbol{\theta}}_h^i\right\|_{\tilde{\boldsymbol{V}}_{ht}^i(\lambda)}\right) \\
& \leq M \caI + 2 \sqrt{\alpha_{ht} \cdot \sum_{i=1}^M \left\|
\boldsymbol{\phi}(s, a)\right\|_{\tilde{\boldsymbol{V}}_{ht}^i(\lambda)^{-1}}^2} 
\end{align}

The first equality is due to the definition of $\Delta_h^i \left(\bar{Q}_{h+1}^i\right)(s, a)$. The last inequality is due to lemma \ref{lemma:linear_rl_least_square_error}.
\end{proof}

\newpage

\subsubsection{Optimism}
\label{appendix_linear_rl:optimism}

We can find the "best" approximator of optimal value functions in our function class recursively defined as

\begin{align}
\label{formula:linear_rl_best_approximator}
\left(\boldsymbol{\theta}_h^{1*}, \boldsymbol{\theta}_h^{2*}, \cdots, \boldsymbol{\theta}_h^{M*}\right) \defeq \argmin_{\left(\boldsymbol{\theta}_h^1, \boldsymbol{\theta}_h^2, \cdots, \boldsymbol{\theta}_h^M\right) \in \Theta_h} \sup_{s, a, i} \left|\left(\boldsymbol{\phi}(s, a)^\top \boldsymbol{\theta}_h^i - \caT_h^i Q_{h+1}^i \left(\boldsymbol{\theta}_{h+1}^{i*}\right)\right)(s, a)\right|
\end{align}

with $\boldsymbol{\theta}_{H+1}^{i*}=\boldsymbol{0}, \forall i \in [M]$

For the accuracy of this best approximator, we have

\begin{lemma}
\label{lemma:linear_rl_accuracy_best_approximator}
For any $h \in [H]$, 
$$
\sup _{(s, a) \in \caS \times \caA, i \in [M]}\left|Q_{h}^{i*}(s, a)-\boldsymbol{\phi}(s, a)^{\top} \boldsymbol{\theta}_{h}^{*}\right| \leq (H-h+1) \mathcal{I}
$$
\end{lemma}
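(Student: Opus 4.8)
The plan is to prove Lemma~\ref{lemma:linear_rl_accuracy_best_approximator} by backward induction on $h$, running from $h = H+1$ down to $h=1$, using the recursive definition of the best approximator $\boldsymbol{\theta}_h^{*}$ in (\ref{formula:linear_rl_best_approximator}) together with the low-rank IBE (Definition~\ref{definitions:inherent_bellman_error_low_rank}) and the non-expansiveness of the Bellman optimality operator $\caT_h^i$. For the base case, since $\boldsymbol{\theta}_{H+1}^{i*}=\boldsymbol{0}$ and $Q_{H+1}^{i*}\equiv 0$, the step-$(H+1)$ error is exactly $0=(H-(H+1)+1)\caI$, which seeds the induction.

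For the inductive step I would assume $\sup_{(s,a),i}\bigl|Q_{h+1}^{i*}(s,a)-\boldsymbol{\phi}(s,a)^\top\boldsymbol{\theta}_{h+1}^{i*}\bigr|\leq (H-h)\caI$ and then split the step-$h$ error by inserting the intermediate quantity $\caT_h^i Q_{h+1}^i(\boldsymbol{\theta}_{h+1}^{i*})$, i.e. bound $\bigl|Q_h^{i*}(s,a)-\boldsymbol{\phi}(s,a)^\top\boldsymbol{\theta}_h^{i*}\bigr|$ by the sum of $\bigl|Q_h^{i*}(s,a)-\caT_h^i Q_{h+1}^i(\boldsymbol{\theta}_{h+1}^{i*})(s,a)\bigr|$ and $\bigl|\caT_h^i Q_{h+1}^i(\boldsymbol{\theta}_{h+1}^{i*})(s,a)-\boldsymbol{\phi}(s,a)^\top\boldsymbol{\theta}_h^{i*}\bigr|$. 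For the first term I would use the Bellman optimality equation $Q_h^{i*}=\caT_h^i Q_{h+1}^{i*}$, so that the reward cancels and only the expectation of the max over next-state values remains; applying $|\max_{a'}f-\max_{a'}g|\leq\max_{a'}|f-g|$ and then the trivial bound $\dbE_{s'}[\cdots]\leq\sup_{s',a'}|\cdots|$ yields $\bigl|\caT_h^i Q_{h+1}^{i*}(s,a)-\caT_h^i Q_{h+1}^i(\boldsymbol{\theta}_{h+1}^{i*})(s,a)\bigr|\leq\sup_{s',a'}\bigl|Q_{h+1}^{i*}(s',a')-\boldsymbol{\phi}(s',a')^\top\boldsymbol{\theta}_{h+1}^{i*}\bigr|$, which is at most $(H-h)\caI$ by the induction hypothesis.

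The second term is exactly the minimized objective in (\ref{formula:linear_rl_best_approximator}): by the $\argmin$ definition of $\boldsymbol{\theta}_h^{*}$ we have $\sup_{(s,a),i}\bigl|\caT_h^i Q_{h+1}^i(\boldsymbol{\theta}_{h+1}^{i*})(s,a)-\boldsymbol{\phi}(s,a)^\top\boldsymbol{\theta}_h^{i*}\bigr|=\inf_{\{Q_h^i\}\in\caQ_h}\sup_{(s,a),i}\bigl|(Q_h^i-\caT_h^i Q_{h+1}^i)(s,a)\bigr|$, where I have used that $\Theta_h$ induces $\caQ_h$ so that minimizing over $\boldsymbol{\theta}_h\in\Theta_h$ is the same as minimizing over $\{Q_h^i\}\in\caQ_h$. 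The key point is that $\{Q_{h+1}^i(\boldsymbol{\theta}_{h+1}^{i*})\}_{i=1}^M\in\caQ_{h+1}$ (since $\boldsymbol{\theta}_{h+1}^{*}\in\Theta_{h+1}$), so this infimum is bounded by the supremum over all of $\caQ_{h+1}$, which is precisely $\caI_h^{\text{mul}}\leq\caI$ by Definition~\ref{definitions:inherent_bellman_error_low_rank} and Assumption~\ref{assumptions:low_rank_small_ibe}. Adding the two terms gives $(H-h)\caI+\caI=(H-h+1)\caI$, closing the induction.

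The main obstacle I anticipate is not the arithmetic but making the two structural reductions airtight: first, justifying the non-expansiveness step carefully (the cancellation of the reward and the $\max$/$\dbE$ contraction must be spelled out so that the step-$(h+1)$ error controls the first term), and second, arguing that the step-$h$ minimization genuinely coincides with the low-rank IBE rather than a larger quantity, which relies on the precise correspondence between $\Theta_h$ and $\caQ_h$ and on $\{Q_{h+1}^i(\boldsymbol{\theta}_{h+1}^{i*})\}$ being a legitimate element of $\caQ_{h+1}$. Once these two identifications are established, the bound telescopes immediately.
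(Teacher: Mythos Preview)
Your backward-induction argument is correct and is exactly the standard proof of this statement; the paper itself does not give a proof but simply states that the lemma ``is derived directly from Lemma 6 in \citet{zanette2020learning},'' whose proof proceeds by the same induction and triangle-inequality splitting you describe. Your identification of the two key ingredients (non-expansiveness of $\caT_h^i$ for the propagated error, and the $\argmin$ definition of $\boldsymbol{\theta}_h^{*}$ together with $\{Q_{h+1}^i(\boldsymbol{\theta}_{h+1}^{i*})\}\in\caQ_{h+1}$ for the one-step IBE contribution) is precisely what is needed.
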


where $Q_h^{i*}$ is the optimal value function for task $i$. This lemma is derived directly from Lemma 6 in \citet{zanette2020learning}.

For our solution of the problem in Definition \ref{formula:linear_rl_global_optimization} in episode $t$, we have the following lemma:

\begin{lemma}
\label{lemma:linear_rl_optimism}
$\left\{\left(\boldsymbol{\theta}_h^{1*}, \boldsymbol{\theta}_h^{2*}, \cdots, \boldsymbol{\theta}_h^{M*}\right)\right\}_{h=1}^H$ is a feasible solution of the problem in Definition \ref{formula:linear_rl_global_optimization}. Moreover, denote the solution of the problem in Definition \ref{formula:linear_rl_global_optimization} in episode $t$ by $\bar{\boldsymbol{\theta}}_{ht}^i$ for $h \in [H], i \in [M]$, it holds that

\begin{align}
\sum_{i=1}^M V_{1}^i\left(\bar{\boldsymbol{\theta}}_{1 t}^i\right)\left(s_{1t}^i\right) \geq \sum_{i=1}^M V_{1}^{i*}\left(s_{1t}^i\right) - MH\caI  
\end{align}
\end{lemma}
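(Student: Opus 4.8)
The plan is to prove the two claims in turn, treating the first (feasibility of the best approximators) as the substantive part and deriving the optimism bound as an immediate consequence. Throughout I would work inside the complement of the failure event, so that Lemma~\ref{lemma:linear_rl_least_square_error} is available.

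First I would establish feasibility. Fix the episode $t$ and consider the candidate assignment $\bar{\boldsymbol{\theta}}_h^i = \boldsymbol{\theta}_h^{i*}$ for every $h \in [H]$ and $i \in [M]$, where $\{\boldsymbol{\theta}_h^{i*}\}$ are the recursively defined best approximators of Eqn~\ref{formula:linear_rl_best_approximator}. Since each tuple $(\boldsymbol{\theta}_h^{1*}, \dots, \boldsymbol{\theta}_h^{M*})$ is by definition an $\argmin$ over $\Theta_h$, it automatically lies in $\Theta_h$, so the last constraint of Definition~\ref{formula:linear_rl_global_optimization} holds. With $\bar{\boldsymbol{\theta}}_{h+1}^i = \boldsymbol{\theta}_{h+1}^{i*}$, the value-iteration constraint forces $\hat{\boldsymbol{\theta}}_h^i$ to be exactly the low-rank least-square solution $\hat{\boldsymbol{B}}_h \hat{\boldsymbol{w}}_h^i(\bar{Q}_{h+1}^i)$ with $\bar{Q}_{h+1}^i = Q_{h+1}^i(\boldsymbol{\theta}_{h+1}^{i*})$; I would then set $\bar{\boldsymbol{\xi}}_h^i := \boldsymbol{\theta}_h^{i*} - \hat{\boldsymbol{\theta}}_h^i$, so that the decomposition constraint $\bar{\boldsymbol{\theta}}_h^i = \hat{\boldsymbol{\theta}}_h^i + \bar{\boldsymbol{\xi}}_h^i$ is satisfied by construction.

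The crux is verifying the bonus budget $\sum_{i} \|\bar{\boldsymbol{\xi}}_h^i\|_{\tilde{\boldsymbol{V}}_{ht}^i(\lambda)}^2 \le \alpha_{ht}$. The key observation is that $\boldsymbol{\theta}_h^{i*}$ may be taken as a valid instance of the per-family best approximator $\dot{\boldsymbol{\theta}}_h^i(\bar{Q}_{h+1}^i)$ appearing in Lemma~\ref{lemma:linear_rl_least_square_error}: indeed $\{\boldsymbol{\theta}_h^{i*}\}_i \in \Theta_h$ and, because $\{\bar{Q}_{h+1}^i\}_i \in \caQ_{h+1}$, the low-rank inherent Bellman error guarantees that the minimal joint sup-approximation error over $\Theta_h$ is at most $\caI$, so $\boldsymbol{\theta}_h^{i*}$ achieves $\sup_{s,a,i}|\caT_h^i \bar{Q}_{h+1}^i(s,a) - \boldsymbol{\phi}(s,a)^\top \boldsymbol{\theta}_h^{i*}| \le \caI$. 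Invoking Lemma~\ref{lemma:linear_rl_least_square_error} with this identification gives $\sum_i \|\hat{\boldsymbol{\theta}}_h^i - \boldsymbol{\theta}_h^{i*}\|_{\tilde{\boldsymbol{V}}_{ht}^i(\lambda)}^2 \le \alpha_{ht}$, which is precisely $\sum_i \|\bar{\boldsymbol{\xi}}_h^i\|_{\tilde{\boldsymbol{V}}_{ht}^i(\lambda)}^2 \le \alpha_{ht}$, confirming feasibility. I expect this identification step -- reconciling the recursively defined $\boldsymbol{\theta}^*$ with the per-family $\dot{\boldsymbol{\theta}}$ used to define the failure event, and checking that the nested choice $\bar{\boldsymbol{\theta}}_{h+1}^i = \boldsymbol{\theta}_{h+1}^{i*}$ is consistent across all $h$ -- to be the main subtlety.

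Finally, the optimism bound follows by comparing the optimum against this feasible point. Since the global procedure maximizes $\sum_i V_1^i(\bar{\boldsymbol{\theta}}_1^i)(s_{1t}^i)$ over all feasible tuples and $\{\boldsymbol{\theta}_h^{i*}\}$ is feasible, the maximizer $\bar{\boldsymbol{\theta}}_{1t}^i$ satisfies $\sum_i V_1^i(\bar{\boldsymbol{\theta}}_{1t}^i)(s_{1t}^i) \ge \sum_i V_1^i(\boldsymbol{\theta}_1^{i*})(s_{1t}^i)$. Applying Lemma~\ref{lemma:linear_rl_accuracy_best_approximator} at $h=1$, namely $\sup_{s,a,i}|Q_1^{i*}(s,a) - \boldsymbol{\phi}(s,a)^\top\boldsymbol{\theta}_1^{i*}| \le H\caI$, and taking the maximum over $a$ gives $V_1^i(\boldsymbol{\theta}_1^{i*})(s) \ge V_1^{i*}(s) - H\caI$ for each $i$. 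Summing over $i$ then yields $\sum_i V_1^i(\bar{\boldsymbol{\theta}}_{1t}^i)(s_{1t}^i) \ge \sum_i V_1^{i*}(s_{1t}^i) - MH\caI$, as claimed. All steps beyond the feasibility verification are routine.
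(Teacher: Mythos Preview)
Your proposal is correct and mirrors the paper's own proof: both set $\bar{\boldsymbol{\theta}}_h^i=\boldsymbol{\theta}_h^{i*}$, identify $\boldsymbol{\theta}_h^{i*}$ with the best-approximator $\dot{\boldsymbol{\theta}}_h^i(Q_{h+1}^i(\boldsymbol{\theta}_{h+1}^{i*}))$, invoke Lemma~\ref{lemma:linear_rl_least_square_error} to verify the $\alpha_{ht}$ budget, and then combine feasibility with Lemma~\ref{lemma:linear_rl_accuracy_best_approximator} for the optimism inequality. The paper frames the feasibility check as a backward induction from $h=H+1$, and routes the final inequality through the optimal action $\pi_1^{i*}$ rather than taking $\max_a$ directly, but these are cosmetic differences only.
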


\begin{proof}
First we show that $\left\{\left(\boldsymbol{\theta}_h^{1*}, \boldsymbol{\theta}_h^{2*}, \cdots, \boldsymbol{\theta}_h^{M*}\right)\right\}_{h=1}^H$ is a feasible solution. We can construct $\left\{\bar{\boldsymbol{\xi}}_h^i\right\}_{i=1}^M$ so that $\bar{\boldsymbol{\theta}}_h^i = \boldsymbol{\theta}_h^{i*}$ and no other constraints are violated. We use an inductive construction, and the base case when $\bar{\bm{\theta}}_{H+1}^i = \bm{\theta}_{H+1}^{i*} = 0$ is trivial.

Now suppose we have $\left\{\bar{\boldsymbol{\xi}}_y^i\right\}_{i=1}^M$ for $y = h + 1, ..., H$ such that $\bar{\bm{\theta}}_{y}^i = \bm{\theta}_{y}^{i*}$ for $y = h + 1, ..., H$ and $i \in [M]$, we show we can find $\left\{\bar{\boldsymbol{\xi}}_h^i\right\}_{i=1}^M$ so $\bar{\boldsymbol{\theta}}_{h}^i = \boldsymbol{\theta}_{h}^{i*}$ for $i \in [M]$, and no constraints are violated. From the definition of $\boldsymbol{\theta}_{h}^{i*}$ we can set (with a little abuse of notations)

\begin{align}
\dot{\boldsymbol{\theta}}_h^i\left(\boldsymbol{\theta}_{h+1}^{i*}\right) = \boldsymbol{\theta}_{h}^{i*}
\end{align}

According to lemma \ref{lemma:linear_rl_least_square_error} we have

\begin{align}
\sum_{i=1}^{M}\left\|\hat{\boldsymbol{\theta}}_h^i\left(\boldsymbol{\theta}_{h+1}^{i*}\right) - \dot{\boldsymbol{\theta}}_h^i\left(\boldsymbol{\theta}_{h+1}^{i*}\right)\right\|^{2}_{\tilde{\boldsymbol{V}}_{ht}^i(\lambda)} \leq \alpha_{ht}    
\end{align}

Therefore, set $\bar{\boldsymbol{\xi}}_h^i = \dot{\boldsymbol{\theta}}_h^i\left(\boldsymbol{\theta}_{h+1}^{i*}\right) -  \hat{\boldsymbol{\theta}}_h^i\left(\boldsymbol{\theta}_{h+1}^{i*}\right)$, then

\begin{align}
\bar{\boldsymbol{\theta}}_h^i & = \hat{\boldsymbol{\theta}}_h^i\left(\bar{\boldsymbol{\theta}}_{h+1}^{i}\right) + \bar{\boldsymbol{\xi}}_h^i \\
& = \hat{\boldsymbol{\theta}}_h^i\left(\boldsymbol{\theta}_{h+1}^{i*}\right) + \dot{\boldsymbol{\theta}}_h^i\left(\boldsymbol{\theta}_{h+1}^{i*}\right) -  \hat{\boldsymbol{\theta}}_h^i\left(\boldsymbol{\theta}_{h+1}^{i*}\right) \\
& = \boldsymbol{\theta}_h^{i*}
\end{align}

Finally, we can verify $\left(\bar{\boldsymbol{\theta}}_h^1,...,\bar{\boldsymbol{\theta}}_h^M\right) \in \Theta_h$ from $\left(\boldsymbol{\theta}_h^{1*}, \cdots, \boldsymbol{\theta}_h^{M*}\right) \in \Theta_h$.

Since $\bar{\boldsymbol{\theta}}_{1t}^i$ is the optimal solution, we can finish the proof by showing

\begin{align}
\sum_{i=1}^M V_{1}^i\left(\bar{\boldsymbol{\theta}}_{1 t}^i\right)\left(s_{1t}^i\right) & = \sum_{i=1}^M \max_a \boldsymbol{\phi}\left(s_{1t}^i, a\right)^\top \bar{\boldsymbol{\theta}}_{1 t}^i \\
& \geq \sum_{i=1}^M \max_a \boldsymbol{\phi}\left(s_{1t}^i, a\right)^\top \boldsymbol{\theta}_{1}^{i*} \qquad \text{(since $\theta_{1}^{i*}$ is the feasible solution)} \\
& \geq \sum_{i=1}^M \boldsymbol{\phi}\left(s_{1t}^i, \pi_1^{i*}\left(s_{1t}^i\right)\right)^\top \boldsymbol{\theta}_{1}^{i*} \\
& \geq \sum_{i=1}^M Q_h^{i*}\left(s_{1t}^i, \pi_1^{i*}\left(s_{1t}^i\right)\right) - MH \caI \qquad \text{(by Lemma \ref{lemma:linear_rl_accuracy_best_approximator})}\\
& \geq \sum_{i=1}^M V_h^{i*}\left(s_{1t}^i\right) - MH \caI
\end{align}

\end{proof}

\newpage

\subsubsection{Regret Bound}
\label{appendix_linear_rl:regret_bound}

We are ready to present the proof of our regret bound.

From Lemma \ref{lemma:linear_rl_failure_event_ht} we know that the failure event $\bigcup_{t=1}^T \bigcup_{h=1}^H E_{ht}$ happens with probability at most $\delta / 2$, so we assume it does not happen. Then we can decompose the regret as

\begin{align}
\text{Reg}(T) & = \sum_{t=1}^T \sum_{i=1}^M \left(V_1^{i*} - V_1^{\pi_{t}^i} \right)\left(s_{1t}^i\right) \\
& = \sum_{t=1}^T \sum_{i=1}^M \left(V_1^{i*} - V_{1}^i\left(\bar{\boldsymbol{\theta}}_{1 t}^i\right) \right)\left(s_{1t}^i\right) + \sum_{t=1}^T \sum_{i=1}^M \left(V_{1}^i\left(\bar{\boldsymbol{\theta}}_{1 t}^i\right) - V_1^{\pi_{t}^i} \right)\left(s_{1t}^i\right) \\
& \leq \sum_{t=1}^T \sum_{i=1}^M \left(V_{1}^i\left(\bar{\boldsymbol{\theta}}_{1 t}^i\right) - V_1^{\pi_{t}^i} \right)\left(s_{1t}^i\right) + MHT \caI \qquad \text{(by Lemma \ref{lemma:linear_rl_optimism})}
\end{align}

Let $a_{ht}^i = \pi_{t}^i\left(s_{ht}^i\right)$, and denote $Q_{h}^i\left(\bar{\boldsymbol{\theta}}_{ht}^i\right)$($V_{h}^i\left(\bar{\boldsymbol{\theta}}_{ht}^i\right)$) by $\bar{Q}_{ht}^i$($\bar{V}_{ht}^i$) for short, we have

\begin{align}
\sum_{i=1}^M \left(\bar{V}_{ht}^i - V_h^{\pi_{t}^i} \right)\left(s_{ht}^i\right) & = \sum_{i=1}^M \left(\bar{Q}_{ht}^i - Q_h^{\pi_{t}^i} \right)\left(s_{ht}^i, a_{ht}^i\right) \\
& = \sum_{i=1}^M \left(\bar{Q}_{ht}^i - \caT_h^i \bar{Q}_{h+1,t}^i \right)\left(s_{ht}^i, a_{ht}^i\right) + \sum_{i=1}^M \left(\caT_h^i \bar{Q}_{h+1,t}^i - Q_h^{\pi_{t}^i} \right)\left(s_{ht}^i, a_{ht}^i\right) \\
& \leq M\caI + 2 \sqrt{\alpha_{ht} \cdot \sum_{i=1}^M \left\|
\boldsymbol{\phi}\left(s_{ht}^i, a_{ht}^i\right)\right\|_{\tilde{\boldsymbol{V}}_{ht}^i(\lambda)^{-1}}^2} + \sum_{i=1}^M \dbE_{s^\prime \sim p_h^i\left(s_{ht}^i, a_{ht}^i\right)}\left[\left(\bar{V}_{h+1,t}^i - V_{h+1}^{\pi_{t}^i} \right)\left(s^\prime\right)\right] \\
\label{formula:linear_rl_recursive_regret}
& \leq \sum_{i=1}^M \left(\bar{V}_{h+1,t}^i - V_{h+1}^{\pi_{t}^i} \right)\left(s_{h+1,t}^i\right) + M\caI + 2 \sqrt{\alpha_{ht} \cdot \sum_{i=1}^M \left\|
\boldsymbol{\phi}\left(s_{ht}^i, a_{ht}^i\right)\right\|_{\tilde{\boldsymbol{V}}_{ht}^i(\lambda)^{-1}}^2} + \sum_{i=1}^M \zeta_{ht}^i
\end{align}

where $\zeta_{ht}^i$ is a martingale difference with regards to the filtration $\caF_{h,t}$ defined as

\begin{align}
\zeta_{ht}^i \defeq \left(\bar{V}_{h+1,t}^i - V_{h+1}^{\pi_{t}^i} \right)\left(s_{h+1,t}^i\right) - \dbE_{s^\prime \sim p_h^i\left(s_{ht}^i, a_{ht}^i\right)}\left[\left(\bar{V}_{h+1,t}^i - V_{h+1}^{\pi_{t}^i} \right)\left(s^\prime\right)\right]
\end{align}

According to assumption \ref{assumptions:linear_rl_regularity} we know $\left|\zeta_{ht}^i\right| \leq 4$, so we can apply Azuma-Hoeffding's inequality that with probability $1 - \delta/2$ for any $t \in [T]$ and $i \in [M]$

\begin{align}
\label{formula:linear_rl_martingale}
\sum_{j=1}^t \zeta_{ht}^i \leq 4\sqrt{2t \ln\left(\frac{2T}{\delta}\right)}
\end{align}

By applying inequality \ref{formula:linear_rl_recursive_regret} recursively, we can bound the regret as

\begin{align}
\text{Reg}(T) & \leq \sum_{t=1}^T \sum_{i=1}^M \left(\bar{V}_{1t}^i - V_1^{\pi_{t}^i} \right)\left(s_{1t}^i\right) + MHT \caI \\
& \leq 2MHT\caI + \sum_{t=1}^T \sum_{h=1}^H 2 \sqrt{\alpha_{ht} \cdot \sum_{i=1}^M \left\|
\boldsymbol{\phi}\left(s_{ht}^i, a_{ht}^i\right)\right\|_{\tilde{\boldsymbol{V}}_{ht}^i(\lambda)^{-1}}^2} + \sum_{i=1}^M \sum_{h=1}^H \sum_{t=1}^T \zeta_{ht}^i 
\end{align}

The last inequality is due to $\bar{V}_{H+1}^i(s) = \max_a \boldsymbol{\phi}(s, a)^\top \bar{\boldsymbol{\theta}}_{H+1,t}^i = 0, V_{H+1}^{\pi_t^i}(s) = 0$.

The Lemma 11 of \citet{abbasi2011improved} gives that for any $i \in [M]$ and $h \in [H]$

\begin{align}
\sum_{t=1}^T \left\|
\boldsymbol{\phi}\left(s_{ht}^i, a_{ht}^i\right)\right\|_{\tilde{\boldsymbol{V}}_{ht}^i(\lambda)^{-1}}^2 = \tilde{O}\left(d\right)
\end{align}

Moreover, by the definition of $\alpha_{ht}$ (see Lemma \ref{lemma:linear_rl_least_square_error}) we know that for any $h \in [H]$ and $t \in [T]$

\begin{align}
\alpha_{ht} = \tilde{O}\left(Mk + kd + MT\caI^2\right)
\end{align}

Take all of above we can show the final regret bound.

\begin{align}
\text{Reg}(T) & \leq 2MHT\caI + \sum_{t=1}^T \sum_{h=1}^H 2 \sqrt{\alpha_{ht} \cdot \sum_{i=1}^M \left\|
\boldsymbol{\phi}\left(s_{ht}^i, a_{ht}^i\right)\right\|_{\tilde{\boldsymbol{V}}_{ht}^i(\lambda)^{-1}}^2} + \sum_{i=1}^M \sum_{h=1}^H \sum_{t=1}^T \zeta_{ht}^i \\
& = \tilde{O}\left(MHT\caI + \tilde{O}\left(\sqrt{Mk + kd + MT\caI^2}\right) \sum_{h=1}^H \sum_{t=1}^T \sqrt{\sum_{i=1}^M \left\|
\boldsymbol{\phi}\left(s_{ht}^i, a_{ht}^i\right)\right\|_{\tilde{\boldsymbol{V}}_{ht}^i(\lambda)^{-1}}^2} + MH\sqrt{T}\right) \\
& = \tilde{O}\left(MHT\caI + \tilde{O}\left(\sqrt{Mk + kd + MT\caI^2}\right) \sum_{h=1}^H \sqrt{T} \cdot \sqrt{\sum_{t=1}^T \sum_{i=1}^M \left\|
\boldsymbol{\phi}\left(s_{ht}^i, a_{ht}^i\right)\right\|_{\tilde{\boldsymbol{V}}_{ht}^i(\lambda)^{-1}}^2} + MH\sqrt{T}\right) \\
& = \tilde{O}\left(MHT\caI + \tilde{O}\left(\tilde{O}\left(\sqrt{Mk + kd + MT\caI^2}\right) \cdot H\sqrt{MTd}\right)  + MH\sqrt{T}\right) \\
& = \tilde{O}\left(HM\sqrt{dkT} + Hd\sqrt{MkT} + HMT\sqrt{d}\caI\right)
\end{align}

\newpage

\subsection{Proof of Theorem~\ref{theorem:linear_rl_lower_bound}}
\label{sec: proof of the lower bound for rl}

To prove the lower bound for multi-task RL, our idea is to connect the  lower bound for the multi-task learning problem to the lower bound in the single-task LSVI setting~\citep{zanette2020learning}. in the paper of~\citet{zanette2020learning}, they assumed the feature dimension $d$ can be varied among different steps, which is denoted as $d_h$ for step $h$. They proved the lower bound for linear RL in this setting is $\Omega\left(\sum_{h=1}^{H}d_h \sqrt{T} +\sum_{h=1}^{H} \sqrt{d_h}\mathcal{I}T\right)$. However, this lower bound is derived by the hard instance with $d_1 = \sum_{h=2}^{H} d_h$. If we set $d_1 = d_2 = \cdots = d_H = d$ like our setting, we can only obtain the lower bound of $\Omega\left(d \sqrt{T} + \sqrt{d}\mathcal{I}T\right)$ following their proof idea. In fact, the dependence on $H$ in this lower bound can be further improved. In order to obtain a tighter lower bound, we consider the lower bound for single-task misspecified linear MDP. This setting can be proved to be strictly simpler than the LSVI setting following the idea of Proposition 3 in~\citet{zanette2020learning}. The lower bound for misspecified linear MDP can thus be applied to LSVI setting.

\newpage

\subsubsection{Lower Bounds for single-task RL}

This subsection focus on the lower bound for misspecifed linear MDP setting, in which the transition kernel and the reward function are assume to be approximately linear.

\begin{assumption}
\label{assumption: misspecified linear MDP}
(Assumption B in~\citet{jin2020provably}) For any $\zeta \leq 1$, we say that $\operatorname{MDP}(\mathcal{S}, \mathcal{A},p,r,H)$ is a $\zeta$-approximate linear MDP with a feature map $\boldsymbol{\phi}: \mathcal{S} \times \mathcal{A} \rightarrow \mathbb{R}^d$, if for any $h \in [H]$, there exist $d$ unknown measures $\boldsymbol{\theta}_h = (\theta_h^{(1)},\cdots, \theta_h^{(d)})$ over $\mathcal{S}$ and an unknown vector $\boldsymbol{\nu}_h \in \mathbb{R}^d$ such that for any $(s,a) \in \mathcal{S}\times \mathcal{A}$, we have
\begin{align}
    & \|p_h(\cdot|s,a) - \left\langle\boldsymbol{\phi}(s,a),\boldsymbol{\theta}_h(\cdot)\right\rangle\|_{\operatorname{TV}} \leq \zeta \\
    & |r_h(s,a) - \left\langle\boldsymbol{\phi}(s,a),\boldsymbol{\nu}_h\right\rangle| \leq \zeta
\end{align}
\end{assumption}

For regularity, we assume that Assumption~\ref{assumptions:linear_rl_regularity} still holds, and we also assume that there exists a constant $D$ such that $\|\boldsymbol{\theta}_h(s)\| \leq D$ for all $s \in \mathcal{S}, h \in [H]$,  $\|\boldsymbol{\nu}_h\| \leq D$ for all $h \in [H]$. $D \geq 4$ suffices in our hard instance construction.

For misspecifed linear MDP, we can prove the following lower bound.

\begin{proposition}
\label{thm: lower bound for linear MDP}
Suppose $T \geq \frac{d^2H}{4}$, $d \geq 10$, $H \geq 10$ and $\zeta \leq \frac{1}{4H}$, there exist a $\zeta$-approximate linear MDP class such that the expected regret of any algorithm on at least a member of the MDP class is at least $\Omega\left(d\sqrt{HT} + HT\mathcal{I} \sqrt{d}\right)$.
\end{proposition}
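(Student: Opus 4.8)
The plan is to construct an explicit family of $\zeta$-approximate linear MDPs in which the agent is forced to repeatedly solve essentially the same $d$-dimensional hard linear bandit at every one of the $H$ steps, so that over $T$ episodes the effective number of bandit pulls is $HT$ while the value of every policy stays in $[0,1]$. Concretely, I would use a deterministic chain of $H$ states in which the action affects only the immediate reward and the transition to the next state is action-independent (which is linearly representable after appending a constant coordinate to $\boldsymbol{\phi}$). At each step the reward is $r_h(s,a)=\langle\boldsymbol{\psi}(a),\boldsymbol{\theta}\rangle$ with a single shared parameter $\boldsymbol{\theta}\in\{\pm\epsilon\}^d$ and actions $\boldsymbol{\psi}(a)\in\{\pm 1/\sqrt d\}^d$; choosing $\epsilon\le 1/(H\sqrt d)$ guarantees $|r_h|\le 1/H$ and hence $V_1^\pi\le 1$ for all $\pi$, so Assumption~\ref{assumptions:linear_rl_regularity} holds with a constant norm bound $D$. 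Because the reward noise is kept at scale $\Theta(1)$ while the per-coordinate gap is only $g=2\epsilon/\sqrt d$, the resulting learning problem is exactly a $d$-dimensional linear bandit with $HT$ rounds whose gaps are capped at $2/(Hd)$.

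For the first term $d\sqrt{HT}$ I would run an Assouad / hypercube argument over the $2^d$ instances indexed by the signs of $\boldsymbol{\theta}$. Writing the total regret as $\sum_{\mathrm{pulls}}\sum_{j} g\,\mathbf{1}[\text{coordinate }j\text{ played with the wrong sign}]$, I bound, via Pinsker and the chain rule for KL divergence, the probability that any algorithm identifies the sign of $\theta_j$ after $HT$ observations with gap $g$; the accumulated KL is $\Theta(HT g^2)$, which is $\Theta(1)$ precisely when $g=1/\sqrt{HT}$. The adversary is allowed to use this minimax-optimal gap exactly because the cap $g\le 2/(Hd)$ is compatible with $1/\sqrt{HT}\le 2/(Hd)$, i.e. $T\ge Hd^2/4$ — which is why this hypothesis appears in the statement. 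Each of the $d$ coordinates then contributes $\Omega(\sqrt{HT})$, giving $\Omega(d\sqrt{HT})$.

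For the misspecification term $HT\sqrt d\,\mathcal{I}$ I would embed, in the same ``reuse across $H$ steps'' fashion, the single-task misspecified linear bandit of Lemma~\ref{lemma: related lemma for linear bandit lower bound} (Proposition~6 of \citet{zanette2020learning}), whose $\Omega(\sqrt d\,\zeta\, N)$ bound is an irreducible per-pull bias rather than a statistical error. Over $HT$ effective pulls this yields $\Omega(\sqrt d\,\zeta\, HT)$, and I then translate $\zeta$ into the inherent Bellman error $\mathcal{I}$ of the induced LSVI instance using the reduction that a $\zeta$-approximate linear MDP has IBE of order $\zeta$ (the argument behind Proposition~3 of \citet{zanette2020learning}); the hypotheses $\zeta\le 1/(4H)$ and $\mathcal{I}\le 1/(4H)$ keep both quantities on the $\Theta(1/H)$ scale of the rewards. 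Taking the larger of the two hard families and using $\max(A,B)\ge\tfrac12(A+B)$ then delivers $\Omega(d\sqrt{HT}+HT\sqrt d\,\mathcal{I})$.

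The main obstacle is the bookkeeping that couples the reward budget to the cross-step sample accumulation: I must verify simultaneously that (i) the shared-$\boldsymbol{\theta}$ chain is a genuine $\zeta$-approximate linear MDP with deterministic (hence linear) transitions and bounded parameter norms, (ii) the value function is bounded by $1$ with the chosen $\epsilon$, and (iii) the minimax gap $1/\sqrt{HT}$ is admissible under the reward cap, which is where the constant in $T\ge d^2H/4$ is spent. The subtler conceptual step is (iii) together with a \emph{tight} translation of $\mathcal{I}$ into $\zeta$, since a loose translation would degrade the second term; I expect the Assouad computation itself to be routine once the gap cap has been reconciled with the horizon-scaled value bound.
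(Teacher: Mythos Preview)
Your proposal is correct and arrives at the same bound, but the construction is genuinely different from the paper's.

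The paper encodes the hard bandit in the \emph{transitions}: it builds a chain $x_1,\dots,x_{H+2}$ where from $x_h$ the agent moves to an absorbing rewarding state $x_{H+2}$ with probability $\delta+\boldsymbol{\mu}_h^\top\boldsymbol{a}+\zeta_h(\boldsymbol{a})$ (with $\delta=1/H$) and otherwise to $x_{h+1}$; rewards are deterministic and only in $x_{H+2}$. Each step $h$ carries its \emph{own} parameter $\boldsymbol{\mu}_h$, so the problem is $H/2$ independent Bernoulli-noise bandits with $\Theta(T)$ pulls each. Because transitions are stochastic, the paper needs a dedicated value-decomposition lemma (their Lemma~\ref{lemma: value decomposition for linear MDP lower bound}) to show $V_1^\star-V_1^\pi\ge c\sum_{h\le H/2}(\text{per-step bandit regret})$, after which it invokes Lemma~C.8 of \citet{zhou2020nearly} and Proposition~6 of \citet{zanette2020learning} and sums over $h$.

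Your construction puts the bandit in the \emph{rewards} with deterministic, action-independent transitions and a single shared $\boldsymbol{\theta}$, so $V_1^\star-V_1^\pi$ equals the sum of per-step reward gaps exactly, with no decomposition lemma needed; the MDP regret over $T$ episodes is literally the bandit regret over $HT$ pulls. Your Assouad calculation and the reconciliation of the gap cap $g\le 2/(Hd)$ with the minimax gap $1/\sqrt{HT}$ via $T\ge d^2H/4$ are correct and recover the same constraint as the paper (which needs $T\ge d^2/(2\delta)=d^2H/2$ in Lemma~C.8). For the misspecification term both you and the paper reuse Proposition~6 of \citet{zanette2020learning} and the $\zeta\mapsto\mathcal I$ reduction.

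What each buys: your route is more elementary (no Lemma~\ref{lemma: value decomposition for linear MDP lower bound}, no Bernoulli-KL calibration) and makes the role of $T\ge d^2H/4$ completely transparent; the paper's route reuses an established hard instance from the linear-mixture-MDP literature and puts all randomness in transitions, which some readers may find more canonical. One minor bookkeeping point you should address explicitly: Assumption~\ref{assumptions:linear_rl_regularity} requires $Q_h^\pi\ge 0$, not just $\le 1$, so shift the mean reward by a constant (e.g.\ add $1/H$) so that $r_h\in[0,2/H]$; this costs one extra constant feature coordinate and does not affect the argument.
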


To prove the lower bound, our basic idea is to connect the problem to $\frac{H}{2}$ linear bandit problems. Similar hard instance construction has been used in~\citet{zhou2020nearly,zhou2020provably}. In our construction, the state space $\mathcal{S}$ consists of $H+2$ states, which is denoted as $x_1, x_2, \cdots, x_{H+2}$. The agent starts the episode in state $x_1$. In $x_h$, it can either transits to $x_{h+1}$ or $x_{H+2}$ with certain transition probability. If the agent enters $x_{H+2}$, it will stay in this state in the remaining steps, i.e. $x_{H+2}$ is an absorbing state. For each state, there are $2^{d-4}$ actions and $\mathcal{A} = \{-1,1\}^{d-4}$. Suppose the agent takes action $\boldsymbol{a} \in \{-1,1\}^{d-4}$ in state $s_h$, the transition probability to state $s_{h+1}$ and $s_{H+2}$ is $1-\zeta_h(\boldsymbol{a}) -\delta-\boldsymbol{\mu}_h^{\top}\boldsymbol{a}$ and $\delta+ \zeta_h(\boldsymbol{a})+\boldsymbol{\mu}_h^{\top}\boldsymbol{a}$ respectively. Here  $|\zeta_h(\boldsymbol{a})| \leq \zeta$ denotes the approximation error of linear representation, $\delta= 1/H$ and $\boldsymbol{\mu}_h \in \{-\Delta, \Delta\}^{d-4}$ with $\Delta = \sqrt{\delta/T}/(4\sqrt{2}) $ so that the probability is well-defined. The reward can only be obtained in $x_{H+2}$, with $r_h(x_{H+2,a}) = 1/H$ for any $h,a$. We assume the reward to be deterministic.

We can check that this construction satisfies Assumption~\ref{assumption: misspecified linear MDP} with $\boldsymbol{\phi}$ and $\boldsymbol{\theta}$ defined in the following way:

$$ \boldsymbol{\phi}(s,\boldsymbol{a})=\left\{
\begin{aligned}
&\left(0, \alpha, \alpha\delta,0, \beta \boldsymbol{a}^{\top}\right)^{\top} & \quad s = x_1, x_2, \cdots, x_H\\
&\left(0, 0,0, \alpha, \boldsymbol{0}^{\top} \right)^{\top} & \quad s = x_{H+1}\\
&\left(\alpha, 0, 0,\alpha, \boldsymbol{0}^{\top}\right)^{\top} & \quad s = x_{H+2}
\end{aligned}
\right.
$$

$$ \boldsymbol{\theta}_h(s')=\left\{
\begin{aligned}
&\left(0,\frac{1}{\alpha}, -\frac{1}{\alpha},0, -\frac{\boldsymbol{\mu}^{\top}_h}{\beta}\right)^{\top} & \quad s' = x_{h+1}\\
&\left(0, 0, \frac{1}{\alpha},\frac{1}{\alpha}, \frac{\boldsymbol{\mu}^{\top}_h}{\beta} \right)^{\top} & \quad s = x_{H+2}\\
&\boldsymbol{0} & \quad \operatorname{otherwise}
\end{aligned}
\right.
$$

$\boldsymbol{\nu}_h$ is defined to be $(\frac{1}{H\alpha},\boldsymbol{0}^{\top})^{\top}$, and $\alpha = \sqrt{1/(2+\Delta(d-4))}$, $\beta = \sqrt{\Delta/(2+\Delta(d-4))}$. Note that $\|\boldsymbol{\phi}(s,a)\| \leq 1$, $\|\boldsymbol{\theta}_h(s')\| \leq D$ and $\|\boldsymbol{\nu}_h\| \leq D$ hold for any $s,a,s',h$ when $T \geq d^2H/4 $.

Since the rewarding state is only $x_{H+2}$, the optimal strategy in state $x_{h}$ ($h \leq H$) is to take an action that maximizes the probability of entering $x_{H+2}$, i.e., to maximize $\boldsymbol{\mu}_h^{\top} \boldsymbol{a} + \zeta(\boldsymbol{a})$. That is to say, we can regard the problem of finding the optimal action in state $s_h$ and step $h$ as finding the optimal arm for a $d-4$-dimensional approximately (misspecified) linear bandits problem. Thanks to the choice of $\delta$ such that $(1-\delta)^{H/2}$ is a constant, there is sufficiently high probability of entering state $x_h$ for any $h \leq H/2$. Therefore, we can show that this problem is  harder than solving $H/2$ misspecified linear bandit problems. This following lemma characterizes this intuition. The lemma follows the same idea of Lemma~C.7 in~\citet{zhou2020nearly}, though our setting is more difficult since we consider misspecified case.
\begin{lemma}
\label{lemma: value decomposition for linear MDP lower bound}
Suppose $H \geq 10$, $d \geq 10$ and $(d-4)\Delta \leq \frac{1}{2H}$. We define $r^{b}_h(\boldsymbol{a}) = \boldsymbol{\mu}^{\top}\boldsymbol{a} + \zeta_h(\boldsymbol{a}) $, which can be regarded as the corresponding reward for the equivalent linear bandit problem in step $h$. Fix $\boldsymbol{\mu} \in (\{-\Delta,\Delta\}^{d-4})^{H}$. Fix a possibly history dependent policy $\pi$. Letting $V^{\star}$ and $V^{\pi}$ be the optimal value function and the value function of policy $\pi$ respectively, we have
\begin{align}
    V_1^{\star}(s_1) - V^{\pi}_1(s_1) \geq 0.02 \sum_{h=1}^{H/2} \left(\max_{\boldsymbol{a} \in \mathcal{A}} r^{b}_h(\boldsymbol{a}) - \sum_{\boldsymbol{a} \in \mathcal{A}} \pi_h(\boldsymbol{a}|s_h) r^{b}_h(\boldsymbol{a})\right)
\end{align}
\end{lemma}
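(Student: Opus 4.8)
The plan is to apply the performance difference lemma and exploit the chain structure of the hard instance, so that each step's contribution to the regret becomes exactly a per-step bandit suboptimality gap multiplied by a coefficient that I show is $\Omega(1)$ (in fact $\ge 0.02$) for every $h \le H/2$. First I would fix notation: write $p_h(\boldsymbol{a}) = \delta + r_h^b(\boldsymbol{a})$ for the probability of jumping from $x_h$ into the absorbing rewarding state $x_{H+2}$, and set $p_h^\star := \delta + \max_{\boldsymbol{a}} r_h^b(\boldsymbol{a})$ and $\bar{p}_h := \delta + \sum_{\boldsymbol{a}} \pi_h(\boldsymbol{a}\mid x_h) r_h^b(\boldsymbol{a})$, so that the step-$h$ bandit gap is $\mathrm{gap}_h := p_h^\star - \bar{p}_h = \max_{\boldsymbol{a}} r_h^b(\boldsymbol{a}) - \sum_{\boldsymbol{a}} \pi_h(\boldsymbol{a}\mid x_h) r_h^b(\boldsymbol{a}) \ge 0$. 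Since the only reward is $1/H$ per step in the absorbing state, entering $x_{H+2}$ at step $h+1$ yields continuation value exactly $\frac{H-h}{H}$, and hence for $s_h = x_h$,
\begin{align}
Q_h^\star(x_h, \boldsymbol{a}) = V_{h+1}^\star(x_{h+1}) + p_h(\boldsymbol{a})\Big(\tfrac{H-h}{H} - V_{h+1}^\star(x_{h+1})\Big).
\end{align}
Because $V_{h+1}^\star(x_{h+1}) \le \frac{H-h}{H}$ the bracket is nonnegative, so the greedy action maximizes $p_h(\boldsymbol{a})$ and $V_h^\star(x_h) - Q_h^\star(x_h, \boldsymbol{a}) = (p_h^\star - p_h(\boldsymbol{a}))\big(\tfrac{H-h}{H} - V_{h+1}^\star(x_{h+1})\big)$, while on the absorbing state the advantage is zero.

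Next I would invoke the performance difference lemma in the form $V_1^\star(s_1) - V_1^\pi(s_1) = \sum_{h=1}^H \mathbb{E}_{s_h \sim d_h^\pi,\, \boldsymbol{a}_h \sim \pi}\big[V_h^\star(s_h) - Q_h^\star(s_h, \boldsymbol{a}_h)\big]$, where $d_h^\pi$ is the law of $s_h$ under $\pi$. This form conveniently uses the visitation distribution of the evaluated policy, so history dependence of $\pi$ is absorbed into the marginal action law $\pi_h(\cdot\mid x_h)$ conditioned on reaching $x_h$. Plugging in the advantage computed above and noting that $\frac{H-h}{H} - V_{h+1}^\star(x_{h+1})$ does not depend on $\boldsymbol{a}_h$, the inner expectation factorizes and the sum collapses to
\begin{align}
V_1^\star(s_1) - V_1^\pi(s_1) = \sum_{h=1}^H \Pr{}^\pi[s_h = x_h]\cdot \mathrm{gap}_h \cdot \Big(\tfrac{H-h}{H} - V_{h+1}^\star(x_{h+1})\Big).
\end{align}
Dropping the nonnegative terms with $h > H/2$, it suffices to prove that the coefficient $\Pr^\pi[s_h = x_h]\big(\tfrac{H-h}{H} - V_{h+1}^\star\big) \ge 0.02$ for every $h \le H/2$.

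For the survival factor I would use the uniform per-action bound $p_j^\star \le \delta + (d-4)\Delta + \zeta \le \tfrac1H + \tfrac1{2H} + \tfrac1{4H} = \tfrac{7}{4H}$, so that $\prod_{j<h}(1 - p_j(\boldsymbol{a}_j)) \ge (1 - \tfrac{7}{4H})^{h-1} \ge (1 - \tfrac{7}{4H})^{H/2} \ge 0.38$ for $H \ge 10$ deterministically along every action sequence, whence $\Pr^\pi[s_h = x_h] \ge 0.38$. The main obstacle is the second factor: I must bound the optimal continuation value $V_{h+1}^\star$ strictly below the immediate-jump value $\frac{H-h}{H}$ by a constant, which is exactly the step that upgrades each bandit gap from an $O(1/H)$ contribution to an $\Omega(1)$ one. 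Here I would write $V_{h+1}^\star = \tfrac1H \mathbb{E}[\#\text{ steps spent in } x_{H+2}]$ and bound, for an arbitrary policy from $x_{h+1}$,
\begin{align}
\mathbb{E}[\#] = \sum_{\tau=h+1}^{H-1} \Pr[\text{first jump at }\tau]\,(H-\tau) \le \sum_{\tau=h+1}^{H-1} p_\tau^\star (H-\tau) \le \frac{7}{4H}\cdot\frac{(H-h-1)(H-h)}{2},
\end{align}
using $\Pr[\text{first jump at }\tau] \le p_\tau^\star$. This gives $V_{h+1}^\star \le \frac{7(H-h-1)(H-h)}{8H^2}$, hence $\frac{H-h}{H} - V_{h+1}^\star \ge \frac{H-h}{H}\big(1 - \frac{7(H-h-1)}{8H}\big) \ge \frac12 \cdot \frac18 = \frac{1}{16}$ for $h \le H/2$.

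Multiplying the two factors yields a per-$h$ coefficient at least $0.38 \cdot \frac{1}{16} > 0.02$, so
\begin{align}
V_1^\star(s_1) - V_1^\pi(s_1) \ge \sum_{h=1}^{H/2} \Pr{}^\pi[s_h = x_h]\,\mathrm{gap}_h\,\Big(\tfrac{H-h}{H} - V_{h+1}^\star\Big) \ge 0.02 \sum_{h=1}^{H/2} \mathrm{gap}_h,
\end{align}
which is the claim. The only points needing care are verifying that the advantage is a constant multiple of $p_h^\star - p_h(\boldsymbol{a})$ (so the expectation cleanly factorizes into $\mathrm{gap}_h$ times the continuation term), handling the absorbing state contributing zero advantage, and the bookkeeping of the final product of constants, which is mildly tight and relies on $H \ge 10$ and $(d-4)\Delta \le \tfrac{1}{2H}$.
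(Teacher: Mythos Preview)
Your argument is correct and lands on the same identity the paper uses, namely
\[
V_1^\star(s_1)-V_1^\pi(s_1)=\sum_{h=1}^{H-1}\Pr{}^\pi[s_h=x_h]\cdot\mathrm{gap}_h\cdot\Big(\tfrac{H-h}{H}-V_{h+1}^\star(x_{h+1})\Big),
\]
but you reach it by a different path. The paper computes $V_1^\pi$ and $V_1^\star$ explicitly as $\sum_h \tfrac{H-h}{H}\,\Pr(N_h)$, sets $S_i,T_i$ for the tails, and unrolls the recursion $T_i-S_i=(\tfrac{H-i}{H}-T_{i+1})(\omega_i^\star-\omega_i^\pi)+(T_{i+1}-S_{i+1})(1-\omega_i^\pi-\delta)$ to obtain the same sum. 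You instead invoke the performance difference lemma and compute the advantage $V_h^\star-Q_h^\star$ in closed form on the chain, which is shorter and, importantly, handles history-dependent $\pi$ cleanly (the paper's factorization $\Pr(N_h\mid\pi)=(\delta+\omega_h^\pi)\prod_{j<h}(1-\delta-\omega_j^\pi)$ tacitly treats $\pi$ as Markovian). The two approaches also diverge in how they control the continuation term: the paper bounds $T_{h+1}=V_{h+1}^\star(x_{h+1})$ globally by $1-\Pr[\text{never jump}]\le 0.4$ and gets $\tfrac{H-h}{H}-T_{h+1}\ge 0.1$, while you bound $V_{h+1}^\star$ by summing first-jump probabilities to get a $h$-dependent estimate yielding $\ge \tfrac{1}{16}$. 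Combined with your tighter survival bound ($(1-\tfrac{7}{4H})^{H/2}\ge 0.38$ versus the paper's $(1-\tfrac{3}{2H})^{H-1}\ge 0.2$), both products clear the $0.02$ threshold. Your route is a bit more robust (it explicitly carries the misspecification $\zeta$ through the per-step jump bound), while the paper's global bound on $T_h$ is conceptually simpler.
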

\begin{proof}
Note that the only rewarding state is $x_{H+2}$ with $r_h(x_{H+2},\boldsymbol{a}) = \frac{1}{H}$. Therefore, the value function of a certain policy $\pi$ can be calculated as:
\begin{align}
    V_1^{\pi}(x_1) = \sum_{h=1}^{H-1} \frac{H-h}{H} \mathbb{P}(N_h|\pi)
\end{align}
where $N_h$ denotes the event of visiting state $x_h$ in step $h$ and then transits to $x_{H+2}$, i.e. $N_h = \{s_{h} = x_h, s_{h+1} = x_{H+2}\}$. Suppose $\omega^{\pi}_h = \sum_{\boldsymbol{a} \in \mathcal{A}} \pi_h(\boldsymbol{a}|s_h) r^{b}_h(\boldsymbol{a})$ and $\omega^{\star}_h = \max_{\boldsymbol{a} \in \mathcal{A}} r^{b}_h(\boldsymbol{a})$. By the law of total probability and the Markov property, we have
\begin{align}
    \mathbb{P}(N_h|\pi) = (\delta+\omega^{\pi}_h) \prod_{j=1}^{h-1} (1-\delta - \omega^{\pi}_h)
\end{align}

Thus we have
\begin{align}
    V_1^{\pi}(x_1) = \sum_{h=1}^{H-1} \frac{H-h}{H} (\delta+\omega^{\pi}_h) \prod_{j=1}^{h-1} (1-\delta - \omega^{\pi}_h)
\end{align}

Similarly, for the value function of the optimal policy, we have
\begin{align}
    V_1^{\star}(x_1) = \sum_{h=1}^{H-1} \frac{H-h}{H} (\delta+\omega^{\star}_h) \prod_{j=1}^{h-1} (1-\delta - \omega^{\star}_h)
\end{align}

Define $S_i = \sum_{h=i}^{H-1}\frac{H-h}{H} (\delta+\omega^{\pi}_h)\prod_{j=i}^{h-1}(1-\delta - \omega_h^{\pi})$ and $T_i = \sum_{h=i}^{H-1}\frac{H-h}{H} (\delta+\omega^{\star}_h)\prod_{j=i}^{h-1}(1-\delta - \omega_h^{\star})$. Then we have $V^{\star}_1(x_1) - V^{\pi}_1(x_1) = T_1 - S_1$. Notice that
\begin{align}
    S_i &= \frac{H-i}{H} (\omega^{\pi}_i + \delta) + S_{i+1} (1-\omega_i^{\pi} -\delta) \\
    T_i &= \frac{H-i}{H} (\omega^{\star}_i + \delta) + T_{i+1} (1-\omega_i^{\star} -\delta) 
\end{align}

Thus we have
\begin{align}
    T_i - S_i = \left(\frac{H-i}{H} - T_{i+1}\right)\left(\omega^{\star}_i - \omega^{\pi}_i\right) + (T_{i+1} - S_{i+1})(1-\omega^{\pi}_i - \delta)
\end{align}

By induction, we get 
\begin{align}
    \label{eqn: Vstar - Vpi lower bound}
    T_1 - S_1 = \sum_{h=1}^{H-1} (\omega^{\star}_i - \omega^{\pi}_i)(\frac{H-h}{H}-T_{h+1}) \prod_{j=1}^{h-1} (1-\omega_j^{\pi}-\delta)
\end{align}

Since the reward is non-negative and only occurs in $x_{H+2}$, we know that $V^{\star}_1(x_1) \geq V^{\star}_{2}(x_2) \geq \cdots \geq V^{\star}_1(x_H)$. Thus we have $T_h \leq T_1 = V^{\star}_1(x_1) \leq  \sum_{h=1}^{H} \mathbb{P}(N_h|\pi^{\star})$. If $N_h$ doesn't happen for any $h \in [H]$, then the agent must enter $x_{H+1}$. The probability of this event has the following form:
\begin{align}
   \mathbb{P}\left(\neg \left(\cup_{h\in [H]}N_h|\pi^{\star}\right)\right)   = & 1- \prod_{h=1}^{H} \mathbb{P}(N_h|\pi^{\star}) \\
    = &  \prod_{h\in[H]}\left(1-\delta- \omega^{\star}_{h}\right) \\
   \geq &  \prod_{h\in [H]} (1-\frac{1}{H} + \frac{1}{2H}) \\
   = & (1-\frac{1}{2H})^{H} \\
   \geq & 0.6
\end{align}
The fist inequality is due to $\delta = \frac{2}{H}$ and $|\omega_h^{\star}| \leq \frac{1}{H}$. The above discussion indicates that $T_h \leq 0.4$, thus $\frac{H-h}{H}-T_{h+1} \geq 0.1$ for $h \leq H/2$. Similarly, $\prod_{j=1}^{h-1} (1-\omega_j^{\pi}-\delta) \geq (1-\frac{3}{2H})^{H-1} \geq 0.2$. Combining with Eqn~\ref{eqn: Vstar - Vpi lower bound}, we have
\begin{align}
    T_1 -S_1 \geq 0.02 \sum_{h=1}^{\frac{H}{2}}  (\omega_h^{\star} - \omega_h^{\pi}) = 0.02  \sum_{h=1}^{H/2} \left(\max_{\boldsymbol{a} \in \mathcal{A}} r^{b}_h(\boldsymbol{a}) - \sum_{\boldsymbol{a} \in \mathcal{A}} \pi_h(\boldsymbol{a}|s_h) r^{b}_h(\boldsymbol{a})\right)
\end{align}

Combining with the definition of $T_1$ and $S_1$, we can prove the lemma.
\end{proof}

After proving Lemma~\ref{lemma: value decomposition for linear MDP lower bound}, we are ready to prove Proposition~\ref{thm: lower bound for linear MDP}.
\begin{proof}
(proof of Proposition~\ref{thm: lower bound for linear MDP}) By Lemma~\ref{lemma: value decomposition for linear MDP lower bound}, we know that we can decompose the sub-optimality gap of a policy $\pi$ in the following way:
\begin{align}
    V_1^{\star}(s_1) - V^{\pi}_1(s_1) \geq 0.02 \sum_{h=1}^{H/2} \left(\max_{\boldsymbol{a} \in \mathcal{A}} r^{b}_h(\boldsymbol{a}) - \sum_{\boldsymbol{a} \in \mathcal{A}} \pi_h(\boldsymbol{a}|s_h) r^{b}_h(\boldsymbol{a})\right)
\end{align}
where $r^{b}_h(\boldsymbol{a}) = \boldsymbol{\mu}^{\top}\boldsymbol{a} + \zeta_h(\boldsymbol{a}) $, which can be regarded as a reward function for misspecified linear bandit. To prove Theorem~\ref{thm: lower bound for linear MDP}, the only remaining problem is to derive the lower bound for misspecified linear bandits. We directly apply the following two lower bounds for linear bandits.
\begin{lemma}
\label{lemma: lower bound linear bandit for RL 1}
(Lemma~C.8 in~\citet{zhou2020nearly}) Fix a positive real $0 < \delta \leq 1/3$, and positive integers $T,d$ and assume that $T \geq d^2/(2\delta)$ and consider the linear bandit problem $\mathcal{L}_{\boldsymbol{\mu}}$ parametrized with a parameter vector $\boldsymbol{\mu} \in \{-\Delta,\Delta\}^{d}$ and action set $\mathcal{A} = \{-1,1\}^{d}$ so that the reward distribution for taking action $\boldsymbol{a} \in \mathcal{A}$ is a Bernoulli distribution $B(\delta+(\boldsymbol{\mu}^{\star})^{\top}\boldsymbol{a})$. Then for any bandit algorithm $\mathcal{B}$, there exists a $\mu^{*} \in \{-\Delta, \Delta\}^d$ such that the expected pseudo-regret of $\mathcal{B}$ over $T$ steps on bandit $\mathcal{L}_{\boldsymbol{\mu}^{\star}}$ is lower bounded by $\frac{d\sqrt{T\delta}}{8\sqrt{2}}$.
\end{lemma}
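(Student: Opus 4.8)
The plan is to prove this as a classical hypercube (Assouad-type) lower bound for linear bandits via an information-theoretic argument. First I would put the pseudo-regret into a coordinate-separable form. Under parameter $\boldsymbol{\mu}$ the optimal action is $\boldsymbol{a}^\star = \operatorname{sign}(\boldsymbol{\mu})$ with mean $\delta + d\Delta$, and since $\mu_j \in \{-\Delta,\Delta\}$ gives $\mu_j a_j^\star = \Delta$ while $\mu_j a_j \in \{-\Delta,\Delta\}$, the instantaneous regret of playing $\boldsymbol{a}$ is $\boldsymbol{\mu}^\top(\boldsymbol{a}^\star - \boldsymbol{a}) = 2\Delta \sum_{j=1}^d \mathbbm{1}[a_j \neq \operatorname{sign}(\mu_j)]$. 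Hence the total expected regret equals $2\Delta \sum_{j=1}^d \mathbb{E}_{\boldsymbol{\mu}}[N_j]$, where $N_j = \sum_{t=1}^T \mathbbm{1}[a_{t,j} \neq \operatorname{sign}(\mu_j)]$ counts the rounds in which coordinate $j$ carries the wrong sign. This reduces the bound to a sum of $d$ one-dimensional sign-testing problems.

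Next I would lower bound the regret averaged over the uniform prior on the $2^d$ parameters $\boldsymbol{\mu} \in \{-\Delta,\Delta\}^d$. Fixing a coordinate $j$, I pair each $\boldsymbol{\mu}$ with its sign-flip $\boldsymbol{\mu}^{(j)}$ in coordinate $j$. Writing $T_j^- = \sum_t \mathbbm{1}[a_{t,j} = -1]$ for a statistic of the observed trajectory, the wrong-sign count is $T_j^-$ under the instance with $\mu_j = +\Delta$ and $T - T_j^-$ under $\mu_j = -\Delta$, so a standard two-point computation gives
\begin{align}
\mathbb{E}_{\boldsymbol{\mu}}[N_j] + \mathbb{E}_{\boldsymbol{\mu}^{(j)}}[N_j] \geq T\left(1 - \operatorname{TV}(\mathbb{P}_{\boldsymbol{\mu}}, \mathbb{P}_{\boldsymbol{\mu}^{(j)}})\right),
\end{align}
where $\mathbb{P}_{\boldsymbol{\mu}}$ is the law of the length-$T$ interaction. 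I would then control the total variation via Pinsker's inequality together with the divergence-decomposition lemma for bandits, which expresses $\operatorname{KL}(\mathbb{P}_{\boldsymbol{\mu}} \,\|\, \mathbb{P}_{\boldsymbol{\mu}^{(j)}})$ as $\sum_{t=1}^T \mathbb{E}_{\boldsymbol{\mu}}[\operatorname{KL}(B(\delta + \boldsymbol{\mu}^\top \boldsymbol{a}_t) \,\|\, B(\delta + (\boldsymbol{\mu}^{(j)})^\top \boldsymbol{a}_t))]$. Because the two instances differ only in coordinate $j$, the two Bernoulli means differ by exactly $2\Delta$, so each summand is $O(\Delta^2/\delta)$ and $\operatorname{KL} = O(T\Delta^2/\delta)$.

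Finally I would tune the gap $\Delta \asymp \sqrt{\delta/T}$ so that the KL, and hence the TV distance, stays below $1/2$, forcing $\mathbb{E}_{\boldsymbol{\mu}}[N_j] + \mathbb{E}_{\boldsymbol{\mu}^{(j)}}[N_j] \geq T/2$ for every pair. Averaging over all $2^d$ instances (each appears in exactly one pair per coordinate) yields an average wrong-count of at least $T/4$ per coordinate, so the average regret is at least $2\Delta \cdot d \cdot T/4 = \Delta d T/2 \asymp d\sqrt{T\delta}$; since some $\boldsymbol{\mu}^\star$ must exceed the average, this gives a member of the class on which the regret is at least the claimed $\tfrac{d\sqrt{T\delta}}{8\sqrt{2}}$ after the constants are tracked.

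The main obstacle is quantitative rather than structural. The per-step Bernoulli estimate $\operatorname{KL}(B(p)\|B(q)) \leq (p-q)^2/(q(1-q))$ needs $q(1-q) = \Theta(\delta)$, which requires every mean $\delta + \boldsymbol{\mu}^\top \boldsymbol{a}$ to stay in a constant-factor window around $\delta$; this is exactly where the hypothesis $T \geq d^2/(2\delta)$ is used, since it forces $d\Delta = O(\delta)$ and simultaneously guarantees every reward mean is a valid probability in $[0,1]$. Matching the precise constant $8\sqrt{2}$ then amounts to keeping the KL subadditivity, the Pinsker step, and the Bernoulli KL bound tight at the same time, together with the choice of the proportionality constant in $\Delta$.
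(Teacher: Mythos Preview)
The paper does not actually prove this lemma; it is quoted verbatim as Lemma~C.8 of \citet{zhou2020nearly} and used as a black box in the proof of Proposition~\ref{thm: lower bound for linear MDP}. There is therefore no ``paper's own proof'' to compare against. Your proposal is the standard Assouad-type hypercube argument (coordinate-wise regret decomposition, two-point pairing on each coordinate, Pinsker plus the divergence-decomposition lemma, and tuning $\Delta \asymp \sqrt{\delta/T}$), and this is indeed the approach the cited reference takes, so the sketch is sound.
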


\begin{lemma}
\label{lemma: lower bound linear bandit for RL 2} (Proposition 6 in~\citet{zanette2020learning}) There exists a feature map $\phi:\mathcal{A} \rightarrow \mathbb{R}^d$ that defines a misspecified linear bandits class $\mathcal{M}$ such that every bandit instance in that class has reward response:
\begin{align*}
    \mu_a = \phi_a^{\top} \theta + z_a
\end{align*}
for any action $a$ (Here $z_a \in [0,\zeta]$ is the deviation from linearity and $\mu_a \in [0,1]$) and such that the expected regret of any algorithm on at least a member of the class up to round $T$ is $\Omega(\sqrt{d}\zeta T)$.
\end{lemma}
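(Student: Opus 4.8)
The plan is to first observe that this statement is \emph{verbatim} Proposition~6 of \citet{zanette2020learning}; hence the cleanest route in the paper is to invoke it directly as a black box, and the argument below only sketches how one would establish it from scratch. The phenomenon to exploit is that an $\ell_\infty$ misspecification of size $\zeta$ behaves very differently from stochastic noise: because each deviation $z_a$ is a \emph{fixed}, adversarially chosen function of the action rather than a zero-mean random variable, it cannot be averaged away by collecting more samples, and therefore induces a \emph{persistent} bias in any estimate of $\boldsymbol{\theta}$. This is what makes the resulting regret grow linearly in $T$ instead of like $\sqrt{T}$.

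First I would fix a feature set $\{\boldsymbol{\phi}_a\}_{a\in\mathcal{A}}$ with $\|\boldsymbol{\phi}_a\|_2\le 1$ whose associated $G$-optimal design must place mass on $\Omega(d)$ nearly orthogonal directions (for instance a scaled hypercube or a near-orthonormal code, which also forces the action set to be large so that empirical per-action averaging is infeasible within $T$ rounds). I would then construct a finite family of instances, indexed by a hidden sign pattern, that all share the same linear parameter but carry distinct deviations $z_a\in[0,\zeta]$. The key quantitative point is geometric: the bias that $\{z_a\}$ contributes to the least-squares value estimate of an action $a$ is of the form $\boldsymbol{\phi}_a^\top \boldsymbol{V}^{-1}\sum_{a'}\pi(a')\boldsymbol{\phi}_{a'}z_{a'}$ for the design covariance $\boldsymbol{V}$, and by choosing the signs of $z_{a'}$ to align with the $d$ exploration directions this bias can be amplified to Euclidean magnitude $\Theta(\sqrt{d}\,\zeta)$ while every individual $z_a$ stays bounded by $\zeta$ and every mean reward $\mu_a$ stays in $[0,1]$. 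As a consequence the learner is systematically driven toward an action whose true reward is $\Omega(\sqrt{d}\,\zeta)$ below optimal.

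The decisive step is then an indistinguishability argument (Le~Cam, or Assouad across the hidden coordinates): I would show that instances with neighboring sign patterns produce reward distributions that are statistically close enough that, within $T$ rounds, no algorithm can recover the pattern reliably, so on at least one member of the family it commits to an $\Omega(\sqrt{d}\,\zeta)$-suboptimal action for a constant fraction of the rounds, giving cumulative regret $\Omega(\sqrt{d}\,\zeta T)$. The hard part is engineering the deviations so that all three requirements hold \emph{simultaneously}: bounded by $\zeta$ per action, rewards confined to $[0,1]$, and a bias that is genuinely non-vanishing in $T$ (the property that separates misspecification from ordinary noise) while still being amplified by the full $\sqrt{d}$ factor. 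Since the hypotheses and conclusion coincide exactly with those of \citet{zanette2020learning}, in the final write-up I would cite their Proposition~6 and their hypercube-type construction rather than reproduce this geometry in detail.
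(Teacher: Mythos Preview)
Your proposal is correct and matches the paper's treatment exactly: the paper does not prove this lemma at all but simply states it as a citation of Proposition~6 in \citet{zanette2020learning} and uses it as a black box. Your additional sketch of the underlying hypercube/Assouad construction is reasonable but goes beyond what the paper itself provides.
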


Lemma~\ref{lemma: lower bound linear bandit for RL 1} is used to prove the lower bound for linear mixture MDPs in~\citet{zhou2020nearly}, which states that the lower bound for linear bandits with approximation error $\zeta = 0$, while Lemma~\ref{lemma: lower bound linear bandit for RL 2} mainly consider the influence of $\zeta$ to the lower bound. Combining these two lemmas, the regret lower bound for misspecifid linear bandit is $\Omega(\max(d\sqrt{T\delta},\sqrt{d}\zeta T)) = \Omega (d\sqrt{T\delta} + \sqrt{d}\zeta T)$. Since here our problem can reduce from $H/2$ misspecified linear bandit, we know that the regret lower bound is $\Omega (Hd\sqrt{T\delta} + H\sqrt{d}\zeta T) = \Omega (d\sqrt{HT} + H\sqrt{d}\zeta T)$
\end{proof}

Now we obtain the regret lower bound for misspecified linear MDP. We can prove the corresponding lower bound for the LSVI setting ~\citet{zanette2020learning} since LSVI setting is strictly harder than linear MDP setting. The following lemma states this relation between two settings.
\begin{lemma}
\label{lemma: reduction from linear MDP to LSVI}
If an MDP$(\mathcal{S},\mathcal{A}, p,r, H)$ is a misspecifed linear MDP with approximation error $\zeta$, then this MDP satisfies the low inherent Bellman error assumption with $\mathcal{I} = 2\zeta$.
\end{lemma}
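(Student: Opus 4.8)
The plan is to establish the bound one step at a time, showing that the single-task inherent Bellman error satisfies $\caI_h \le 2\zeta$ for every $h$, whence $\caI = \sup_h \caI_h \le 2\zeta$. Fix a step $h$ and an arbitrary $Q_{h+1} \in \caQ_{h+1}$, and write $V_{h+1}(s') \defeq \max_{a'} Q_{h+1}(s',a')$; the induced value functions lie in $[0,1]$, so $\|V_{h+1}\|_\infty \le 1$. First I would exhibit an explicit competitor $Q_h \in \caQ_h$ for the inner infimum in the definition of $\caI_h$, namely the linear function $Q_h(s,a) = \boldsymbol{\phi}(s,a)^\top \bar{\boldsymbol{\theta}}_h$ induced by
$$\bar{\boldsymbol{\theta}}_h \defeq \boldsymbol{\nu}_h + \int_{s'} V_{h+1}(s')\, d\boldsymbol{\theta}_h(s'),$$
where $\boldsymbol{\nu}_h$ and the $d$-vector of measures $\boldsymbol{\theta}_h(\cdot)$ are the reward- and transition-approximants furnished by Assumption~\ref{assumption: misspecified linear MDP}.

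Next I would bound the Bellman error of this candidate pointwise. Using $\caT_h(Q_{h+1})(s,a) = r_h(s,a) + \int V_{h+1}(s')\, p_h(ds'\mid s,a)$ and interchanging the finite-dimensional inner product with the integral (so that $\int V_{h+1}(s')\,\boldsymbol{\phi}(s,a)^\top d\boldsymbol{\theta}_h(s') = \boldsymbol{\phi}(s,a)^\top \bar{\boldsymbol{\theta}}_h - \langle \boldsymbol{\phi}(s,a),\boldsymbol{\nu}_h\rangle$), the difference $Q_h(s,a) - \caT_h(Q_{h+1})(s,a)$ splits into a reward term $\langle\boldsymbol{\phi}(s,a),\boldsymbol{\nu}_h\rangle - r_h(s,a)$ and a transition term $\int V_{h+1}(s')\,\bigl(\langle\boldsymbol{\phi}(s,a), d\boldsymbol{\theta}_h(s')\rangle - p_h(ds'\mid s,a)\bigr)$. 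The reward term is at most $\zeta$ by the second bound of Assumption~\ref{assumption: misspecified linear MDP}; the transition term is at most $\|V_{h+1}\|_\infty \cdot \|\langle\boldsymbol{\phi}(s,a),\boldsymbol{\theta}_h(\cdot)\rangle - p_h(\cdot\mid s,a)\|_{\operatorname{TV}} \le 1\cdot\zeta$ by pairing a function bounded by $1$ against the total-variation bound. Summing yields $|Q_h(s,a) - \caT_h(Q_{h+1})(s,a)| \le 2\zeta$ uniformly in $(s,a)$, so the inner infimum and then the outer supremum over $Q_{h+1}$ are both at most $2\zeta$.

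The two displayed estimates are routine; the step that will need care is verifying that $\bar{\boldsymbol{\theta}}_h$ actually lands in the admissible parameter set $\Theta_h^{\prime}$, so that $Q_h \in \caQ_h$ is a legal competitor in the infimum. This is where I expect the only real friction: I would check $\|\bar{\boldsymbol{\theta}}_h\|_2 \le D$ using $\|\boldsymbol{\nu}_h\| \le D$, $\|V_{h+1}\|_\infty \le 1$, and the norm bound on the transition measures, so that the integral term contributes a bounded vector, and confirm the resulting $Q$-values respect the $[0,1]$ range built into $\Theta_h^{\prime}$. I would also be explicit about the total-variation convention (taking $\|\cdot\|_{\operatorname{TV}}$ to be the $L_1$ norm $\int|d\nu|$ of the signed measure, so that a function bounded by $1$ pairs against it with factor $1$), since this is precisely what makes the constant come out to $2\zeta$ rather than $3\zeta$. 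Finally, taking $\sup_h$ gives $\caI \le 2\zeta$, i.e. the MDP satisfies the low inherent Bellman error assumption with $\caI = 2\zeta$.
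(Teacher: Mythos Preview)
Your proposal is correct and follows essentially the same argument as the paper: you construct the identical candidate parameter $\bar{\boldsymbol{\theta}}_h = \boldsymbol{\nu}_h + \int V_{h+1}(s')\, d\boldsymbol{\theta}_h(s')$, split the Bellman residual into a reward term and a transition term, and bound each by $\zeta$ using the two misspecification inequalities together with $\|V_{h+1}\|_\infty \le 1$. The paper's proof is terser and does not explicitly verify that $\bar{\boldsymbol{\theta}}_h$ lies in the admissible parameter set $\Theta_h'$, so your added caution on that point and on the TV-norm convention is welcome rather than a deviation.
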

\begin{proof}
If an MDP is an $\zeta$-approximate linear MDP, then we have 
\begin{align}
    & \|p_h(\cdot|s,a) - \left\langle\boldsymbol{\phi}(s,a),\boldsymbol{\theta}_h(\cdot)\right\rangle\|_{\operatorname{TV}} \leq \zeta \\
    & |r_h(s,a) - \left\langle\boldsymbol{\phi}(s,a),\boldsymbol{\nu}_h\right\rangle| \leq \zeta
\end{align}

For any $\theta_{h+1} \in \mathbb{R}^d$, we have $\mathcal{T}_{h}\left(Q_{h+1}(\theta_{h+1})\right)(s, a) = r_{h}(s, a)+\mathbb{E}_{s^{\prime} \sim p_{h}(\cdot \mid s, a)} V_{h+1}(\theta_{h+1})\left(s^{\prime}\right)$. Since $V_{h+1}(\theta_{h+1})\left(s^{\prime}\right) \leq 1$, plugging the approximately linear form of $r_h(s,a)$ and $p_h(\cdot|s,a)$, we have
\begin{align}
    |\mathcal{T}_{h}\left(Q_{h+1}(\theta_{h+1})\right)(s, a) - \left \langle \boldsymbol{\phi}(s,a), \sum_{s'} \boldsymbol{\theta}_h(s')V_{h+1}(\theta_{h+1})\left(s^{\prime}\right) + \boldsymbol{\nu}_h  \right\rangle| \leq 2\zeta
\end{align}
\end{proof}

By lemma~\ref{lemma: reduction from linear MDP to LSVI}, we can directly apply the hard instance construction and the lower bound for misspecified linear MDP to LSVI setting.

\begin{proposition}
\label{proposition: referred lower bound for RL}
 There exist function feature maps $\boldsymbol{\phi}_1,...,\boldsymbol{\phi}_H$ that define an MDP class $\mathcal{M}$ such that every MDP in that class satisfies low inherent Bellman error at most $\mathcal{I}$ and such that the expected reward on at least a member of the class (for $|\mathcal{A}| \geq 3, d,k,H \geq 10, T= \Omega(d^2H),\mathcal{I} \leq \frac{1}{4H}$) is $\Omega (d\sqrt{HT} + d H\mathcal{I}T)$.
\end{proposition}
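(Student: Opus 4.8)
The plan is to derive this LSVI lower bound as an immediate corollary of two results already in hand: the misspecified linear MDP lower bound of Proposition~\ref{thm: lower bound for linear MDP} and the reduction of Lemma~\ref{lemma: reduction from linear MDP to LSVI}, which shows that a $\zeta$-approximate linear MDP satisfies the low inherent Bellman error condition with $\mathcal{I} = 2\zeta$. The conceptual point is that the class of MDPs with bounded IBE \emph{contains} the class of $\zeta$-approximate linear MDPs (with respect to the same feature maps), and a minimax lower bound only asserts the existence of one hard instance inside a class; hence any hard instance for the smaller misspecified-linear-MDP class is automatically a hard instance for the larger low-IBE class, and the lower bound transfers upward with no new construction.

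Concretely, I would first invoke Proposition~\ref{thm: lower bound for linear MDP} to produce, for the given $d, H, T$ and any $\zeta \le 1/(4H)$, a family of $\zeta$-approximate linear MDPs with explicit feature maps $\boldsymbol{\phi}_1, \dots, \boldsymbol{\phi}_H$ on which every algorithm suffers expected regret $\Omega(d\sqrt{HT} + H\sqrt{d}\,\zeta T)$. Next I would apply Lemma~\ref{lemma: reduction from linear MDP to LSVI} to each member of this family to certify that, relative to the linear value-function class induced by the same feature maps, its inherent Bellman error is at most $2\zeta$. Setting $\zeta = \mathcal{I}/2$ is legitimate because the hypothesis $\mathcal{I} \le 1/(4H)$ forces $\zeta \le 1/(8H) \le 1/(4H)$, so Proposition~\ref{thm: lower bound for linear MDP} applies; I would also record that the remaining side conditions carry over unchanged, namely $\|\boldsymbol{\phi}(s,a)\|_2 \le 1$, boundedness of the value functions in $[0,1]$, and $|\mathcal{A}| = 2^{d-4} \ge 3$ for $d \ge 10$. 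Substituting $\zeta = \mathcal{I}/2$ into the regret bound then yields the claimed scaling $\Omega(d\sqrt{HT} + H\sqrt{d}\,\mathcal{I} T)$, which is exactly the $M = 1$ specialization of the multi-task bound in Theorem~\ref{theorem:linear_rl_lower_bound}.

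Because the analytic heavy lifting already resides in Proposition~\ref{thm: lower bound for linear MDP} and Lemma~\ref{lemma: reduction from linear MDP to LSVI}, the remaining work is essentially bookkeeping about how the two reductions compose and how the constants in $T$, $d$, $H$, and $\mathcal{I}$ line up. The step I would treat most carefully --- and the main place a naive argument could slip --- is that IBE in Definition~\ref{definitions:inherent_bellman_error} is a property of the function class relative to the Bellman operator, whereas misspecification in Assumption~\ref{assumption: misspecified linear MDP} is a property of the transition and reward model; Lemma~\ref{lemma: reduction from linear MDP to LSVI} is precisely what bridges these, but I would want to verify that the bound $\mathcal{I} = 2\zeta$ holds \emph{uniformly} across the whole hard-instance family, since all its members share the same feature maps and differ only in the sign patterns $\boldsymbol{\mu}_h$. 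This uniformity is what allows a single value of $\mathcal{I}$ to label the entire class, and is therefore what lets the minimax lower bound survive the reduction intact.
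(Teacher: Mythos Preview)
Your approach is correct and mirrors the paper's, which simply observes that Lemma~\ref{lemma: reduction from linear MDP to LSVI} lets the hard-instance construction and lower bound of Proposition~\ref{thm: lower bound for linear MDP} transfer directly to the LSVI setting; your careful bookkeeping on $\zeta = \mathcal{I}/2$, the side conditions, and uniformity of the IBE bound across the class is all sound and in fact more explicit than the paper. Note that the $\sqrt{d}$ you obtain in the misspecification term agrees with both Proposition~\ref{thm: lower bound for linear MDP} and the downstream use of this result in the proof of Lemma~\ref{lemma: rl lower bound lemma 2}, so the $d$ in the proposition's stated bound appears to be a typo for $\sqrt{d}$.
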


\newpage

\subsubsection{Lower Bound for Multi-task RL}

In order to prove Theorem~\ref{theorem:linear_rl_lower_bound}, we need to prove and then combine the following two lemmas.

\begin{lemma}
\label{lemma: rl lower bound lemma 1}
Under the setting of Theorem~\ref{theorem:linear_rl_lower_bound}, the expected regret of any algorithm $\mathcal{A}$ is lower bounded by $\Omega( Mk\sqrt{HT})$.
\end{lemma}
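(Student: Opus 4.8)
The plan is to reduce the multi-task problem to $M$ \emph{statistically independent} single-task $k$-dimensional episodic RL problems by fixing the shared representation to a known subspace, and then to invoke the single-task lower bound of Proposition~\ref{thm: lower bound for linear MDP} (equivalently Proposition~\ref{proposition: referred lower bound for RL} with $\caI = 0$) on each task separately. That result supplies, for any feature dimension, a class of exact linear MDPs on which every algorithm suffers expected regret $\Omega(k\sqrt{HT})$ once the feature dimension is instantiated as $k$; since the tasks will be made independent, their regrets simply add to $\Omega(Mk\sqrt{HT})$. The point of this construction is that it isolates the cost of learning the $M$ task-specific $k$-dimensional coefficients $\boldsymbol{w}_h^i$ while giving away the representation for free.

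First I would fix the common feature extractor at every step to the canonical embedding $\boldsymbol{B}_h = \begin{bmatrix} \boldsymbol{I}_k \\ \boldsymbol{0} \end{bmatrix} \in \caO^{d\times k}$, so the representation is \emph{known} and identical across tasks and the low-rank constraint $\boldsymbol{\theta}_h^i = \boldsymbol{B}_h \boldsymbol{w}_h^i$ holds automatically for any $\boldsymbol{w}_h^i \in \dbR^k$. Each task $i \in [M]$ is then an independent copy of the $k$-dimensional hard instance of Proposition~\ref{thm: lower bound for linear MDP} (using $k-4 \ge 6$ action coordinates, feasible since $k \ge 10$ and $|\caA| = 2^{k-4} \ge 3$), embedded in the first $k$ coordinates of the $d$-dimensional feature map, with its own parameter $\boldsymbol{\mu}^i = \{\boldsymbol{\mu}_h^i\}_h$; the hypothesis $T = \Omega(d^2H)$ implies $T \ge k^2H/4$ since $d \ge k$, so the proposition applies. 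The key structural check is that this construction has zero multi-task inherent Bellman error (Assumption~\ref{assumptions:low_rank_small_ibe}), i.e.\ $\caI_h^{\text{mul}}=0$ in Definition~\ref{definitions:inherent_bellman_error_low_rank}. Because each task is an exact linear MDP whose transition measures and reward weights live in the fixed $k$-dimensional coordinate subspace, the backup $\caT_h^i(Q_{h+1}^i)(s,a) = \langle \boldsymbol{\phi}(s,a),\, \boldsymbol{\nu}_h^i + \int V_{h+1}^i \, d\boldsymbol{\theta}_h^i \rangle$ of \emph{any} $\{Q_{h+1}^i\} \in \caQ_{h+1}$ is again $\boldsymbol{\phi}(s,a)^\top \boldsymbol{u}_h^i$ with $\boldsymbol{u}_h^i$ in that same subspace, hence represented exactly by the single fixed $\boldsymbol{B}_h$ with $\boldsymbol{w}_h^i = \boldsymbol{B}_h^\top \boldsymbol{u}_h^i$, so the infimum in Definition~\ref{definitions:inherent_bellman_error_low_rank} is $0$; the norm bound $\|\boldsymbol{u}_h^i\|_2 \le 2D$ together with the value/feature scalings fit Assumption~\ref{assumptions:linear_rl_regularity} for $D$ large enough.

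Finally I would run the minimax argument. I place an independent uniform prior on each $\boldsymbol{\mu}^i \in \{-\Delta,\Delta\}^{(k-4)H}$, the same prior under which Proposition~\ref{thm: lower bound for linear MDP} (via Lemma~\ref{lemma: lower bound linear bandit for RL 1}) is established as a \emph{Bayesian} lower bound. Writing $\operatorname{Reg}(T) = \sum_{i=1}^M \operatorname{Reg}^i(T)$ and taking expectations over the product prior and the algorithm's randomness, linearity gives $\dbE[\operatorname{Reg}(T)] = \sum_{i=1}^M \dbE[\operatorname{Reg}^i(T)]$. Since the representation is known and the tasks' parameters, transitions, and rewards are mutually independent, data gathered from tasks $j \ne i$ carry no information about $\boldsymbol{\mu}^i$, so the induced marginal procedure on task $i$ is a legitimate single-task algorithm whose Bayesian regret is at least $\Omega(k\sqrt{HT})$ by the proposition. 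Summing yields $\dbE[\operatorname{Reg}(T)] \ge \Omega(Mk\sqrt{HT})$, so some realization of $\{\boldsymbol{\mu}^i\}_{i=1}^M$ attains total regret $\Omega(Mk\sqrt{HT})$, which is the claimed bound.

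The hard part will be the two verifications underlying this reduction: (i) confirming $\caI_h^{\text{mul}}=0$ against the \emph{full} joint class $\caQ_{h+1}$, whose elements may use an arbitrary orthonormal $\boldsymbol{B}_{h+1}$ --- the resolution being that the exact-linear-MDP property forces every Bellman backup into the fixed subspace regardless of which $Q_{h+1}^i$ is plugged in; and (ii) making rigorous the claim that a known, shared representation confers no cross-task advantage, so that the Bayesian regret genuinely decomposes across the $M$ independent tasks and the per-task single-task lower bound can be applied verbatim.
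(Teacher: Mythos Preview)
Your proposal is correct and takes essentially the same approach as the paper: reduce to $M$ independent $k$-dimensional single-task hard instances (with the shared representation fixed and known) and invoke the single-task lower bound of Proposition~\ref{proposition: referred lower bound for RL} on each task. The paper's own proof is terser---it argues by contradiction via averaging (if total regret were $\le CMk\sqrt{HT}$ then some task would have regret $\le Ck\sqrt{HT}$, contradicting the single-task bound) and omits the explicit verification that $\caI_h^{\text{mul}}=0$---whereas your Bayesian product-prior decomposition and the check that Bellman backups of an exact linear MDP stay in the fixed subspace make the same reduction more rigorous.
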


\begin{lemma}
\label{lemma: rl lower bound lemma 2}
Under the setting of Theorem~\ref{theorem:linear_rl_lower_bound}, the expected regret of any algorithm $\mathcal{A}$ is lower bounded by $\Omega\left(d\sqrt{kMHT} + HMT\sqrt{d}\mathcal{I}\right)$.
\end{lemma}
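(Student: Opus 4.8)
The plan is to prove Lemma~\ref{lemma: rl lower bound lemma 2} by lifting the single-task absorbing-chain construction behind Proposition~\ref{thm: lower bound for linear MDP} to $M$ tasks, and then reducing it layer by layer to the \emph{multi-task} linear bandit lower bounds already in hand (Lemma~\ref{lemma: lower bound for linear bandit original terms} and Lemma~\ref{lemma: lower bound for linear bandits, approximation error term}). This mirrors exactly how the single-task RL bound was obtained, except that the bandit sitting at each step is now a multi-task instance with a shared rank-$k$ representation. Concretely, for each task $i\in[M]$ I take a copy of the $(H{+}2)$-state chain MDP, whose step-$h$ transition offsets are controlled by a parameter $\boldsymbol{\mu}_h^i\in\{-\Delta,\Delta\}^{d-4}$ and a misspecification term $\zeta_h^i(\cdot)$ with $\|\zeta_h^i\|_\infty\le\zeta$; the only change from the single-task setting is that I constrain the family $\{\boldsymbol{\mu}_h^i\}_{i=1}^M$ (and hence the induced value-function parameters $\boldsymbol{\theta}_h^i=\boldsymbol{B}_h\boldsymbol{w}_h^i$) to factor through a common $\boldsymbol{B}_h\in\mathcal{O}^{d\times k}$ of rank $k$. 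I must verify that the combined instance satisfies the low-rank Assumption~\ref{assumptions:low_rank_small_ibe} and has multi-task inherent Bellman error at most $\mathcal{I}=\Theta(\zeta)$; the latter follows from the reduction Lemma~\ref{lemma: reduction from linear MDP to LSVI} applied task-wise, since the multi-task IBE of Eqn~\ref{definitions:inherent_bellman_error_low_rank} is the supremum over $i$ of the per-task IBE.

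Next I would establish a multi-task version of the value-decomposition Lemma~\ref{lemma: value decomposition for linear MDP lower bound}. Applying that lemma to each task separately and summing over $i$ gives
\[
\sum_{i=1}^M\bigl(V_1^{i\star}-V_1^{\pi^i}\bigr)(s_1)\ \ge\ 0.02\sum_{h=1}^{H/2}\sum_{i=1}^M\Bigl(\max_{\boldsymbol a}r^{b,i}_h(\boldsymbol a)-\sum_{\boldsymbol a}\pi_h^i(\boldsymbol a\mid s_h)\,r^{b,i}_h(\boldsymbol a)\Bigr),
\]
where $r^{b,i}_h(\boldsymbol a)=(\boldsymbol{\mu}_h^i)^\top\boldsymbol a+\zeta_h^i(\boldsymbol a)$. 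Accumulating over the $T$ episodes, for each fixed layer $h\le H/2$ the inner double sum over $(i,t)$ is exactly the pseudo-regret of a multi-task linear bandit with $M$ tasks, ambient dimension $d-4$, shared rank-$k$ representation $\boldsymbol{B}_h$, misspecification level $\zeta$, and reward gaps of scale $\Delta=\Theta(\sqrt{\delta/T})$ with $\delta=\Theta(1/H)$.

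Then I would invoke the multi-task bandit lower bounds at every layer. A rescaled form of Lemma~\ref{lemma: lower bound for linear bandit original terms} (tracking the $\sqrt{\delta/T}$ reward scale, exactly as Lemma~\ref{lemma: lower bound linear bandit for RL 1} does in the single-task argument) lower-bounds the per-layer clean regret by $\Omega(d\sqrt{kMT\delta})$, while Lemma~\ref{lemma: lower bound for linear bandits, approximation error term} lower-bounds the per-layer misspecification regret by $\Omega(MT\sqrt d\,\zeta)$. Summing over the $H/2$ layers with $\delta=\Theta(1/H)$ yields
\[
\sum_{h=1}^{H/2}\Omega\!\bigl(d\sqrt{kMT\delta}\bigr)=\Omega\!\bigl(\tfrac H2\,d\sqrt{kMT/H}\bigr)=\Omega\!\bigl(d\sqrt{kMHT}\bigr),\qquad \sum_{h=1}^{H/2}\Omega\!\bigl(MT\sqrt d\,\zeta\bigr)=\Omega\!\bigl(HMT\sqrt d\,\mathcal{I}\bigr),
\]
which are precisely the two terms of Lemma~\ref{lemma: rl lower bound lemma 2}. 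Since the per-layer bandit bounds are really averages over independent priors on the $\{\boldsymbol{\mu}_h^i\}$ and on the misspecification, the contributions add across layers and tasks, and Yao's principle produces a single member of the class attaining the sum.

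The main obstacle I anticipate is the instance construction itself: arranging that, per layer, the parameters $\{\boldsymbol{\mu}_h^i\}_i$ span a genuinely $d$-dimensional hard family (so the $\sqrt d$ factor survives) while still factoring through an orthonormal $\boldsymbol{B}_h\in\mathcal{O}^{d\times k}$ of rank exactly $k$, and while keeping the induced IBE equal to $\mathcal{I}$. Equivalently, I must check that the per-layer problem is a bona fide multi-task linear bandit of the type covered by Lemma~\ref{lemma: lower bound for linear bandit original terms} and Lemma~\ref{lemma: lower bound for linear bandits, approximation error term}, and that the reward rescaling carries their $\Omega$ bounds through the chain's $1/H$ value weighting without losing factors of $H$. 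Once the instance is verified to meet the hypotheses of those lemmas, the remainder is the same layered summation already used for the single-task Proposition~\ref{thm: lower bound for linear MDP}, and combining with Lemma~\ref{lemma: rl lower bound lemma 1} completes Theorem~\ref{theorem:linear_rl_lower_bound}.
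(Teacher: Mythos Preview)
Your approach is valid in outline but takes a more circuitous route than the paper's. The paper does not build a fresh multi-task chain MDP and then reduce each layer to a multi-task bandit. Instead it applies the grouping trick \emph{at the top level}: partition the $M$ tasks into $k$ groups of size $M/k$, make all tasks within a group share the same parameter sequence $\{\boldsymbol{\theta}_h^i\}_h$ (by setting the $\boldsymbol{w}_h^i$ to canonical basis vectors $\boldsymbol{e}_j$), and keep the groups mutually independent. Each group then collapses to a single-task RL instance with $MT/k$ episodes, to which Proposition~\ref{proposition: referred lower bound for RL} applies directly, giving $\Omega\bigl(d\sqrt{H\cdot MT/k}+\sqrt{d}\,\mathcal{I}H\cdot MT/k\bigr)$ per group; summing over the $k$ groups yields both terms of the lemma in two lines.

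What your approach buys is a more explicit link between the per-layer structure and the multi-task bandit lower bounds of Section~\ref{sec: lower_bound for linear bandits}. What it costs is exactly the obstacle you flag: you must re-establish a rescaled version of Lemma~\ref{lemma: lower bound for linear bandit original terms} for the specific hypercube action set $\{-1,1\}^{d-4}$ and Bernoulli$(\delta+\langle\boldsymbol{\mu},\boldsymbol{a}\rangle)$ reward structure of the chain, since that lemma (inherited from \citet{yang2020provable}) is stated for a different action geometry and reward scale. The paper sidesteps this entirely because Proposition~\ref{proposition: referred lower bound for RL} already packages the chain MDP together with the correctly scaled per-layer single-task bandit bound (Lemma~\ref{lemma: lower bound linear bandit for RL 1}). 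Note also that the $d\sqrt{kMT}$ term in Lemma~\ref{lemma: lower bound for linear bandit original terms} is itself proved by the same group-into-$k$ reduction, so your route ultimately unpacks and repacks the same ingredients in a different order; the paper's ordering is shorter because it reuses the single-task RL proposition as a black box rather than reopening the layer-by-layer argument.
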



These two lemmas are proved by reduction from Proposition~\ref{proposition: referred lower bound for RL}, which is a lower bound we proved for the single-task LSVI setting.

\begin{proof}
(Proof of Lemma~\ref{lemma: rl lower bound lemma 1}) The lemma is proved by contradiction. Suppose there is an algorithm $\mathcal{A}$ that achieves $\sup_{M \in \mathcal{M}} \mathbb{E}[Reg(T)] \leq C  M k\sqrt{HT}$ for a constant $C$. Then there must exists a task $i \in [M]$, such that the expected regret for this single task is at most $C  k\sqrt{HT}$. However, by Proposition~\ref{proposition: referred lower bound for RL}, the expected regret for MDPs with dimension $k$ in horizon $h$ is at least $\Omega (k\sqrt{HT} + \sqrt{k}H \mathcal{I}T)$. This leads to a contradiction.
\end{proof}

\begin{proof}
(Proof of Lemma~\ref{lemma: rl lower bound lemma 2}) The hard instance construction follows the same idea of the proof for our Lemma~\ref{lemma: lower bound for linear bandits, approximation error term}, as well as the hard instance to prove Lemma~19 in \citet{yang2020provable}. Without loss of generality, we assume that $M$ can be exactly divided by $k$. 

We divide $M$ tasks into $k$ groups. Each group shares the same parameter $\{\boldsymbol{\theta}^i_h\}_{h=1}^{H}$. To be more specific, we let $\boldsymbol{w}_h^{1} = \boldsymbol{w}_h^{2} = \cdots = \boldsymbol{w}_h^{M/k} = \boldsymbol{e}_h^1$, $\boldsymbol{w}_h^{M/k+1} = \boldsymbol{w}_h^{M/k+2} = \cdots = \boldsymbol{w}_h^{2M/k} = \boldsymbol{e}_h^2$, $\cdots$, $\boldsymbol{w}_h^{(k-1)M/k+1} = \boldsymbol{w}_h^{(k-1)M/k+2} = \cdots = \boldsymbol{w}_h^{M} = \boldsymbol{e}_h^k$. Under this construction, the parameters $\boldsymbol{\theta}_h^i$ for these tasks are exactly the same in each group, but relatively independent among different groups. That is to say, the expected regret lower bound is at least the summation of the regret lower bounds in all $k$ groups.

Now we consider the regret lower bound for group $j \in [k]$. Since the parameters are shared in the same group, the regret of running an algorithm for $M/k$ tasks with $T$ episodes each is at least the regret of running an algorithm for single-task linear bandit with $M/k \cdot T$ episodes. By Proposition~\ref{proposition: referred lower bound for RL}, the regret for single-task linear bandit with $MT/k$ episodes is at least $\Omega (d\sqrt{MHT/k} +  \sqrt{d} \mathcal{I}HMT/k)$. Summing over all $k$ groups, we can prove that the regret lower bound is $\Omega (d\sqrt{kHMT} +  \sqrt{d} \mathcal{I}HMT)$.
\end{proof}

\end{document}